\newtheorem{theorem}{Theorem}[section]
\newtheorem{lemma}[theorem]{Lemma}
\newtheorem{proposition}[theorem]{Proposition}
\newtheorem{assumption}[theorem]{Assumption}
\NewDocumentEnvironment{myproof}{o}
  {\IfNoValueTF{#1}{\paragraph{{\textit{Proof.}}}} {\paragraph{{#1.} }} }
  {\hfill$\square$}
\title{Synthetically Controlled Bandits}
\author{Vivek Farias}
\email{vivekf@mit.edu}
\affiliation{
  \institution{Sloan School of Management, Massachusetts Institute of Technology}
  \city{Cambridge}
  \state{Massachusetts}
  \country{USA}
}
\author{Ciamac Moallemi}
\affiliation{
  \institution{Graduate School of Business, Columbia University}
  \city{New York}
  \state{New York}
  \country{USA}
}
\author{Tianyi Peng}
\affiliation{
  \institution{Department of Aeronautics and Astronautics, Massachusetts Institute of Technology}
  \city{Cambridge}
  \state{Massachusetts}
  \country{USA}
}
\author{Andrew Zheng}
\affiliation{
  \institution{Operations Research Center, Massachusetts Institute of Technology}
  \city{Cambridge}
  \state{Massachusetts}
  \country{USA}
}
\begin{abstract}
This paper presents a new dynamic approach to experiment design in settings where, due to interference or other concerns, experimental units are coarse. `Region-split' experiments on online platforms are one example of such a setting. The cost, or regret, of experimentation is a natural concern here. Our new design, dubbed Synthetically Controlled Thompson Sampling (SCTS), minimizes the regret associated with experimentation at no practically meaningful loss to inferential ability. We provide theoretical guarantees characterizing the near-optimal regret of our approach, and the error rates achieved by the corresponding treatment effect estimator. Experiments on synthetic and real world data highlight the merits of our approach relative to both fixed and `switchback' designs common to such experimental settings.
\end{abstract}
\begin{document}

\begin{titlepage}

\maketitle

\end{titlepage}

\section{Introduction}

Experimentation is a crucial tool deployed in the data-driven improvement of modern commerce platforms. On such platforms, it is often the case that a new product feature or algorithmic tweak is broadly rolled out only after its prospective benefit is understood via an appropriately designed experiment. In some cases, an appropriate unit of experimentation is simply an end user. In such cases, experiment design and inference, while not entirely trivial, is relatively well understood. On the other hand, it is often the case that the intervention, whose effect the experiment seeks to characterize, induces interactions among individual users of the platform. Often referred to as `interference', this effectively violates the Stable Unit Treatment Value Assumption (SUTVA) that is typically assumed in most designs, and necessary for correct inference. There is an emergent and exciting literature focused on experiment design and inference in the presence of interference. 

It remains unclear how to robustly characterize the bias induced by interference. As such, a common strategy used to obviate interference concerns in practice, is simply to pick a unit of experimentation that is sufficiently coarse. For instance, a ride hailing platform experimenting with a new payment feature would simply choose the unit of experimentation to be a region or city. 
Since the pool of such coarse experimental units is by definition smaller, picking the appropriate controls is no longer a simple matter. In addition, the counterfactual value of the quantity being measured for any unit is likely to have temporal effects, and may potentially even be non-stationary so that even the natural controls for so-called `switch-back' designs are insufficient. These very challenges arise in program evaluation, a common task in empirical economics. There, the synthetic control methodology is seen as `arguably the most important innovation ... in the last 15 years' \cite{athey2017state}. This approach seeks to construct a `synthetic' control via a linear combination of non-treatment units that best approximates the treatment unit prior to the treatment period. While originally intended primarily for inference given observational data, the synthetic control approach has become a go-to approach for experiment design and inference on platforms in settings where the unit of experimentation is coarse.
The only drawback to this overall scheme is the {\em cost of experimentation}: any `regret' from an undesirable intervention is now borne at the level of a city or region (as opposed to a substantially smaller group of users) over the period of the experiment.  

This paper proposes a new approach to learning in settings such as the `region-split' experiments described above enabled by a novel device: the {\em synthetically controlled bandit}. This approach (a) yields near-optimal sub-linear regret {\em and} (b) recovers the treatment effect at the same rate as a traditional experiment with synthetic controls on the event that the treatment effect is positive. On the event that the treatment effect is negative, the approach simply learns that this is the case but does not recover a precise estimate of the treatment effect. As such, the synthetically controlled bandit largely eliminates the cost of experimentation at the expense of being able to learn the treatment effect precisely only on the event where this treatment effect is positive. Since in practical applications, a precise estimate of the treatment effect is only of value when this treatment effect is positive (so as to facilitate, for instance, cost-benefit analyses for a roll-out of the intervention), this new approach makes possible an attractive tradeoff.

\subsection{The Synthetically Controlled Bandit}

{\em Minimizing the Cost of Exploration: }Consider a setting where the decision whether or not to treat the treated unit (city, region, etc.) in any given epoch is a dynamic one. Over some experimentation horizon, a natural goal aligned with minimizing the cost of experimentation, would be to minimize {\em regret}. That is, over the experimentation horizon, we effectively minimize the expected number of times a sub-optimal treatment option was chosen for the treated unit. It turns out that in the synthetic control setting, this problem is equivalent to a linear contextual bandit, wherein the context at each period is a low-dimensional latent vector (the so-called `unobserved common factors' in the corresponding synthetic control model). This bandit problem has several salient features worth noting:
\begin{enumerate}
\item Since observations across all units are made contemporaneously, the context vector is unavailable at the time of decision making. 
\item We never observe historical context vectors directly; instead we only ever observe an unknown, noisy, linear transformation of these vectors. 
\item This unknown linear transformation can never be recovered exactly even with infinite data: instead, we can only hope to recover it up to a rotation. 
\end{enumerate}
Put succinctly, the underlying contextual bandit is one where the context is not available at the
time of decision making, and can, post-facto, only be recovered up-to some unknown rotation and
noise. Our primary technical contribution is an algorithm that, despite the challenges above,
achieves a regret that scales like $r \sqrt{T}$. Here $r$ is the dimension of the latent context
and $T$ is the experimentation horizon. We dub our approach {\em Synthetically Controlled Thompson Sampling} (SCTS).
SCTS consists of a Thompson sampling routine with
carefully designed `exploration noise'.  Contexts are recovered via principal components analysis
(PCA) on historical observations. Importantly, our sampler is robust to the errors in context
recovery due to noise and the inability to recover rotations. The careful design of our sampler
also allows for a linear dependence on the latent context dimension $r$ as opposed to the
$r^{3/2}$ achieved by state-of-the-art Thompson sampling approaches; a result of independent
interest.
\newline
\newline
{\em Inference: }The vanilla synthetic control estimator is not asymptotically normal, and as such, inference is either via high-probability confidence intervals, or else general purpose non-parametric approaches (such as the bootstrap or permutation tests). We propose a treatment effect estimator and establish high probability confidence intervals for this estimator that (a) on the event that the true treatment effect is positive line up precisely with the vanilla synthetic control confidence intervals and (b) on the event that the true treatment effect is negative will simply be the negative half-line. In other words, we provide the same quality of inference as in a synthetic control experiment on the event the the treatment effect is positive, but when the effect is negative, are only able to detect that this is the case. Since quantification of the treatment effect is typically only relevant when the effect is positive (so as to facilitate, for instance, cost-benefit analyses for a roll-out of the intervention), it is unclear that the loss in inferential capabilities relative to synthetic control has practical consequences. With an eye to practice, we propose the use of re-randomization based hypothesis tests and confidence intervals derived from inverting these tests. We see, on real world data, that these tests are highly powered even for small treatment effects and that the confidence intervals derived from them provide near-ideal coverage.   
\newline
\newline
{\em Computational Experience: }We present experimental work on both a synthetic data setup (wherein the synthetic control model holds by construction), as well as on real world data, where this setup is, at best, an approximation. In both cases, we see that SCTS employs a sub-optimal intervention for a negligible fraction (typically a single digit percentage) of epochs over the experimentation horizon. Compared with both a fixed and switchback design, SCTS thus materially reduces the cost of exploration. Despite this, we see that our treatment effect estimator correctly identifies whether or not the treatment effect is positive in every single one of our instances. Importantly, on the instances where the treatment effect is positive, the relative RMSE of our estimator is comparable to state-of-the-art estimators for both the switchback and fixed designs, and in fact outperforms these incumbents in the real-data setting. Finally, as discussed above, re-randomization based hypothesis tests and confidence intervals provide near-ideal coverage, even on real-world data, allowing for effective inference.

\subsection{Related Literature}
\label{sec:literature}

{\em Synthetic Control and Inference: }The notion of synthetic control was introduced initially in the context of program evaluation: \cite{abadie2003economic,abadie2010synthetic} are seminal papers that propose to recover the counterfactual in an observational setting by creating a ``synthetic control''. Specifically, they proposed constructing a convex combination of control units that matches the treated unit in pre-treatment periods. A series of follow-on studies proposed distinct estimators by employing different constraints and regularizers (e.g, \cite{hsiao2012panel,doudchenko2016balancing,li2017estimation,arkhangelsky2019synthetic, ben2021augmented}). See \cite{abadie2019using} for a review of this vibrant literature. Since the underlying generative model justifying the synthetic control framework is in fact a factor model, it is natural to consider using PCA-like techniques in the recovery of a synthetic control; the present work leverages such techniques in a dynamic context. This approach is especially relevant in the setting where the size of the `donor pool' is large. \cite{athey2021matrix, xu2017generalized, amjad2018robust, bai2019matrix, amjad2019mrsc,agarwal2021robustness,farias2021learning} are all papers in this vein. \cite{farias2021learning} in particular compute a min-max optimal estimator for a generalization of the synthetic control problem. It is indeed possible to compute limiting distributions for synthetic control estimators in various special cases. For instance, if one were willing to make probabilistic assumptions on the data, it is possible to compute a limiting distribution for the synthetic control estimator (roughly, this distribution is a projection of the OLS limiting distribution to a convex set); \cite{li2020statistical}. Several of the references above also compute limiting distributions in special cases of the problem. In practice, however, inference for average treatment effects in synthetic control is done via non-parametric methods such as permutation tests;  see \cite{chernozhukov2021exact}.

\noindent {\em Synthetic Control in Experiment Design for Commerce: }Not surprisingly, synthetic control approaches have gained traction in modern commerce settings; as a relatively early example \cite{brodersen2015inferring} describes an approach and corresponding software used by Google in the context of marketing attribution. Going further, however, synthetic control has come to be viewed as an important tool in experiment design as well, as opposed to simply in observational settings. For instance, \cite{Chen2020Lyft,uberSynthetic2019} describe practical designs, at Lyft and Uber respectively, that assume a synthetic control model holds across units. In a theoretical direction, \cite{doudchenko2019designing} and \cite{abadie2021synthetic} consider the problem of how best to select an experimental unit assuming the synthetic control model holds, motivated by problems at Facebook and the sorts of `region-split' experiments common to ride-sharing platforms respectively. Like this work, the present paper also actively uses the synthetic control model in experiment design; in our case these designs are `dynamic'.

\noindent {\em Contextual Bandits and Inference: }The dynamic design that this paper constructs is, in a certain idealized sense, a linear contextual bandit. This sort of bandit is classical, studied at least as early as \cite{auerUsingConfidenceBounds2002}. The practical constraints around our design necessitate a sampling methodology that draws from recent work on Thomson sampling for such bandits; see \cite{agrawalThompsonSamplingContextual2013,abeilleLinearThompsonSampling2017,kvetonMetaThompsonSampling2021}. As noted elsewhere, the fundamental challenge we must address is that we never observe contexts directly. There is some limited work discussing the use of dynamic bandit based designs in clinical trials, \cite{villarMultiarmedBanditModels2015, berry2012adaptive}. 

Turning to inference, it is well known that naive sample estimates of arm means in bandits are biased (see e.g. \cite{villarMultiarmedBanditModels2015,nieWhyAdaptivelyCollected2018}). A recent line of work considers `post-contextual bandit' inference, where certain importance-weighted estimators are shown to be unbiased and asymptotically normal; see \cite{hadadConfidenceIntervalsPolicy2021, bibautPostContextualBanditInference2021} and also \cite{deshpandeAccurateInferenceAdaptive2018} for a different approach to the problem under a linear reward model. Extending these to our setting is an exciting direction for future work. In addition to the observability of the contexts, such an extension will also need to address the general problem that this line of work requires a type of forced exploration of arms which may not be consistent with our bandit algorithm. The present paper simply constructs high probability confidence intervals via the usual self-normalized martingale concentration bounds. While loose in practice, they already illustrate that we can expect rates that are essentially on par with what is possible for the vanilla synthetic control estimator. In our experimental work, we complement these with bootstrapped confidence intervals and permutation tests for significance that we show work adequately.


\section{Model}
\label{sec:model}

We measure some quantity of interest for an experimental unit (e.g., a region, in a region-split
experiment) over a pre-treatment period of length $T_0$, and a subsequent treatment period of
length $T$. We denote the measurement made on this experimental unit at any epoch $t$ by
$y^{0}_t \in \mathbb{R}$.  We assume that the pre-treatment period consists of
epochs in $\{-T_0+1,\dots,0\}$ and that the treatment period consists of epochs in
$\{1,2,\dots,T\}$. We denote by $a_t \in \{0,1\}$ the indicator of whether or not the experimental
unit is treated at time $t$, so that $a_t = 0$ for all epochs in the pre-treatment period. We
assume that $y^0_t$ is determined by the structural equation
\begin{equation}
  \label{eq:reward-model}
  y^0_t = \tau^* a_t + \langle{\lambda^*},{\bar z_t }\rangle + \epsilon^{0}_t,
%
\end{equation}
where $\tau^* \in \mathbb{R}$ is an unknown treatment effect, $\bar z_t \in \mathbb{R}^r$ is an (unknown) set of $r$ `shared common factors' and $\lambda^* \in \mathbb{R}^r$ is a set of (unknown) `factor loadings' specific to the experimental unit. The noise $\epsilon^{0}_t$ is assumed to be independent Gaussian with mean zero, and standard deviation $\sigma$. The synthetic control paradigm assumes a generative setting where a weighted combination of observations in the pool of donor units closely approximates counterfactual observations on the experimental unit. Specifically we assume a pool of $n \geq r$ donor units, where for the $i$th such unit
\begin{equation}
  \label{eq:donor-model}
y^i_t = \langle{\lambda^{i}}, \bar z_t \rangle + \epsilon^{i}_t.
\end{equation}
Here $\bar z_t$ is the same set of shared common factors and $\lambda^{i} \in \mathbb{R}^r$ is a set of factor loadings specific to the $i$th donor unit. As before, $\epsilon^{i}_t \sim \mathcal{N}(0,\sigma^2)$.





\subsection{Dynamic Design and Estimator}

The typical synthetic control design is simply to set $a_t = 1$ in the treatment period. Now define the cost of experimentation incurred at time $t$ by the `regret' incurred in that epoch,
\[
R_t \triangleq |\tau^*| \cdot \bigl(\mathbf{1}\left\{\tau^* < 0 \right\}a_t + \mathbf{1}\left\{\tau^* \geq 0 \right\}(1-a_t) \bigr).
\]
This definition captures both the negative impact of a sub-optimal treatment, and the opportunity cost of not using the treatment should it be optimal. The total cost incurred by the typical synthetic control design may then scale linearly with the length of the treatment period, $T$.

In the interest of minimizing the total cost of experimentation, we allow $a_t$ to be
dynamic. Specifically, require $a_t$ to be selected according to a randomized policy that is adapted to
$\mathcal{F}_t \triangleq
\sigma(
(a_s, y^0_s, y^1_s, \dots,y^n_s), s < t
)
$, the filtration generated by all treatment decisions taken, and observations made in the
treatment and donor units, up to time $t-1$.
The total expected cost of experimentation, $R(T)$,
or total expected regret incurred over the course of the experiment is then simply $\mathbb{E}
\left[ \sum_{t=1}^T R_t \right],$ where the expectation is over the noise in observations and randomization in the design. Finally an estimator of the treatment effect, $\hat \tau$, is simply an $\mathcal{F}_{T+1}$ measurable random variable that ideally provides a good approximation to $\tau^*$.
With this setup, we are now able to state the problems we wish to address:

\begin{itemize}
\item First, we would like to produce a dynamic design, i.e., a process $\{a_t\}$, that minimizes the total cost of experimentation $R(T)$.
\item Second, on the inferential side, we would like to design an estimator $\hat \tau$ for which
  $|\hat \tau - \tau^*|$ is `small' with high probability, particularly when $\tau^* > 0$, since
  quantification of the treatment effect is most important in this case.
\end{itemize}
In what follows, we describe our main results, making precise the trade-off we achieve between controlling $R(T)$ and the quality of our estimator.

\subsection{Results}

We propose a dynamic design, we dub {\em Synthetically Controlled Thompson Sampling} (SCTS), which we show achieves near-optimal experimentation cost:
\begin{theorem}[Informal]: \label{thm:informal-main} Assume that the number of donor units
  $n = \Omega(T)$. Then, under mild assumptions on the shared common factors, we have that SCTS
  incurs a cost of experimentation,
  $R(T) = O\left(r \sqrt{T} \log(T)\right)$.
\end{theorem}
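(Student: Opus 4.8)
The plan is to recast the design problem as a linear contextual bandit whose optimal action happens to be context-free, and then give a Thompson-sampling regret analysis that is robust to the two idiosyncrasies of the setting: contexts are unobserved at decision time, and are only ever recovered up to an unknown rotation and noise. First I would rewrite the reward model \eqref{eq:reward-model} as $y^0_t = \langle\theta^*,\phi(\bar z_t,a_t)\rangle + \epsilon^0_t$ with $\theta^* = ((\lambda^*)^\top,\tau^*)^\top \in \mathbb{R}^{r+1}$ and $\phi(z,a) = (z^\top,a)^\top$, and verify that $R_t = \langle\theta^*,\phi(\bar z_t,a^*)\rangle - \langle\theta^*,\phi(\bar z_t,a_t)\rangle$ where the benchmark action $a^* = \mathbf{1}\{\tau^*\ge 0\}$ does not depend on $\bar z_t$. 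Consequently, although $\bar z_t$ is unavailable when $a_t$ is chosen, this is no obstacle to competing with $a^*$: it suffices for the sampler to maintain a belief about $\mathrm{sign}(\tau^*)$. What makes the problem nontrivial is that identifying $\tau^*$ requires disentangling it from the large confound $\langle\lambda^*,\bar z_t\rangle$, and this is precisely where the latent contexts, recovered post hoc, re-enter.

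Next I would control factor recovery. The $\mathcal{F}_t$-measurable donor observations through epoch $t-1$ form an approximately rank-$r$ matrix corrupted by $\sigma$-Gaussian noise whose row space is spanned by $\{\bar z_s\}_{s\le t-1}$. Applying a standard low-rank perturbation bound — Davis--Kahan/Wedin for the principal subspace and an $\ell_{2,\infty}$-type bound for the individual rows — together with the mild regularity assumption that the stacked factor matrix is well conditioned (singular values of order $\sqrt{T_0+t}$), I would obtain that with high probability there is a single orthogonal $R$ with $\|\hat z_s - R\bar z_s\| \le \bar\delta_t$ for all $s\le t-1$ simultaneously, where $\bar\delta_t = \tilde O\bigl(\sigma(n^{-1/2} + (T_0+t)^{-1/2})\bigr)$. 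The hypothesis $n = \Omega(T)$ makes $\bar\delta_t = \tilde O(\sigma/\sqrt{T_0+t})$, no larger than the sampling noise already present; this is exactly what lets the errors-in-variables incurred by regressing $y^0_s$ on $\hat z_s$ rather than on $R\bar z_s$ be absorbed into an inflated confidence radius rather than leaving an irreducible bias in the estimate of $\tau^*$.

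The core of the argument is then a Thompson-sampling regret bound on a good event $\mathcal{E}$ on which the factor-recovery bound above holds and a self-normalized martingale inequality places $\theta^*$ (equivalently, its rotated version $(R\lambda^*,\tau^*)$) inside the time-$t$ confidence ellipsoid of radius $\beta_t = \tilde O(\sigma\sqrt r)$ around the ridge estimate built from $\{(\phi(\hat z_s,a_s),y^0_s)\}_{s<t}$; a union bound gives $\mathbb{P}(\mathcal{E}^c)\le 1/T$, and on $\mathcal{E}^c$ the regret is trivially at most $|\tau^*|T$, a negligible contribution. On $\mathcal{E}$ I would use the usual decomposition of $R_t$ into an estimation term and a sampling term, each proportional to $\|\phi(\hat z_t,a^*) - \phi(\hat z_t,a_t)\|_{V_t^{-1}}$, and calibrate the exploration noise so that sampling the correct sign of $\tau^*$ has at least constant conditional probability (the anti-concentration/optimism step). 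Because the decision is binary, the relevant feature increment lies in the single $\tau$-coordinate, so the exploration noise can be a scalar inflation rather than an inflation of the full $(r+1)$-dimensional ellipsoid; this is what keeps the bound linear in $r$ instead of the $r^{3/2}$ of vanilla linear Thompson sampling. Summing over $t$ — via the elliptical-potential lemma, or more transparently by noting that the $\tau$-marginal posterior has standard deviation $\tilde O(\sigma\sqrt r/\sqrt t)$, so a sign mistake at epoch $t$ occurs with probability at most $\exp(-c\,t\tau^{*2}/(\sigma^2 r))$ once $t \ge c'\sigma^2 r\log t/\tau^{*2}$ and at most a constant before, then balancing $|\tau^*|(\sigma^2 r\log T/\tau^{*2})$ against $|\tau^*|T$ over the worst-case $\tau^*$ — yields $R(T) = O(r\sqrt T\log T)$, plus lower-order burn-in terms covering the initial epochs during which PCA is not yet informative.

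The main obstacle is the interaction of the last two steps: the posterior actually maintained is the posterior given the \emph{estimated} contexts, hence mis-specified, and the exploration noise must be simultaneously (i) large enough that, despite this mis-specification and the rotation ambiguity, the event of sampling the correct sign of $\tau^*$ retains constant probability, and (ii) small enough, and concentrated on the $\tau$-direction, that the induced sampling regret stays $\tilde O(r\sqrt T)$ rather than $\tilde O(r^{3/2}\sqrt T)$. Reconciling these requirements uniformly in the unknown $\tau^*$ and over all $T$ epochs — while the recovery error feeds back into the design through the very data the design generates — is the delicate part; everything else is the reduction of the first paragraph together with off-the-shelf low-rank matrix perturbation and self-normalized martingale concentration.
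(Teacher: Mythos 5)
Your overall architecture matches the paper's: a high-probability ``clean'' event combining PCA recovery of the latent contexts up to rotation with a self-normalized, errors-in-variables confidence ellipsoid; exploration noise injected only in the $\tau$-coordinate, which is exactly the paper's mechanism for the linear (rather than $r^{3/2}$) dependence on $r$, since the feature increment between the chosen and optimal action is $e_1$ and $\hat\sigma_t=\|e_1\|_{\Omega_t^{-1}}$; a constant lower bound on the probability of selecting the optimal action; and a final summation via the elliptical potential lemma, with $n=\Omega(T)$ keeping the recovery error at noise level. The concrete gap is at the summation step. The Gram matrix your per-step bound lives in is built from the estimated contexts $\hat z_{s,t}$, and these are re-estimated --- and re-aligned by a \emph{different} rotation --- at every period, so the design is inconsistent: $\Omega_t \neq \Omega_{t-1} + \hat x_t \hat x_t^\top$, and the elliptical potential lemma does not apply to that sequence. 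The paper resolves precisely this with two ingredients your proposal does not supply: (i) a rotation-invariance argument showing that $\hat\tau_t$ and $\hat\sigma_t$ (hence the law of the actions and the regret) are unchanged under any rotation of $\hat Z_t$, so a best-aligning rotation can be fixed at each $t$ without loss (used in the proof of Proposition~\ref{prop:clean-event}), and (ii) the norm-equivalence Lemma~\ref{le:norm_mismatch}, $\|x\|_{\Omega_t^{-1}} \le (1+\alpha)\|x\|_{\bar\Omega_t^{-1}}$ on the clean event, which transfers the bound to the Gram matrix $\bar\Omega_t$ of the \emph{true} contexts --- a sequence that does accumulate consistently and to which the potential lemma legitimately applies. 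You flag this feedback loop as ``the delicate part,'' but as written the argument stalls exactly there.

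Two secondary issues. Your ``more transparent'' alternative to the potential-lemma summation --- that the $\tau$-marginal standard deviation is $\tilde O(\sigma\sqrt r/\sqrt t)$, so sign mistakes decay like $\exp(-c\,t\tau^{*2}/(\sigma^2 r))$ --- is not valid under adaptive sampling: once the sampler stops selecting the treatment, $(\Omega_t^{-1})_{1,1}$ stops shrinking and $\hat\sigma_t$ need not scale as $1/\sqrt t$; the indirect route through the constant-probability-of-optimal-action bound and the elliptical potential is what circumvents this, so it should not be presented as an interchangeable shortcut. Second, your row-wise ($\ell_{2,\infty}$) recovery bound with error decaying in $t$ imports a conditioning/incoherence assumption on the factor matrix that the paper deliberately avoids: the paper needs only the operator-norm bound $\inf_{\Phi}\|\bar Z_t-\hat Z_t\Phi\|\le\alpha$ with $\alpha=O(\sigma)$ \emph{constant} (via a refined Davis--Kahan-type bound with no dependence on the spectrum of $\bar Y$), and that constant-order error already suffices for the errors-in-variables term to be absorbed into $\beta_t$; the decay you posit is unnecessary, and obtaining it ``off the shelf'' under mere well-conditioning is optimistic.
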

In a nutshell, SCTS essentially eliminates the cost of experimentation, which as we noted earlier
will scale linearly with $T$ in the traditional fixed synthetic control design. It is also worth
placing the precise regret guarantee in context. To that end, note that we have no information
pertaining to $\bar z_t$ (which is essentially arbitrary in the synthetic control model) at the
time we decide on $a_t$. Imagine for a moment, however, that at time $t$ we observed $\bar z_s$
for all $s < t$. Treating this as a two-armed linear contextual bandit, Thompson sampling applied to this setup is then known to achieve
$\tilde O(r^{3/2} \sqrt{T})$ regret
\cite{abeilleLinearThompsonSampling2017}.\footnote{The $\tilde O(\cdot)$ notation suppresses dependence on logarithmic factors, $\log(T)$ and $\log(r)$. }
Of course, even the history of the shared common factors is
not available; rather these must be inferred from our observations over the donor units. As a further complication, even with noiseless observations of $y^i_t$ on the donor units we would only
succeed in recovering the common factors $\bar z_t$ up to a rotation. In light of these salient
problem features it is notable that our regret guarantee depends linearly on $r$, which is typically
much smaller than the ambient number of donor units, $n$. This guarantee is our main theoretical
result.

We turn next to inference. There we know that the nominal \cite{abadie2010synthetic} synthetic
control estimator $\tau^{\rm SC}$ achieves with probability $1-O(\delta)$,
\[
|\tau^{\rm SC} - \tau^*|
\lesssim
\frac{\sigma}{\sqrt{T}}
\sqrt{\log(1/\delta)}
+
\frac{c_2\sigma}{c_1 \sqrt{T_0}}
\sqrt{\log(1/\delta) + \log n}.
\]
The regularization implicit in this estimator achieves a rate that is largely independent of $n$.\footnote{We write $A \lesssim B$ if $A \leq cB$ for some absolute constant $c$.} The constant $c_1$ depends on $r$ whereas the constant $c_2$ depends on the size of the shared common factors. Asymptotic distributions for this estimator without further distributional assumptions on the common factor process are unknown.


Our own estimator, $\tau^{\rm SCTS}$, works as follows. We compute $\tau^{\rm SC}$ (i.e., the
vanilla synthetic control estimator) using only observations in the pre-treatment period and those
experimental epochs over which $a_t = 1$. We set $\tau^{\rm SCTS} = \tau^{\rm SC}$ on the event
that the intervention was used over at least $T/2$ epochs; otherwise we set $\tau^{\rm SCTS} =
0$. We are then able to show that when $\tau^* > 0$, with probability $1-O(\delta)- \tilde{O}(1/\sqrt{T})$,
\[
|\tau^{\rm SCTS} - \tau^*|
\lesssim
\frac{\sigma}{\sqrt{T}}
\sqrt{\log(1/\delta)}
+
\frac{c_2\sigma}{c_1 \sqrt{T_0}}
\sqrt{\log(1/\delta) + \log n}.
\]
On the other hand, when $\tau^* \leq 0$, with probability $1-\tilde{O}(1/\sqrt{T})$, $\tau^{\rm SCTS} = 0$.
%
Contrasting this with the high probability confidence intervals for $\tau^{\rm SC}$, we see that on the event that $\tau^* > 0$, we get the same intervals as the synthetic control estimator. On the event that $\tau^* < 0$, all we learn is that the treatment effect is negative which is in essence the price we pay for controlling the cost of experimentation. The result follows from a simple idea expanded on in Section~\ref{sec:inference}. In our computational experiments, we see that re-randomization tests for p-values and the corresponding inverted hypothesis tests~\cite{fisher1966design} for confidence intervals provide adequate power and coverage.


In their totality, these results show that we can largely eliminate the cost of experimentation at a modest cost to inference: when the treatment effect is negative we only learn that this is the case with high probability, as opposed to getting a precise estimate of the effect. Since in practical settings a precise estimate of the treatment effect is typically only needed when the treatment effect is positive (so as to ascertain whether the cost of implementing the intervention is justified), this is perhaps a modest price to pay.

%


\section{Synthetically Controlled Thompson Sampling}
\label{sec:orgfae4997}

We introduce SCTS, which adapts Thompson Sampling (TS) to the problem of dynamically selecting
interventions so as to minimize expected regret in the setting of the previous section. The
algorithm is conceptually simple: at the start of each epoch, $t+1$, we compute a distribution
$\mathcal{D}_t^{\rm TS}$ over `plausible' values of $\tau^*$. This distribution may be thought of
informally as an approximation to a posterior over $\tau^*$ under a non-informative prior, given
the information available up to and including time $t$. We then sample from this distribution, and pick $a_{t+1}=1$ if and only if the sampled value, $\tilde \tau_t$ is non-negative. To construct $\mathcal{D}_t^{\rm TS}$, we
\begin{enumerate}
\item First, estimate the (unobserved) shared common factors $\bar z_s$ for $s \leq t$. We will accomplish this via PCA.
\item Plugging-in the estimates obtained for $\bar z_s$ above into the structural equation \eqref{eq:reward-model}, we compute an estimate of $\tau^*$ and $\lambda^*$ via ridge regression.
\item We use the estimates of $\tau^*$ and the precision matrix obtained from the regression in the previous step to construct our approximation to the posterior on $\tau^*$, $\mathcal{D}_t^{\rm TS}$.
\end{enumerate}
Next, we make precise each of these steps, assuming, simply for notational convenience, that $T_0 = 0.$
\newline
\newline
\noindent \textbf{Estimating Shared Common Factors: }Recall from \eqref{eq:donor-model}, that for each donor unit $i$ and epoch $s$, we observe
$
y^i_s = \langle{\lambda^{i}}, \bar z_s \rangle + \epsilon^{i}_s.
$
Define by $Y_t \in \mathbb{R}^{n \times t}$ the matrix with $(i,s)$ entry $y^i_s$, and similarly, define by $E_t \in \mathbb{R}^{n \times t}$ the noise matrix with $(i,s)$ entry $\epsilon^i_s$.
Now, let $\Lambda \in \mathbb{R}^{n \times r}$ be the factor loadings matrix with $i$th row $\lambda^{i\top}$, and denote by $\bar Z_t \in \mathbb{R}^{t \times r}$ the common factors matrix with $s$th row $\bar z_s^\top$. By \eqref{eq:donor-model}, we then observe at time $t+1$:
\begin{equation}
  Y_t
  =
  \Lambda\bar{Z}^\top_t  + E_t. \label{eq:matrix-model-control}
\end{equation}
We estimate $\bar Z_{t}$ at time $t+1$ by solving
\begin{equation}
        \label{eq:latent-estimation}
        \min_{Z \in \mathbb{R}^{t \times r},\ \Lambda \in \mathbb{R}^{n \times r}}\ \left\|Y_{t} - \Lambda Z^\top \right\|^2.
\end{equation}
We fix a specific solution to the above optimization problem via PCA. Specifically, let
$Y_t = \hat{U}_t \hat{\Sigma}_t \hat{V}_t^\top$ be any singular value decomposition (SVD) of $Y_t$. Denote by $\hat{U}^r_{t}$ and
$\hat{V}^r_{t}$ the matrices obtained from the first $r$ columns of $\hat{U}_t$ and $\hat{V}_t$
respectively. Finally, let $\hat{\Sigma}^r_t$ be the sub-matrix obtained from $\hat \Sigma_t$ from
its first $r$ rows and columns. By the Young-Eckart theorem, an optimal solution to
\eqref{eq:latent-estimation}, $(\hat \Lambda_t, \hat Z_t)$, can be obtained by setting
$\hat \Lambda_t \triangleq \sqrt{n} \hat U_t^r$, and
\[
\hat Z_t \triangleq \frac{1}{\sqrt{n}} \hat V^r_t \hat \Sigma^r_t.
\]
We recognize $\hat Z_t$ as precisely the usual `PCA loadings'; $\hat Z_t$ will serve as our approximation to $\bar Z_t$.
\newline
\newline
\noindent \textbf{Ridge Regression: } Recall that in our synthetic control model, we have for the treatment unit, $y^0_s = \tau^* a_s + \langle{\lambda^*}, \bar z_s\rangle + \epsilon^{0}_s$ at each epoch $s$. At time $t+1$, we employ this structural equation to estimate $\tau^*$ via least squares, using as a plug-in estimator\footnote{While the subscript $t$ in $\hat z_{s,t}$ makes precise that this is our estimate of $\bar z_s$ at time $t$, we will sometimes drop the $t$ subscript when clear from context.} for $\bar z_s^\top$, $\hat z_{s,t}^\top$, the $s$th row of $\hat Z_t$. Our estimate of $\tau^*$ at time $t+1$, $\hat \tau_t$, is obtained as the solution to the regularized least squares problem:
\begin{equation}
  \label{eq:rls}
      \min_{\tau \in \mathbb{R},\ \lambda \in \mathbb{R}^{r}}\ %
       \sum_{s \leq t}
      \left(y^0_s - \tau a_s - \langle \lambda, \hat z_{s,t} \rangle \right)^2
      + \rho (\tau^2 + \|\lambda\|_{2}^2).
\end{equation}
Here, $\rho > 0$ is a regularization penalty.\footnote{We fix $\rho \triangleq 1$ throughout the paper.}

We find it convenient to define the `precision matrix'
$\Omega_t \in \mathbb{R}^{(r+1) \times (r+1)}$ of the estimator
$\hat\theta_t^\top \triangleq [\hat \tau_t \ \hat \lambda^\top_t]$. Specifically, if we
denote $x_{s,t}^\top \triangleq [a_s \ \hat z_{s,t}^\top]$, then
$\Omega_t \triangleq \rho I + \sum_{s \leq t} x_{s,t}x_{s,t}^\top$. The `variance'\footnote{While for
  expositional purposes we use the terminology `precision matrix' and `variance', these quantities
  are of course not a precision matrix or variance since the design of the regression problem is
  not fixed.} of our estimator $\hat \tau_t$ is simply
$(\Omega_t)^{-1}_{1,1} \triangleq \hat \sigma^2_t$.  \newline \newline
\noindent \textbf{Approximate Posterior: } For our approximation to the posterior on $\tau^*$ at
time $t+1$, we take $\mathcal{D}_t^{\rm TS}$ to be the uniform distribution,
$\mathrm{Unif}[\hat\tau_t - \beta_t \hat\sigma_t, \hat \tau_t + \beta_t \hat \sigma_t]$.  Here
$\beta_t > 0$ is a time-dependent `expansion' factor we make precise later; for now we may simply
consider $\beta_t$ to be an increasing sequence with
$\beta_t = O(\sqrt{r \log (r t)})$.
\newline
\newline
\noindent
As discussed earlier, SCTS draws a sample, $\tilde \tau_t$ from $\mathcal{D}_t^{\rm TS}$ at time $t+1$. Then, SCTS sets $a_{t+1} =1$ if and only if $\tilde \tau_t \geq 0$.

\subsection{Discussion: Inconsistent Designs, the Failure of Optimism}\label{sec:challenges}

\textbf{\textsf{Inconsistent Designs: }}Notice that the design employed in the regression~\eqref{eq:rls} is {\em inconsistent} from period to period in the sense that the estimate for any fixed context $\bar z_s$ in the design matrix changes from period to period. Part of this is simply due to noise -- as time goes on we hope to compute a more accurate estimate of $\bar z_s$ for any fixed $s$. However, as it turns out even in the absence of noise (i.e., if $E_t$ were identically zero), we would still not expect consistency in the design since even in that case, we would only ever be able to recover the contexts up to a rotation. A priori it is unclear whether this inconsistency will allow for effective recovery of the treatment effect, and as such it is unclear whether we can expect the algorithm we have described to achieve low regret.
\newline
\newline
\noindent
\textbf{\textsf{Optimistic Algorithms: }}A natural upper confidence bound (UCB) style alternative to the algorithm we have described, might proceed by defining the upper confidence bound ${\rm UCB}_t \triangleq \hat \tau_t + \beta_t \hat \sigma_t$, and then setting $a_{t+1} =1$ if and only if ${\rm UCB}_t \geq 0$. Perhaps surprisingly, this algorithm would incur {\em linear} regret in general; see Appendix~\ref{sec:fail-UCB} that provides a simple and decidedly non-pathological example of this phenomenon. It is thus interesting that the `sampling' aspect of the algorithm above eventually plays a crucial role in achieving sub-linear regret.

\section{Regret Analysis}\label{sec:regret}
This section provides a regret analysis for SCTS. We begin by restating Theorem~\ref{thm:informal-main} formally. In order to do so, we must first state our assumptions, which concern the expected value of the observation matrix on the donor units, i.e., $\mathbb{E} Y_T \triangleq \bar Y_T$, a rank $r$ matrix. Specifically, we make assumptions on the decomposition $\bar Y_T = \Lambda \bar Z^\top_T$. To do so, we first note that in our model, it is possible to assume, without loss, a canonical version of this decomposition (note that the selection of $\Lambda$ and $\bar Z_{T}$ is not unique due to the free
choice of a rotation). In particular, letting $\bar Y_T = \bar U \bar \Sigma \bar V^\top$ be an SVD
of $\bar Y_T$, we may assume without loss, that $\Lambda = \sqrt{n} \bar U$ and
$\bar Z_{T} =  \bar V \bar \Sigma /\sqrt{n}$; see \cref{sec:canonical-decomp} for details. Given
this canonical decomposition, we assume:

%
%
%

\begin{assumption}
For all $t$, $\|\bar{z}_t\|_{2}$ is upper bounded by a constant, $B$, and $\|\lambda^*\|_{2} = O(\sqrt{r})$.
\label{ass:bar_y_decomp}
\end{assumption}

\noindent We can now state our main regret bound for SCTS.

\begin{theorem}[SCTS Regret]
Let $n = \Omega(T)$. Then, under Assumption~\ref{ass:bar_y_decomp}, SCTS achieves expected regret $R(T) = O(r \sqrt{T}\log(T))$.
\label{thm:regret}
\end{theorem}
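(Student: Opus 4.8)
The plan is to decompose the per-epoch regret and bound its sum along the lines of a Thompson sampling argument for a two-armed linear contextual bandit, but with two novel complications to control: (i) the ``context'' $\bar z_t$ is never observed at decision time, and (ii) the plug-in contexts $\hat z_{s,t}$ produced by PCA are consistent only up to a rotation and a noise term. First I would set up the regret accounting. On epoch $t+1$ we incur $R_{t+1} = |\tau^*|$ exactly when the sampled $\tilde\tau_t$ lands on the wrong side of zero relative to $\tau^*$. Writing $\Delta_t \triangleq |\tau^* - \hat\tau_t|$ for the estimation error of the ridge estimate and recalling $\mathcal D_t^{\rm TS} = \mathrm{Unif}[\hat\tau_t - \beta_t\hat\sigma_t, \hat\tau_t + \beta_t\hat\sigma_t]$, a wrong-sign event requires $\beta_t\hat\sigma_t \gtrsim |\tau^*|$, and moreover, conditioned on that, the probability of a wrong sign is at most $|\tau^*|/(\beta_t\hat\sigma_t)$ by the uniform density. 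Hence $\mathbb E[R_{t+1} \mid \mathcal F_{t+1}] \lesssim \min\{|\tau^*|,\ |\tau^*|^2/(\beta_t\hat\sigma_t)\} \cdot \mathbf 1\{\text{good event}\} + |\tau^*|\cdot \mathbf 1\{\text{bad event}\}$, where the ``bad event'' collects the low-probability failures of PCA recovery and of the self-normalized martingale concentration bound controlling $\Delta_t$ in terms of $\beta_t \hat\sigma_t$.

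The core of the argument is then to show $\sum_{t} \min\{|\tau^*|, |\tau^*|^2/(\beta_t \hat\sigma_t)\} = O(r\sqrt T \log T)$ on the good event. This is where the standard linear-bandit potential/elliptical-potential machinery enters: $\sum_t \hat\sigma_t^2 = \sum_t (\Omega_t^{-1})_{1,1} = O(\log\det\Omega_T) = O(r\log T)$ by the log-determinant telescoping lemma, since $x_{s,t}$ has bounded norm (using Assumption~\ref{ass:bar_y_decomp} and the PCA error bound to bound $\|\hat z_{s,t}\|$). Combining a Cauchy–Schwarz step over the $T$ epochs with $\sum_t \hat\sigma_t^2 = O(r\log T)$, and handling the $\min$ with $|\tau^*|$ to absorb epochs where $\hat\sigma_t$ is large, yields the $r\sqrt T \log T$ scaling; the extra $\beta_t = O(\sqrt{r\log(rt)})$ factor is what the informal statement hides in $\log(T)$, and the linear-in-$r$ (rather than $r^{3/2}$) dependence comes from the fact that we only need to control the \emph{scalar} variance $\hat\sigma_t^2$ of the $\tau$-coordinate, not the full $(r{+}1)$-dimensional ellipsoid, together with the scalar (one-dimensional) nature of the sampling step for $\tilde\tau_t$.

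The main obstacle — and the step I expect to consume most of the work — is establishing the high-probability bound on $\Delta_t = |\tau^* - \hat\tau_t|$ in terms of $\beta_t\hat\sigma_t$ \emph{despite} the misspecified, time-varying design. Two things have to be reconciled. First, the ridge regression in \eqref{eq:rls} uses $\hat z_{s,t}$ in place of $\bar z_s$, so its residual is not a clean martingale difference: it carries an errors-in-variables bias $\langle\lambda^*, \bar z_s - R_t \hat z_{s,t}\rangle$ for the best rotation $R_t$. I would bound $\|\bar Z_t - \hat Z_t R_t^\top\|$ via a Davis–Kahan / $\sin\Theta$-type perturbation bound applied to $Y_t = \Lambda \bar Z_t^\top + E_t$, which requires a lower bound on the $r$-th singular value of $\bar Y_t$ (a well-conditionedness consequence of Assumption~\ref{ass:bar_y_decomp}) and an operator-norm bound $\|E_t\| = O(\sqrt n + \sqrt t)$ on the sub-Gaussian noise; the hypothesis $n = \Omega(T)$ is exactly what makes $\|E_t\|/\sigma_r(\bar Y_t)$ small enough that this bias is dominated by the statistical fluctuation term. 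Second, even after absorbing the rotation by re-parametrizing $\lambda^* \mapsto R_t \lambda^*$ inside the regression, the design points change with $t$, so I cannot invoke an off-the-shelf self-normalized bound directly; I would instead condition on the (high-probability) event that all PCA errors up to $T$ are uniformly small, and on that event treat the regression as a perturbation of a well-specified one, pushing the perturbation into an additive term that is $O(\text{poly}(r)\cdot \|E_t\|/\sigma_r(\bar Y_t)\cdot \sqrt t) = o(1)$ under $n=\Omega(T)$, and applying the martingale concentration to the remaining genuinely-random part. Assembling the clean term ($\lesssim \beta_t\hat\sigma_t$ by the concentration bound and the definition of $\hat\sigma_t$) with the vanishing perturbation term gives the required control of $\Delta_t$, and feeding this back into the regret decomposition closes the proof.
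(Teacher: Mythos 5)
Your high-level scaffolding matches the paper's: a high-probability ``clean'' event combining a Davis--Kahan-type bound on the PCA recovery (with $\|E_t\|\lesssim \sigma(\sqrt n+\sqrt t)$ and $n=\Omega(T)$ making the error $O(\sigma)$), rotation invariance of the ridge estimate, an errors-in-variables self-normalized concentration bound giving $|\tau^*-\hat\tau_t|\le\beta_t\hat\sigma_t/2$ with $\beta_t=O(\sqrt{r\log(rt)})$, and the scalar nature of the sampled coordinate as the source of the linear-in-$r$ dependence. But the central regret-summation step has a genuine gap, in two places. First, your per-step bound is wrong: under the uniform sample on $[\hat\tau_t-\beta_t\hat\sigma_t,\hat\tau_t+\beta_t\hat\sigma_t]$, the wrong-sign probability is \emph{not} $O(|\tau^*|/(\beta_t\hat\sigma_t))$; on the clean event it can be as large as $3/4$ whenever $|\tau^*|\lesssim\beta_t\hat\sigma_t$ (take $\hat\tau_t\approx 0$), so the best pointwise statement is $R_{t+1}\le\min\{|\tau^*|,\,2\beta_t\hat\sigma_t\}$, not $\min\{|\tau^*|,\,|\tau^*|^2/(\beta_t\hat\sigma_t)\}$. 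Second, and more seriously, the claim $\sum_t\hat\sigma_t^2=\sum_t(\Omega_t^{-1})_{1,1}=O(\log\det\Omega_T)=O(r\log T)$ is false. The elliptical potential lemma controls $\sum_t\|x_{t+1}\|^2_{\Omega_t^{-1}}$ for the vectors actually \emph{added} to the precision matrix; $e_1$ is not such a vector, and $(\Omega_t^{-1})_{1,1}$ need not decay at all. Concretely, in the paper's UCB counterexample (Appendix~\ref{sec:fail-UCB}) one has $a_s\equiv 1$, $\hat z_s\equiv 1$, so $\Omega_t=\rho I+t\,\mathbf 1\mathbf 1^\top$ and $(\Omega_t^{-1})_{1,1}\to 1/(2\rho)$, giving $\sum_t\hat\sigma_t^2=\Theta(T)$ while $\log\det\Omega_T=O(\log T)$. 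Indeed, since your summation never exploits the randomization of the sampler, it would equally ``prove'' sublinear regret for the UCB variant, which the paper shows suffers linear regret---so an ingredient must be missing.

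The missing ingredient is how the paper converts $\hat\sigma_t$ into a quantity the potential lemma can see. Since $\bar x^*_{t+1}$ and $\bar x_{t+1}$ differ only in the action coordinate, $\hat\sigma_t=\|e_1\|_{\Omega_t^{-1}}=\|\bar x^*_{t+1}-\bar x_{t+1}\|_{\Omega_t^{-1}}\le\|\bar x^*_{t+1}\|_{\Omega_t^{-1}}+\|\bar x_{t+1}\|_{\Omega_t^{-1}}$ (Lemma~\ref{le:single_step_ub1}); the optimal-arm term is then bounded by $4\,\mathbb E[\|\bar x_{t+1}\|_{\bar\Omega_t^{-1}}\mid C_t]$ using the fact that the sampler plays $a^*$ with probability at least $1/4$ on the clean event (Lemma~\ref{le:opt_action_lb})---this is exactly where Thompson sampling, as opposed to optimism, enters; and the $\Omega_t^{-1}$-norms (estimated, time-inconsistent design) are traded for $\bar\Omega_t^{-1}$-norms (true contexts) at a $(1+\alpha)$ cost via Lemma~\ref{le:norm_mismatch}, after which the elliptical potential lemma applied to the \emph{true} contexts yields $\sum_t\|\bar x_{t+1}\|_{\bar\Omega_t^{-1}}=O(\sqrt{rT\log T})$ and hence $R(T)=O(r\sqrt T\log T)$ via Proposition~\ref{thm:single-step-regret}. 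Relatedly, your appeal to the log-determinant telescoping with the estimated contexts overlooks that the design $\{x_{s,t}\}$ is re-estimated every period, so $\Omega_{t+1}\neq\Omega_t+x_{t+1}x_{t+1}^\top$ and the telescoping does not apply directly; this is another reason the paper routes the potential argument through $\bar\Omega_t$ rather than $\Omega_t$.
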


The $O(\cdot)$ notation in the above regret bound ignores terms that depend polynomially on $\sigma$ and $B$. As stated earlier, the linear dependence on $r$ above is of note. We also observe that beyond its rank, our guarantee remarkably has no further dependence on the spectrum of $\bar Y_T$.

\subsection{Proof Architecture for Theorem~\ref{thm:regret}}

The proof of Theorem~\ref{thm:regret} follows a familiar architecture that decomposes regret over
time. We lay out this architecture here and will make precise two key results
(Propositions~\ref{thm:clean-prob} and \ref{thm:single-step-regret}) that enable the
proof. Establishing these propositions is the core challenge in establishing a useful regret
guarantee.  In what follows, we find it convenient to define the `true context' vector
$\bar{x}^\top_t \triangleq [a_t \ \bar{z}^\top_t]$, and the associated precision matrix
$\bar{\Omega}_t \triangleq \rho I + \sum_{s=1}^{t} \bar{x}_{s} \bar{x}_{s}^\top$. The Elliptical Potential
Lemma \cite{abbasi-yadkoriImprovedAlgorithmsLinear2011} then states

\begin{lemma}[Elliptical Potential Lemma] Under Assumption~\ref{ass:bar_y_decomp}, it holds that
$$
\sum_{t=0}^{T-1} \left\|\bar{x}_{t+1}\right\|_{\bar{\Omega}_t ^{-1}} = O \left(\sqrt{r T \log T} \right).
$$
\label{thm:epl}
\end{lemma}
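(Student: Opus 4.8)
The plan is to reduce the claim to the classical elliptical potential (log-determinant) bound via one application of Cauchy--Schwarz. First I would write
\[
\sum_{t=0}^{T-1} \left\|\bar{x}_{t+1}\right\|_{\bar{\Omega}_t^{-1}}
\le
\sqrt{T}\left(\sum_{t=0}^{T-1} \left\|\bar{x}_{t+1}\right\|_{\bar{\Omega}_t^{-1}}^2\right)^{1/2},
\]
so it suffices to show $\sum_{t=0}^{T-1}\|\bar{x}_{t+1}\|_{\bar{\Omega}_t^{-1}}^2 = O(r\log T)$.

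For the squared sum, I would use the rank-one determinant identity: since $\bar{\Omega}_{t+1} = \bar{\Omega}_t + \bar{x}_{t+1}\bar{x}_{t+1}^\top$, we have $\det\bar{\Omega}_{t+1} = \det\bar{\Omega}_t \cdot \bigl(1 + \|\bar{x}_{t+1}\|_{\bar{\Omega}_t^{-1}}^2\bigr)$, which telescopes to
\[
\sum_{t=0}^{T-1}\log\!\left(1 + \|\bar{x}_{t+1}\|_{\bar{\Omega}_t^{-1}}^2\right) = \log\det\bar{\Omega}_T - \log\det\bar{\Omega}_0.
\]
To pass from $\log(1+x)$ back to $x$ I need a uniform bound on the per-step terms: because $\bar{\Omega}_t \succeq \rho I = I$, we get $\|\bar{x}_{t+1}\|_{\bar{\Omega}_t^{-1}}^2 \le \|\bar{x}_{t+1}\|_2^2 = a_{t+1}^2 + \|\bar{z}_{t+1}\|_2^2 \le 1 + B^2 =: C$ by Assumption~\ref{ass:bar_y_decomp}; then the elementary inequality $x \le \tfrac{C}{\log(1+C)}\log(1+x)$ for $x\in[0,C]$ gives $\sum_{t=0}^{T-1}\|\bar{x}_{t+1}\|_{\bar{\Omega}_t^{-1}}^2 \le \tfrac{C}{\log(1+C)}\bigl(\log\det\bar{\Omega}_T - \log\det\bar{\Omega}_0\bigr)$.

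It then remains to control the log-determinant gap. With $\det\bar{\Omega}_0 = \rho^{r+1} = 1$, and using the AM--GM bound $\log\det\bar{\Omega}_T \le (r+1)\log\!\bigl(\operatorname{tr}(\bar{\Omega}_T)/(r+1)\bigr)$ together with $\operatorname{tr}(\bar{\Omega}_T) = (r+1)\rho + \sum_{s=1}^T\|\bar{x}_s\|_2^2 \le (r+1) + TC$, I obtain $\log\det\bar{\Omega}_T - \log\det\bar{\Omega}_0 \le (r+1)\log\!\bigl(1 + TC/(r+1)\bigr) = O(r\log T)$. Combining with the Cauchy--Schwarz step yields $\sum_{t=0}^{T-1}\|\bar{x}_{t+1}\|_{\bar{\Omega}_t^{-1}} = O(\sqrt{rT\log T})$, as claimed.

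I expect no genuine obstacle here: the argument is deterministic in the realized trajectory $\bar{x}_1,\dots,\bar{x}_T$ and makes no use of how the treatments $a_t$ are selected, so the adaptivity of SCTS is irrelevant. The only points requiring care are (i) the uniform bound $\|\bar{x}_{t+1}\|_{\bar{\Omega}_t^{-1}}^2 \le C$, which is exactly where the choice $\rho = 1$ and the bounded-factor Assumption~\ref{ass:bar_y_decomp} enter, and (ii) keeping the dimension dependence linear inside the square root, which the AM--GM/trace bound delivers cleanly.
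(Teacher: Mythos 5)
Your argument is correct and is essentially the same route the paper takes: the paper simply invokes the standard elliptical potential bound (its Lemma~\ref{le:epl-full}, citing \cite{lattimoreBanditAlgorithms2020}), whose textbook proof is exactly your Cauchy--Schwarz plus log-determinant telescoping with the per-step truncation via $\rho=1$ and $\|\bar z_t\|\le B$, followed by the AM--GM/trace bound on $\log\det\bar{\Omega}_T$. Your observation that the bound is deterministic and independent of how the $a_t$ are chosen matches the paper's usage as well.
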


Now, we must control the error in our estimates of the context vectors, $\bar Z_t$, and the consequent error in our estimation of $\tau^*$. Specifically, define the event that the error in recovering $\bar Z_t$ is small, $C_t^{\rm latent}$, according to
\[
C_t^{\rm latent} \triangleq \left\{
\inf_{\Phi \in \mathcal{O}_r}
\|\bar{Z}_{t} - \hat{Z}_{t} \Phi \| \leq \alpha
\right\}.
\]
Here $\mathcal{O}_r$ is the set of $r$-dimensional rotations, and $\alpha \leq c \sigma$ for some universal constant $c$. Observe that we only control this error up to a rotation. We define the event that the error in estimating $\tau^*_t$ is small, $C_t^{\rm est}$, according to
\[
C_t^{\rm est} \triangleq \left\{
|\tau^* - \hat{\tau}_t| \leq \beta_t \hat \sigma_t / 2
\right\}.
\]
We will control single-step regret on the `clean' event that both these errors are controlled,
$C_t \triangleq C_t^{\rm latent} \cap C_t^{\rm est}$; this is a high probability event:
\begin{proposition}[clean event] \label{prop:clean-event}
For all $t$, under Assumption~\ref{ass:bar_y_decomp},
$
\mathbb{P}\left(C_t\right) \geq 1 - O(1/t^2).
$
\label{thm:clean-prob}
\end{proposition}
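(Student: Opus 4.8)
The plan is to bound the two constituent failure events separately. Since $C_t = C_t^{\rm latent}\cap C_t^{\rm est}$, it suffices to show $\mathbb{P}\big((C_t^{\rm latent})^c\big) = O(1/t^2)$ and $\mathbb{P}\big((C_t^{\rm est})^c\cap C_t^{\rm latent}\big) = O(1/t^2)$, after which the claim follows from a union bound.

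For $C_t^{\rm latent}$ I would run a matrix-perturbation argument on the factor model $Y_t = \sqrt{n}\,\bar U\bar Z_t^\top + E_t$, whose signal $M\triangleq\sqrt{n}\,\bar U\bar Z_t^\top$ has rank $\le r$ with $\sigma_r(M) = \sqrt{n}\,\sigma_r(\bar Z_t)$. First, a standard Gaussian operator-norm bound gives $\|E_t\|_{\mathrm{op}}\le c\sigma\sqrt{n}$ on an event of probability $1 - O(1/t^2)$ — here $n = \Omega(T)\ge\Omega(t)$ makes $\sqrt{n}$ dominate $\sqrt{t}$. On this event, since $\hat U_t^r$ spans the top-$r$ left singular subspace of $Y_t$, Davis--Kahan/Wedin bounds the largest principal angle between $\mathrm{span}(\hat U_t^r)$ and $\mathrm{span}(\bar U)$ by $\sin\theta_{\max}\lesssim\|E_t\|_{\mathrm{op}}/\sigma_r(M)$. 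Writing $\hat Z_t^\top = \tfrac{1}{\sqrt{n}}(\hat U_t^r)^\top Y_t = G\,\bar Z_t^\top + \tfrac{1}{\sqrt{n}}(\hat U_t^r)^\top E_t$ with $G\triangleq(\hat U_t^r)^\top\bar U$, I would take $\Phi$ to be the orthogonal polar factor of $G$, so that $\Phi^\top G = (G^\top G)^{1/2}$ and $\|I-\Phi^\top G\|\le\|I-G^\top G\| = \sin^2\theta_{\max}$; then $\|\bar Z_t - \hat Z_t\Phi\|\le\sin^2\theta_{\max}\,\|\bar Z_t\| + \tfrac{1}{\sqrt{n}}\|(\hat U_t^r)^\top E_t\|$. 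Invoking $n = \Omega(T)$ together with a well-conditioning property of the common factors, namely $\sigma_r(\bar Z_t) = \Omega(\sqrt{t})$, both terms are $O(\sigma)$. I expect this step to be the main obstacle: the rotation indeterminacy has to be threaded through carefully, and one genuinely needs $\bar Z_t$ to be non-degenerate — without a lower bound on $\sigma_r(\bar Z_t)$, PCA cannot recover even the $r$-th latent direction — so this is precisely where the ``mild assumptions on the shared common factors'' must do work beyond the norm bounds recorded in Assumption~\ref{ass:bar_y_decomp}.

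For $C_t^{\rm est}$ conditional on $C_t^{\rm latent}$ I would pass to the estimated coordinate system. Fix $\Phi$ with $\|\bar Z_t - \hat Z_t\Phi\|\le\alpha$; then $y_s^0 = \langle\tilde\theta^*, x_{s,t}\rangle + \eta_s + \epsilon_s^0$ where $(\tilde\theta^*)^\top \triangleq [\tau^*\ \,(\Phi\lambda^*)^\top]$ (so $\|\tilde\theta^*\| = \|\theta^*\| = O(\sqrt{r})$) and $\eta_s \triangleq \langle\lambda^*,\,\bar z_s - \Phi^\top\hat z_{s,t}\rangle$ satisfies $\sum_{s\le t}\eta_s^2\le\|\lambda^*\|^2\alpha^2 = O(r\sigma^2)$ — this $\eta$ term captures the bias from plugging in estimated, rotation-ambiguous contexts and is the origin of the design's period-to-period inconsistency. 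The closed form of the ridge estimate gives $\hat\theta_t - \tilde\theta^* = -\rho\,\Omega_t^{-1}\tilde\theta^* + \Omega_t^{-1}\sum_{s\le t}x_{s,t}\eta_s + \Omega_t^{-1}\sum_{s\le t}x_{s,t}\epsilon_s^0$, and $\hat\tau_t - \tau^*$ is its first coordinate. Projecting onto $e_1$ and using $\hat\sigma_t = \|\Omega_t^{-1/2}e_1\|$: the regularization term is $\le\sqrt{\rho}\,\|\theta^*\|\,\hat\sigma_t$; the misspecification term is $\le\hat\sigma_t\,\big\|\sum_s x_{s,t}\eta_s\big\|_{\Omega_t^{-1}}\le\hat\sigma_t\,\big(\sum_s\eta_s^2\big)^{1/2} = O(\sigma\sqrt{r})\,\hat\sigma_t$, using $X_t\Omega_t^{-1}X_t^\top\preceq I$ for the matrix $X_t$ stacking the $x_{s,t}^\top$; and the noise term is $\le\hat\sigma_t\,\big\|\sum_s x_{s,t}\epsilon_s^0\big\|_{\Omega_t^{-1}}$, which the self-normalized martingale bound of \cite{abbasi-yadkoriImprovedAlgorithmsLinear2011} at confidence $1-1/t^2$ controls, and on $C_t^{\rm latent}$ — where $\|\hat z_{s,t}\| = O(1)$ so that $\log\det(\Omega_t/\rho) = O(r\log t)$ — this is $O(\sigma\sqrt{r\log t})\,\hat\sigma_t$. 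Summing the three contributions and choosing the constant in $\beta_t = \Theta(\sqrt{r\log(rt)})$ large enough, each is at most $\beta_t\hat\sigma_t/6$, hence $|\hat\tau_t - \tau^*|\le\beta_t\hat\sigma_t/2$, i.e.\ $C_t^{\rm est}$ holds. One point deserving care is the martingale structure: $x_{s,t}$ depends on the entire donor observation history (through $\hat Z_t$) but not on $\{\epsilon_s^0\}$, so conditioning on all donor observations renders $\{x_{s,t}\}_{s\le t}$ predictable for the filtration generated by $(\epsilon_1^0,\dots,\epsilon_s^0)$, along which $\{\epsilon_s^0\}$ remains an independent $\sigma$-sub-Gaussian sequence and $\Omega_t$ is measurable, so the bound applies verbatim. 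Since $(C_t^{\rm est})^c\cap C_t^{\rm latent}$ is then contained in the (probability $\le 1/t^2$) failure event of that bound, the union bound yields $\mathbb{P}(C_t)\ge 1 - O(1/t^2)$.
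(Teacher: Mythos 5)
Your treatment of $C_t^{\rm est}$ is essentially the paper's argument: you reduce to ridge regression with errors-in-variables, bound the regularization, misspecification ($\eta_s$) and noise contributions against $\hat\sigma_t$, and invoke the self-normalized martingale bound after conditioning on the donor data (your way of handling the rotation --- absorbing $\Phi$ into the true parameter $\tilde\theta^*$, whose first coordinate is still $\tau^*$ --- is an acceptable substitute for the paper's explicit proof that $\hat\tau_t$ and $\hat\sigma_t$ are rotation-invariant, cf.\ Proposition~\ref{thm:eiv-ci}). The genuine gap is in the $C_t^{\rm latent}$ step. You prove it only under an additional hypothesis, $\sigma_r(\bar Z_t)=\Omega(\sqrt{t})$, and you assert this is ``genuinely needed.'' It is not part of Assumption~\ref{ass:bar_y_decomp} (which only bounds $\|\bar z_t\|$ and $\|\lambda^*\|$), and the paper is explicit that its guarantee has \emph{no} dependence on the spectrum of $\bar Y_T$ beyond its rank. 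So as written your argument does not establish the proposition as stated: when the factor matrix is ill-conditioned (e.g.\ $\sigma_r(\bar Z_t)=o(\sqrt t)$, which the assumptions permit), your bound $\|\bar Z_t\|\sin^2\theta_{\max}\lesssim \sqrt{t}B\,\|E_t\|^2/(n\,\sigma_r(\bar Z_t)^2)$ blows up, yet the event $C_t^{\rm latent}$ still holds with the claimed probability.

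The reason your intuition (``without a spectral lower bound PCA cannot recover the $r$-th latent direction'') does not bite is that $C_t^{\rm latent}$ does not ask for subspace recovery: the target is $\bar Z_t=\bar V_t\bar S_t/\sqrt{n}$ (up to rotation), i.e.\ singular vectors \emph{weighted by their singular values}, so directions that are too weak to be recovered also contribute proportionally little to $\bar Z_t$ and mis-estimating them is harmless. The paper exploits this via a refined perturbation bound, Theorem~\ref{thm:rotation-ZPhi}, which shows directly that for exactly rank-$r$ $\bar M=\bar U\bar S\bar V^\top$ and $M=\bar M+E$ there is an orthogonal $H$ with $\|\bar V\bar S H - VS\|\le 4\|E\|$ --- no $\sigma_r(\bar M)$ or condition-number dependence at all (the proof works with the rotation $H=\tilde U\tilde V^\top$ from the SVD of $\bar U^\top U$ and bounds $\|\bar S(H-\bar U^\top U)\|$ by $\|\bar M^\top(I-UU^\top)\|\le 2\|E\|$, rather than passing through $\sin\theta_{\max}$). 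Combined with $\|E_t\|\lesssim\sigma\sqrt{n\lor t}$ and $n=\Omega(T)$ this gives $\inf_{\Phi}\|\bar Z_t-\hat Z_t\Phi\|\le 4\|E_t\|/\sqrt{n}\le\alpha$ unconditionally on the spectrum. To repair your proof you would either need to add the conditioning assumption (thereby proving a weaker statement than the paper's) or replace the Davis--Kahan/polar-factor step with a bound of this spectrum-free type.
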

This result is proved in Appendix \ref{sec:proof-clean-event}. The result relies on an analysis generalizing 
the Davis-Kahan theorem (to control $C_t^{\rm latent}$) and the usual self-normalized martingale
concentration bounds (to control $C_t^{\rm est}$). A key additional ingredient is needed --- in
controlling $C_t^{\rm est}$, we must deal with the issue of inconsistent designs in the
regression~\eqref{eq:rls}. As discussed in \cref{sec:challenges}, one issue driving this
inconsistency is the fact that $\bar Z_t$ can only be recovered up to a rotation. The proof of
Proposition~\ref{thm:clean-prob} overcomes this challenge by showing that the actions selected
under distinct rotations are in fact equal in distribution so that we can assume a canonical
rotation without loss of generality.
We now state our bound on single-step regret; this is the key result that enables our regret analysis and will be proved later in this section:
\begin{proposition}[Single-step regret] For some universal constant $c_1$, we have for all $t$,
$$
\mathbb{E}\left[R_{t+1} \mid C_t\right] \leq c_1(1 + \alpha) \beta_t \mathbb{E}\left[\|\bar{x}_{t+1}\|_{\bar{\Omega}_t^{-1}} \big| C_t\right].
$$
\label{thm:single-step-regret}
\end{proposition}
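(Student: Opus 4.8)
The plan is as follows. Since $C_t$ is $\mathcal{F}_{t+1}$-measurable (it is determined by the PCA on $Y_t$ and the ridge fit at epoch $t+1$), I would first reduce to the pointwise statement
\[
\mathbb{E}\!\left[R_{t+1}\mid\mathcal{F}_{t+1}\right]\;\le\;c_1(1+\alpha)\,\beta_t\,\mathbb{E}\!\left[\|\bar x_{t+1}\|_{\bar\Omega_t^{-1}}\mid\mathcal{F}_{t+1}\right]\qquad\text{on the event }C_t,
\]
and then take $\mathbb{E}[\,\cdot\mid C_t]$ by the tower property. Conditioning on $\mathcal{F}_{t+1}$, write $p_1\triangleq\mathbb{P}(\tilde\tau_t\ge 0\mid\mathcal{F}_{t+1})$, $p_0\triangleq 1-p_1$, and $\bar x^{(a)}_{t+1}\triangleq[a\ \ \bar z_{t+1}^\top]^\top$, so that $\bar x_{t+1}=\bar x^{(a_{t+1})}_{t+1}$ equals $\bar x^{(1)}_{t+1}$ or $\bar x^{(0)}_{t+1}$ with probabilities $p_1,p_0$, while $\bar x^{(1)}_{t+1}-\bar x^{(0)}_{t+1}=e_1$ (the first coordinate vector of $\mathbb{R}^{r+1}$) and $\hat\sigma_t=\|e_1\|_{\Omega_t^{-1}}$ by definition. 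The starting point is that, since the two arms' mean rewards differ by exactly $\tau^*$ irrespective of $\bar z_{t+1}$, single-step regret is simply $\mathbb{E}[R_{t+1}\mid\mathcal{F}_{t+1}]=|\tau^*|\,q$, where $q$ denotes the probability of playing the sub-optimal arm ($q=p_0$ if $\tau^*\ge 0$, $q=p_1$ otherwise).

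The heart of the argument --- and where the sampling, rather than optimistic, design is essential --- is the bound $|\tau^*|\,q\le 6\,\beta_t\hat\sigma_t\min(p_0,p_1)$ on $C_t$, which I would establish by splitting on $|\hat\tau_t|$, using only $C_t^{\rm est}$ (that $|\tau^*-\hat\tau_t|\le\beta_t\hat\sigma_t/2$) and the explicit uniform law of $\tilde\tau_t$ on $[\hat\tau_t-\beta_t\hat\sigma_t,\hat\tau_t+\beta_t\hat\sigma_t]$: (a) if $|\hat\tau_t|\ge\beta_t\hat\sigma_t$, the sampler's support lies on one side of $0$, so $q=0$; (b) if $|\hat\tau_t|\le\beta_t\hat\sigma_t/2$, a direct computation gives $p_0,p_1\in[\tfrac14,\tfrac34]$ and $|\tau^*|\le\beta_t\hat\sigma_t$, hence $|\tau^*|\,q\le\beta_t\hat\sigma_t\le 4\beta_t\hat\sigma_t\min(p_0,p_1)$; (c) if $\tfrac12\beta_t\hat\sigma_t<|\hat\tau_t|<\beta_t\hat\sigma_t$, then $C_t^{\rm est}$ forces $\mathrm{sign}(\hat\tau_t)=\mathrm{sign}(\tau^*)$, so the optimal arm is the $\mathrm{sign}(\hat\tau_t)$-arm, which (the sampler being centered at $\hat\tau_t$) is the more likely one; thus $q=\min(p_0,p_1)$ and $|\tau^*|\le|\hat\tau_t|+\tfrac12\beta_t\hat\sigma_t<\tfrac32\beta_t\hat\sigma_t$. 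Case (b) is the subtle one: there $q$ may be close to $1$, and it is only the randomization that keeps $\min(p_0,p_1)\ge\tfrac14$; for the optimistic rule $a_{t+1}=\mathbf{1}\{\hat\tau_t+\beta_t\hat\sigma_t\ge 0\}$ one would instead have $q=1$ and $\min(p_0,p_1)=0$, which is precisely the mechanism behind its linear regret. I would then combine the displayed bound with the elementary inequality $\mathbb{E}[\|\bar x_{t+1}\|_{\bar\Omega_t^{-1}}\mid\mathcal{F}_{t+1}]=p_1\|\bar x^{(1)}_{t+1}\|_{\bar\Omega_t^{-1}}+p_0\|\bar x^{(0)}_{t+1}\|_{\bar\Omega_t^{-1}}\ge\min(p_0,p_1)\,\|\bar x^{(1)}_{t+1}-\bar x^{(0)}_{t+1}\|_{\bar\Omega_t^{-1}}=\min(p_0,p_1)\,\|e_1\|_{\bar\Omega_t^{-1}}$ (triangle inequality in the $\bar\Omega_t^{-1}$-norm), obtaining, on $C_t$, $\mathbb{E}[R_{t+1}\mid\mathcal{F}_{t+1}]\le 6\beta_t\bigl(\hat\sigma_t/\|e_1\|_{\bar\Omega_t^{-1}}\bigr)\,\mathbb{E}[\|\bar x_{t+1}\|_{\bar\Omega_t^{-1}}\mid\mathcal{F}_{t+1}]$.

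The final ingredient is the perturbation bound $\hat\sigma_t\le\sqrt 2\,(1+\alpha)\,\|e_1\|_{\bar\Omega_t^{-1}}$ on $C_t^{\rm latent}$: the treatment-direction uncertainty under the estimated-context design is, up to a constant, no larger than under the true design. Here I would use that the treatment-indicator coordinate of $x_{s,t}=[a_s\ \ \hat z_{s,t}^\top]^\top$ coincides exactly with that of $\bar x_s=[a_s\ \ \bar z_s^\top]^\top$, so expanding $\hat\sigma_t^2=(\Omega_t^{-1})_{1,1}$ and $\|e_1\|_{\bar\Omega_t^{-1}}^2=(\bar\Omega_t^{-1})_{1,1}$ by Schur complement against the first coordinate, together with the push-through identity, reduces the claim (with $\rho=1$ and $a=(a_1,\dots,a_t)^\top$) to showing that $1+a^\top(I+\bar Z_t\bar Z_t^\top)^{-1}a$ is at most a constant times $1+a^\top(I+\hat Z_t\hat Z_t^\top)^{-1}a$; these depend on the factor estimates only through the rotation-invariant Gram matrix $\hat Z_t\hat Z_t^\top$, so the non-identifiability of $\hat Z_t$ up to rotation is harmless, and writing $\hat Z_t\Phi=\bar Z_t+\Delta$ with $\|\Delta\|\le\alpha$ (for $\Phi$ attaining the infimum in $C_t^{\rm latent}$) the inequality $\hat Z_t\hat Z_t^\top=(\bar Z_t+\Delta)(\bar Z_t+\Delta)^\top\preceq 2\bar Z_t\bar Z_t^\top+2\alpha^2 I$ yields it after inverting. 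Substituting into the previous display and taking $\mathbb{E}[\,\cdot\mid C_t]$ then gives the proposition (one can take $c_1=6\sqrt 2$). The step I expect to require the most care is the case analysis of the second paragraph --- in particular case (b): checking that the randomization keeps both arm-probabilities bounded away from $0$ exactly in the regime where the estimate of $\tau^*$ cannot be trusted is the crux of why SCTS, unlike its optimistic counterpart, attains sub-linear regret.
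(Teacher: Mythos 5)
Your proposal is correct, and it reaches the paper's bound by a noticeably different decomposition. The paper argues pathwise: on $C_t^{\rm est}$ it shows $R_{t+1}\le 2\beta_t\bigl(\|\bar x^*_{t+1}\|_{\Omega_t^{-1}}+\|\bar x_{t+1}\|_{\Omega_t^{-1}}\bigr)$ via $\hat\sigma_t=\|e_1\|_{\Omega_t^{-1}}=\|\bar x^*_{t+1}-\bar x_{t+1}\|_{\Omega_t^{-1}}$ (Lemma~\ref{le:single_step_ub1}), converts $\Omega_t^{-1}$-norms to $\bar\Omega_t^{-1}$-norms by a norm-mismatch lemma valid for \emph{all} vectors (Lemma~\ref{le:norm_mismatch}, whose proof invokes a separate rotation-invariance argument for $(\hat\tau_t,\hat\sigma_t)$), and then eliminates the unobserved-arm term $\|\bar x^*_{t+1}\|_{\bar\Omega_t^{-1}}$ by dividing by the probability of selecting the optimal action, lower bounded by $1/4$ (Lemma~\ref{le:opt_action_lb}). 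You instead compute the conditional expected regret exactly as $|\tau^*|q$, bound it by $O(\beta_t\hat\sigma_t)\min(p_0,p_1)$ through the three-case analysis on $|\hat\tau_t|$, recover the $\min(p_0,p_1)$ factor through the reverse-triangle inequality $\mathbb{E}[\|\bar x_{t+1}\|_{\bar\Omega_t^{-1}}\mid\mathcal{F}_{t+1}]\ge\min(p_0,p_1)\|e_1\|_{\bar\Omega_t^{-1}}$, and compare $\hat\sigma_t$ with $\|e_1\|_{\bar\Omega_t^{-1}}$ only in the $e_1$ direction via the Schur-complement/Woodbury identity $\hat\sigma_t^2=\bigl(1+a^\top(I+\hat Z_t\hat Z_t^\top)^{-1}a\bigr)^{-1}$ plus a PSD perturbation bound; I verified this yields $\hat\sigma_t\le\sqrt{2}\,\sqrt{1+\alpha^2}\,\|e_1\|_{\bar\Omega_t^{-1}}\le\sqrt2(1+\alpha)\|e_1\|_{\bar\Omega_t^{-1}}$, and your case bounds actually give the constant $4$ rather than $6$, so $c_1=6\sqrt2$ is safely valid. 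What each route buys: your version disposes of the rotation ambiguity in one line (the bound depends on $\hat Z_t$ only through the rotation-invariant Gram matrix $\hat Z_t\hat Z_t^\top$), needs the $\Omega_t$-versus-$\bar\Omega_t$ comparison only for $e_1$, and is more explicit about conditioning (working given $\mathcal{F}_{t+1}$ and using that $C_t$ is $\mathcal{F}_{t+1}$-measurable, which makes the tower-property step and the treatment of the random $\bar\Omega_t$, $\bar x^*_{t+1}$ airtight, a point the paper's displayed chain glosses over); the paper's route, by proving the norm mismatch for arbitrary vectors and isolating the optimal-action probability as a standalone lemma, is closer to the generic Thompson-sampling template of Abeille--Lazaric and more readily reusable. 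Both proofs extract the same $1/4$ from the uniform sampler on $C_t^{\rm est}$, and your case (b) correctly isolates the regime where randomization, as opposed to optimism, is what prevents linear regret, matching the paper's discussion and Appendix~\ref{sec:fail-UCB}.
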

Finally, Theorem \ref{thm:regret} follows from summing single-step regret and applying the Elliptical Potential Lemma,
\begin{align*}
\mathbb{E}\left[  \sum_{t=0}^{T-1} R_{t+1}\right]
&
\leq \sum_{t=0}^{T-1}  \left(1 - \mathbb{P}\left(C_t \right)\right) |\tau^*|
+
\sum_{t=0}^{T-1} \mathbb{P}\left(C_t \right)\mathbb{E}\left[R_{t+1} \mid C_t\right]
\\
&
\leq
 O(1) + \sum_{t=0}^{T-1} \mathbb{P}\left(C_t \right)\mathbb{E}\left[R_{t+1} \mid C_t\right]
 \\
&
\leq
O(1) +
c_1(1 + \alpha) \beta_{T-1}
\sum_{t=0}^{T-1}
\mathbb{E}\left[\|\bar{x}_{t+1}\|_{\bar{\Omega}_t ^{-1}} \right]
\\
&
\leq
O(1) + c_1(1 + \alpha) \beta_{T-1}  O \left(\sqrt{r T \log T} \right) = O \left(r \sqrt{T} \log(T)\right).
\end{align*}
The second inequality above relies on the fact that the clean event occurs with high probability (Proposition~\ref{prop:clean-event}). The third inequality relies on our bound on the expected single-step regret, Proposition~\ref{thm:single-step-regret}, and the final inequality is simply the elliptical potential lemma.

\subsection{Bounding the single-step regret}

Proposition~\ref{thm:single-step-regret} is a critical enabler of our regret guarantee. We now proceed with that proof. We will begin with stating three lemmas key to the proof. To that end, let $\bar x_{t+1}^{* \top} \triangleq [a^* \ \bar z_{t+1}^\top]$, where $a^* = 1$ if $\tau^* > 0$ and $a^* = 0$ otherwise; and recall that $\bar x_{t+1}^\top \triangleq [a_{t+1} \ \bar z_{t+1}^\top]$. Then we have:

\begin{lemma}
\label{le:single_step_ub1}
On $C_t^{\rm est}$,
$
R_{t+1} \leq
2\beta_t \left(
\|\bar x_{t+1}^* \|_{\Omega_t^{-1}}
+
\|\bar x_{t+1} \|_{\Omega_t^{-1}}
\right)
$.
\end{lemma}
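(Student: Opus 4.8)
The plan is to split on whether the action $a_{t+1}$ matches the optimal action $a^* = \mathbf{1}\{\tau^* > 0\}$. If $a_{t+1} = a^*$ (in particular whenever $\tau^* = 0$), then $R_{t+1} = 0$ and the claimed inequality is immediate since its right-hand side is nonnegative. So it suffices to treat the case $a_{t+1} \neq a^*$, in which exactly one of $a^*, a_{t+1}$ equals $1$ and $R_{t+1} = |\tau^*|$.

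The first step is a ``straddling'' argument: on $C_t^{\rm est}$, whenever SCTS selects the suboptimal action the support of $\mathcal{D}_t^{\rm TS} = \mathrm{Unif}[\hat\tau_t - \beta_t\hat\sigma_t,\ \hat\tau_t + \beta_t\hat\sigma_t]$ must contain $0$, and this forces $|\tau^*|$ to be small. Indeed, if $\tau^* > 0$, then $a_{t+1} = 0$ only if the drawn $\tilde\tau_t < 0$, which requires $\hat\tau_t - \beta_t\hat\sigma_t < 0$; combined with $\hat\tau_t \geq \tau^* - \beta_t\hat\sigma_t/2$ from $C_t^{\rm est}$, this gives $\tau^* \leq \hat\tau_t + \beta_t\hat\sigma_t/2 < \tfrac{3}{2}\beta_t\hat\sigma_t$. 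The case $\tau^* < 0$ is symmetric (now $a_{t+1} = 1$ requires $\hat\tau_t + \beta_t\hat\sigma_t \geq 0$, and $C_t^{\rm est}$ gives $\tau^* \geq \hat\tau_t - \beta_t\hat\sigma_t/2 \geq -\tfrac{3}{2}\beta_t\hat\sigma_t$). In all cases $R_{t+1} = |\tau^*| \leq \tfrac{3}{2}\beta_t\hat\sigma_t \leq 2\beta_t\hat\sigma_t$.

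The key remaining observation is that $\bar x_{t+1}^*$ and $\bar x_{t+1}$ agree in their last $r$ coordinates (both equal $\bar z_{t+1}$) and differ only in the first, so on the event $a_{t+1} \neq a^*$ we have $\bar x_{t+1}^* - \bar x_{t+1} = (a^* - a_{t+1}) e_1 = \pm e_1$, where $e_1$ is the first standard basis vector of $\mathbb{R}^{r+1}$. Since $\hat\sigma_t^2 = (\Omega_t)^{-1}_{1,1} = e_1^\top \Omega_t^{-1} e_1 = \|e_1\|_{\Omega_t^{-1}}^2$, the triangle inequality for the $\Omega_t^{-1}$-norm yields $\hat\sigma_t = \|e_1\|_{\Omega_t^{-1}} = \|\bar x_{t+1}^* - \bar x_{t+1}\|_{\Omega_t^{-1}} \leq \|\bar x_{t+1}^*\|_{\Omega_t^{-1}} + \|\bar x_{t+1}\|_{\Omega_t^{-1}}$. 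Chaining this with $R_{t+1} \leq 2\beta_t\hat\sigma_t$ from the previous step completes the proof.

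I do not anticipate a genuine obstacle here; the proof is short. The two points that require care are (i) the straddling deduction — correctly concluding that the sampling interval contains $0$ whenever the wrong arm is pulled, and tracking constants so that $\tfrac{3}{2} \leq 2$ — and (ii) the identity $\hat\sigma_t = \|e_1\|_{\Omega_t^{-1}}$ together with the fact that $e_1$ is exactly the gap $\bar x_{t+1}^* - \bar x_{t+1}$, which is what converts the ``variance'' $\hat\sigma_t$ into the elliptical-potential-friendly quantities $\|\bar x_{t+1}^*\|_{\Omega_t^{-1}}$ and $\|\bar x_{t+1}\|_{\Omega_t^{-1}}$ that feed into Lemma~\ref{thm:epl} later.
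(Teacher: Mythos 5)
Your proof is correct and follows essentially the same route as the paper's: argue that a suboptimal pull forces the sampling interval to straddle $0$, hence $|\tau^*|\leq 2\beta_t\hat\sigma_t$ on $C_t^{\rm est}$, and then convert $\hat\sigma_t=\|e_1\|_{\Omega_t^{-1}}=\|\bar x_{t+1}^*-\bar x_{t+1}\|_{\Omega_t^{-1}}$ via the triangle inequality. The only difference is cosmetic: the paper states the straddling step as a contrapositive ($0\notin[\hat\tau_t\pm\beta_t\hat\sigma_t]$ implies $a_{t+1}=a^*$), whereas you spell out the two sign cases and track the sharper constant $\tfrac{3}{2}$.
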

This lemma is crucial to connecting single-step regret with an appropriate norm of $\bar x_t$ so as
to eventually facilitate the use of the elliptical potential lemma and will be proved later in
this section. It is interesting to note that~\cite{abeilleLinearThompsonSampling2017} proves a
version of this result which in our setting would eventually yield regret that scaled like $\tilde
O(r^{3/2} \sqrt{T})$ as opposed to the $\tilde O(r \sqrt{T})$ accomplished here; further, the
present proof is short.
We also note that there is a norm mis-match in the lemma above since we ideally want to measure $\bar x_t$ in the $\|\cdot\|_{\bar \Omega_t^{-1}}$ norm. To relate these two norms we note that the following is true on the event that $\bar Z_t$ is well approximated:
\begin{lemma}
\label{le:norm_mismatch}
On $C_t^{\rm latent}$, we have $\|x\|_{\Omega_t^{-1}} \leq (1 + \alpha) \|x\|_{\bar \Omega_t^{-1}}$ for all $x \in \mathbb{R}^{r+1}$.
\end{lemma}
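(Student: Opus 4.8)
The plan is to reduce the claimed norm comparison to a Loewner ordering between the two precision matrices, which then follows from an elementary quadratic-form estimate. Indeed, the inequality $\|x\|_{\Omega_t^{-1}}\le(1+\alpha)\,\|x\|_{\bar\Omega_t^{-1}}$ holding for every $x\in\mathbb{R}^{r+1}$ is equivalent to $\Omega_t^{-1}\preceq(1+\alpha)^2\,\bar\Omega_t^{-1}$, and since matrix inversion reverses the Loewner order on positive definite matrices, this is in turn equivalent to $\bar\Omega_t\preceq(1+\alpha)^2\,\Omega_t$. So it suffices to prove the latter on the event $C_t^{\rm latent}$.

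I would then pass to matrix notation: let $X_t,\bar X_t\in\mathbb{R}^{t\times(r+1)}$ be the design matrices whose $s$th rows are $x_{s,t}^\top=[a_s\ \hat z_{s,t}^\top]$ and $\bar x_s^\top=[a_s\ \bar z_s^\top]$ respectively, so that $\Omega_t=\rho I+X_t^\top X_t$ and $\bar\Omega_t=\rho I+\bar X_t^\top\bar X_t$ with $\rho=1$. Since the action coordinate coincides in $x_{s,t}$ and $\bar x_s$, the perturbation $\Delta_t\triangleq\bar X_t-X_t$ equals $\bar Z_t-\hat Z_t$ with a zero column prepended, and so has the same singular values. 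The event $C_t^{\rm latent}$ only controls $\inf_{\Phi\in\mathcal{O}_r}\|\bar Z_t-\hat Z_t\Phi\|$; to deal with this I would first invoke the rotation-invariance of SCTS (established in the proof of Proposition~\ref{thm:clean-prob}) to assume without loss of generality that the SVD used to form $\hat Z_t$ has been aligned with $\bar Z_t$, so that on $C_t^{\rm latent}$ one has $\|\Delta_t v\|\le\alpha\|v\|$ for every $v$.

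The core step is a pointwise estimate. Fix $v\in\mathbb{R}^{r+1}$ and set $p\triangleq\|X_t v\|$ and $q\triangleq\|v\|$; since $\rho=1$, $v^\top\Omega_t v=p^2+q^2$. By the triangle inequality, $\|\bar X_t v\|\le\|X_t v\|+\|\Delta_t v\|\le p+\alpha q$, hence $v^\top\bar\Omega_t v=q^2+\|\bar X_t v\|^2\le q^2+(p+\alpha q)^2$. It then remains to verify the scalar inequality $q^2+(p+\alpha q)^2\le(1+\alpha)^2(p^2+q^2)$, whose right-hand side minus left-hand side simplifies to $2\alpha\,(p^2-pq+q^2)+\alpha^2 p^2\ge 0$. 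Thus $v^\top\bar\Omega_t v\le(1+\alpha)^2\,v^\top\Omega_t v$ for all $v$, i.e. $\bar\Omega_t\preceq(1+\alpha)^2\,\Omega_t$, which is what we wanted.

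There is no real obstacle beyond two points of care. First, $C_t^{\rm latent}$ pins down $\hat Z_t$ only up to an orthogonal transformation, and the cleanest way around this is the rotation-invariance reduction above; one could alternatively carry a rotation $\tilde\Phi=\mathrm{diag}(1,\Phi)$ through the argument. Second, the regularizer is essential: the identity $v^\top\Omega_t v=\rho\|v\|^2+\|X_t v\|^2$ must retain the $\rho\|v\|^2$ term, since for $\rho=0$ the scalar inequality — and the whole claim — fails for any $v$ in the kernel of $X_t$. It is precisely the ridge term $\rho=1$ that turns the additive error bound $\|\Delta_t\|\le\alpha$ into the multiplicative factor $1+\alpha$.
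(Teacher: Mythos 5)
Your proof is correct and follows essentially the same route as the paper: after the same rotation-invariance reduction, both arguments bound the quadratic form of $\bar\Omega_t$ by that of $\Omega_t$ using $\|\hat Z_t-\bar Z_t\|\leq\alpha$ together with the ridge term $\rho=1$, and then transfer the factor $(1+\alpha)$ to the inverse norms. The only differences are cosmetic: you package the transfer as Loewner anti-monotonicity of matrix inversion and control $\|\bar X_t v\|$ by the triangle inequality, whereas the paper expands $\bar\Omega_t-\Omega_t$ and uses Cauchy--Schwarz, then passes to $\|\cdot\|_{\Omega_t^{-1}}$ via the dual (variational) characterization of the norm.
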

The proof of this Lemma is provided in Appendix \ref{sec:proof-norm-mismatch}. The proof crucially uses the fact that the actions selected under distinct rotations of $\hat Z_t$ are in fact equal in distribution so that we can assume a canonical rotation without loss.
Finally, we observe that the probability that the optimal action is selected is lower bounded by a constant:
\begin{lemma}
\label{le:opt_action_lb}
On $C_t$, the optimal action is selected with at least constant probability,
\[
\mathbb{P}\left(a_{t+1} = a^* \mid C_{t} \right) \geq \frac{1}{4}.
\]
\end{lemma}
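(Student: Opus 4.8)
The plan is to reduce the lemma to an elementary fact about a uniform random variable, with a short measurability preamble. First I would write the Thompson sample, conditionally on $\mathcal{F}_{t+1}$, as $\tilde\tau_t = \hat\tau_t + \beta_t \hat\sigma_t U$ with $U \sim \mathrm{Unif}[-1,1]$ drawn independently of $\mathcal{F}_{t+1}$; since $\hat\tau_t$, $\hat\sigma_t$ and hence the clean event $C_t$ are all $\mathcal{F}_{t+1}$-measurable, it then suffices to establish the pointwise bound $\mathbb{P}(a_{t+1} = a^* \mid \mathcal{F}_{t+1}) \geq 1/4$ on $C_t$ and take $\mathbb{E}[\,\cdot \mid C_t]$. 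I would also record at the outset that $\Omega_t \succeq \rho I = I$, so $\hat\sigma_t^2 = (\Omega_t^{-1})_{1,1} > 0$ and (with $\beta_t > 0$) the sampling interval is non-degenerate, ruling out trivial edge cases.

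The core is a two-case split using only $C_t^{\rm est}$. If $\tau^* > 0$ then $a^* = 1$ and $a_{t+1} = a^*$ precisely when $\tilde\tau_t \geq 0$, i.e. $U \geq -\hat\tau_t/(\beta_t\hat\sigma_t)$; on $C_t^{\rm est}$ we have $\hat\tau_t \geq \tau^* - \beta_t\hat\sigma_t/2 > -\beta_t\hat\sigma_t/2$, so $-\hat\tau_t/(\beta_t\hat\sigma_t) < 1/2$ and $\mathbb{P}(U \geq -\hat\tau_t/(\beta_t\hat\sigma_t) \mid \mathcal{F}_{t+1}) \geq \mathbb{P}(U \geq 1/2) = 1/4$. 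If $\tau^* \leq 0$ then $a^* = 0$ and $a_{t+1} = a^*$ precisely when $\tilde\tau_t < 0$, i.e. $U < -\hat\tau_t/(\beta_t\hat\sigma_t)$; on $C_t^{\rm est}$ we have $\hat\tau_t \leq \tau^* + \beta_t\hat\sigma_t/2 \leq \beta_t\hat\sigma_t/2$, so $-\hat\tau_t/(\beta_t\hat\sigma_t) \geq -1/2$ and $\mathbb{P}(U < -\hat\tau_t/(\beta_t\hat\sigma_t) \mid \mathcal{F}_{t+1}) \geq \mathbb{P}(U < -1/2) = 1/4$. In either case the bound holds pointwise on $C_t^{\rm est} \supseteq C_t$, so conditioning on $C_t$ and taking expectations finishes the proof.

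There is no substantive obstacle here; the only care needed is the measurability/independence bookkeeping in the first step, so that conditioning on the clean event does not interfere with the uniform sampling randomness, together with the simple observation that $C_t^{\rm est}$ constrains the estimation error to half the sampling half-width $\beta_t\hat\sigma_t$ — this factor of two is exactly what forces at least a quarter of the interval's mass onto the side of $0$ containing $a^*$. Note also that $C_t^{\rm latent}$ is not used for this lemma: conditioning on the smaller event $C_t$ is harmless because the bound is pointwise on the larger event $C_t^{\rm est}$.
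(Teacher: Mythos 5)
Your proof is correct and follows essentially the same route as the paper's: use the $C_t^{\rm est}$ bound $|\tau^*-\hat\tau_t|\le\beta_t\hat\sigma_t/2$ to show the uniform Thompson sample $\mathrm{Unif}[\hat\tau_t-\beta_t\hat\sigma_t,\hat\tau_t+\beta_t\hat\sigma_t]$ places at least a quarter of its mass on the side of $0$ corresponding to $a^*$, treating the two signs of $\tau^*$ symmetrically. The only difference is that you spell out the measurability/independence bookkeeping and both cases explicitly, which the paper compresses into two lines and a ``by symmetry.''
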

So equipped, we have
\[
\begin{split}
  R_{t+1}
  &\leq
  2\beta_t \left(
  \|\bar{x}_{t+1}^*\|_{\Omega_t^{-1}}
  +
   \|\bar{x}_{t+1}\|_{\Omega_t ^{-1}}
    \right)
   \\
   &\leq
   2\beta_t (1 + \alpha) \left(
   \| \bar{x}_{t+1}^* \|_{\bar{\Omega}_t^{-1}} + \| \bar{x}_{t+1} \|_{\bar{\Omega}_t ^{-1}}
   \right)
    \\
   &\leq 2\beta_t (1 + \alpha ) \left(\frac{\mathbb{E}[ \|\bar{x}_{t+1}\|_{\bar{\Omega}_t^{-1}} \mid C_t]}{\mathbb{P}\left(\bar{x}_{t+1} = \bar{x}_{t+1}^* \mid C_{t}\right)} + \|\bar{x}_{t+1}\|_{\bar{\Omega}_t ^{-1}} \right)
   \\
   &\leq 2\beta_t (1 + \alpha ) \left(4 \mathbb{E}[ \|\bar{x}_{t+1}\|_{\bar{\Omega}_t^{-1}} \mid C_t] + \|\bar{x}_{t+1}\|_{\bar{\Omega}_t ^{-1}} \right)
\end{split}
\]
where the first inequality is Lemma~\ref{le:single_step_ub1}, the second inequality is via Lemma~\ref{le:norm_mismatch}, the third is simply by the law of total expectation, and the final inequality is via Lemma~\ref{le:opt_action_lb}. Taking expectations conditioned on $C_t$ now yields the result of the proposition. In the remainder of this Section, we prove Lemmas~\ref{le:single_step_ub1} and \ref{le:opt_action_lb}.
\newline
\newline
\noindent \textbf{\textsf Proof of Lemma~\ref{le:single_step_ub1}:} First observe that if $0 \notin [\hat \tau_{t} - \beta_t \hat \sigma_t, \hat \tau_{t} + \beta_t \hat \sigma_t]$, then $a_{t+1} = a^*$. On the other hand, if $0 \in [\hat \tau_{t} - \beta_t \hat \sigma_t, \hat \tau_{t} + \beta_t \hat \sigma_t]$, then $|\tau^*| \leq 2 \beta_t \hat \sigma_t$. Consequently, if $a_{t+1} \neq a^*$, then $R_{t+1} = |\tau^*| \leq 2 \beta_t \hat \sigma_t$. Further,
$$
\hat \sigma_t = \|e_1\|_{\Omega_t^{-1}} = \| x^*_{t+1} - \bar x_{t+1}\|_{\Omega_t^{-1}} \leq \| x^*_{t+1} \|_{\Omega_t^{-1}} + \| \bar x_{t+1}\|_{\Omega_t^{-1}},
$$
completing the proof.
\newline
\newline
\noindent \textbf{\textsf Proof of Lemma~\ref{le:opt_action_lb}:} Suppose $\tau^{*} \geq 0.$ Then $a_{t+1}=a^{*}$ whenever $\tilde{\tau}_{t} \geq 0$. Note that
\begin{align*}
\mathbb{P}(\tilde{\tau}_{t} \geq 0) \geq \mathbb{P}(\tilde{\tau}_{t} \geq \tau^{*}) \geq \mathbb{P}(\tilde{\tau}_t \geq \hat{\tau}_{t}+\beta_{t} \hat{\sigma}_t / 2) = \frac{1}{4}.
\end{align*}
When $\tau^{*} < 0$, the bound holds by symmetry.

\newcommand{\tauSC}{\tau^{\mathrm{SC}}}
\newcommand{\tauTS}{\tau^{\mathrm{SCTS}}}
\providecommand{\norm}[1]{\left\lVert\mspace{1mu} #1 \mspace{1mu}\right\rVert}
\providecommand{\R}{\mathbb{R}}

\section{Inference}\label{sec:inference}

Having run SCTS up to time $T$, we must produce an estimate of the treatment effect, $\tau^{\rm SCTS}$.  To this end, we propose the use of one of two estimators. The first is simply to set $\tau^{\rm SCTS} = \hat \tau_T$ if the intervention was used over at least $T/2$ epochs and to set $\tau^{\rm SCTS} = 0$ otherwise. A distinct alternative considered in this section is to compute the usual synthetic control estimator~\cite{abadie2010synthetic} and to set $\tau^{\rm SCTS}$ to this value on the event that the intervention was used over at least $T/2$ epochs (and to $0$ otherwise). This section will show that such an estimator
\begin{itemize}
\item enjoys identical confidence intervals to the vanilla SC estimator on the event that the treatment effect is non-negative;
\item is with high probability $0$ when the treatment effect is negative.
\end{itemize}
As such we make precise the promise set out earlier in the paper of allowing for precise estimates of the treatment effect when they matter (i.e., when the treatment effect is non-negative, so as to permit a cost-benefit analysis of implementation, say), while continuing to conclude that the treatment is ineffective when it is not.%
\footnote{It may be feasible to construct estimators for which we can provide limiting distributions with additional distributional assumptions on the setup such as in  \cite{li2020statistical,deshpandeAccurateInferenceAdaptive2018} or to more complicated synthetic control estimators \cite{doudchenko2016balancing,li2017estimation,arkhangelsky2019synthetic}. We leave this for future work.}

\subsection{Vanilla Synthetic Control}
Before describing  our estimator we make precise what can be accomplished with synthetic control and a fixed design \cite{abadie2010synthetic}. In particular, SC seeks to find a linear combination of donor units to match the experimental unit based on the observations from the pre-treatment period. SC assumes the existence of weights $w_1, w_2, \dotsc, w_{n}$ such that
$
y^{0}_t = \sum_{i=1}^{n} w_i y^{i}_{t}
$
for all times $t$ in the pre-treatment period $\{-T_0+1, \dots, 0\}$. These weights are required to be non-negative, and must sum to one. This requirement that the synthetic control be constructed as a {\em convex} (as opposed to affine) combination of donor units serves effectively as a regularization mechanism.
SC estimates the treatment effect $\tauSC$ by averaging over the differences between the synthetic control so constructed and the observed $y^{0}_t$ over the treatment period, i.e.,
\begin{align*}
\tau^{\text{SC}} := \frac{1}{T} \sum_{t=1}^{T}  \left(y^{0}_t - \sum_{i=1}^{n} w_i y^{i}_t\right).
\end{align*}

It is difficult to calculate a limiting distribution for $\tau^{\rm SC}$ absent further distributional assumptions. That said, the analysis of \cite{abadie2010synthetic} allows for the following high-probability confidence intervals as a corollary (see Appendix B in \cite{abadie2010synthetic}):
\begin{proposition}\label{prop:synthetic-control-guarantee}
With probability $1-O(\delta)$,
\begin{align*}
|\tauSC - \tau^{*}| \lesssim \frac{\sigma}{\sqrt{T}}\sqrt{\log(1/\delta)} + \frac{c_2 \sigma}{\sqrt{c_1 T_0}} \sqrt{\log(1/\delta)+\log(n)}.
\end{align*}
\end{proposition}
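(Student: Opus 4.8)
\noindent\emph{Proof proposal.} The plan is to split $\tauSC - \tau^{*}$ into a treatment‑period noise term and a ``bias'' term that measures how well the fitted donor weights $\hat w$ reproduce the treated unit's factor loadings, and to bound each in turn. Since $a_t=1$ throughout the treatment period, \eqref{eq:reward-model} and \eqref{eq:donor-model} give
\[
\tauSC - \tau^{*}
= \Big\langle\, \lambda^{*} - \sum_{i=1}^{n}\hat w_i \lambda^{i},\ \frac1T\sum_{t=1}^{T}\bar z_t \,\Big\rangle
\;+\; \frac1T\sum_{t=1}^{T}\Big(\epsilon^{0}_t - \sum_{i=1}^{n}\hat w_i \epsilon^{i}_t\Big).
\]
The second term is the easy one. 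The vanilla SC weights $\hat w$ are computed from pre‑treatment data only, hence are independent of $\{\epsilon^{i}_t\}_{t\ge 1}$; conditioning on $\hat w$, this term is a centered Gaussian of variance $\tfrac{\sigma^{2}}{T}\big(1+\|\hat w\|_2^{2}\big)\le \tfrac{2\sigma^{2}}{T}$, where we used $\|\hat w\|_2\le\|\hat w\|_1=1$ from the simplex constraint. A Gaussian tail bound then yields the $\tfrac{\sigma}{\sqrt T}\sqrt{\log(1/\delta)}$ contribution. By Assumption~\ref{ass:bar_y_decomp} the first term is at most $B\,\|\widehat{\Delta}\|_2$ where $\widehat{\Delta}\triangleq \lambda^{*}-\sum_i\hat w_i\lambda^{i}$, so the whole problem reduces to bounding the factor‑loading mismatch $\|\widehat{\Delta}\|_2$.

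To control $\|\widehat{\Delta}\|_2$ I would exploit the optimality of $\hat w$ in the pre‑treatment least‑squares problem against the synthetic‑control‑feasible weights $w^{*}$ (those with $\sum_i w^{*}_i\lambda^{i}=\lambda^{*}$, whose existence is precisely the SC assumption). Writing the pre‑treatment residual at a weight $w$ as $\langle \lambda^{*}-\sum_i w_i\lambda^{i},\bar z_t\rangle + (\epsilon^{0}_t-\sum_i w_i\epsilon^{i}_t)$, optimality gives $\sum_{t=-T_0+1}^{0}\big(\langle\widehat{\Delta},\bar z_t\rangle + \epsilon^{0}_t-\sum_i\hat w_i\epsilon^{i}_t\big)^{2}\le \sum_{t=-T_0+1}^{0}\big(\epsilon^{0}_t-\sum_i w^{*}_i\epsilon^{i}_t\big)^{2}$, and a triangle inequality in $\ell_2(\{-T_0+1,\dots,0\})$ converts this into a bound on $\sum_{t=-T_0+1}^{0}\langle\widehat{\Delta},\bar z_t\rangle^{2}$ in terms of $\sup_{w\in\Delta_n}\sum_{t=-T_0+1}^{0}(\epsilon^{0}_t-\sum_i w_i\epsilon^{i}_t)^{2}$. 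That supremum is handled by convexity, $(\sum_i w_i\epsilon^{i}_t)^{2}\le\max_i(\epsilon^{i}_t)^{2}$, together with a Gaussian maximal inequality and $\chi^{2}$‑concentration, which is where the $\log n$ enters. Finally, invoking the (assumed) lower bound $\lambda_{\min}\big(\tfrac1{T_0}\sum_{t=-T_0+1}^{0}\bar z_t\bar z_t^{\top}\big)\ge c_1$ — the condition that makes $c_1$ depend on $r$ — lets one invert $\sum_{t=-T_0+1}^{0}\langle\widehat{\Delta},\bar z_t\rangle^{2}\ge c_1 T_0\|\widehat{\Delta}\|_2^{2}$ to get $\|\widehat{\Delta}\|_2\lesssim \tfrac{c_2\sigma}{\sqrt{c_1 T_0}}\sqrt{\log(1/\delta)+\log n}$ with $c_2$ absorbing the dependence on $B$; combined with the noise term this gives the claim. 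This is essentially the route of Appendix~B in \cite{abadie2010synthetic}, specialized to Gaussian noise.

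The step I expect to be the real obstacle is obtaining the $T_0^{-1/2}$ rate in the bias bound rather than a vacuous $O(1)$ one: the crude triangle‑inequality step above only shows $\sum_{t}\langle\widehat{\Delta},\bar z_t\rangle^{2}=O(T_0\sigma^{2}\log n)$, hence $\|\widehat{\Delta}\|_2=O(\sigma\sqrt{\log n/c_1})$, which does not shrink with $T_0$. Extracting the extra $T_0^{-1/2}$ requires using the first‑order optimality conditions of the constrained fit more carefully — morally, $\widehat{\Delta}$ behaves like the coefficient of a regression of the mean‑zero noise residuals onto the $r$‑dimensional span of the pre‑treatment factors, an object that concentrates at rate $T_0^{-1/2}$ — and carrying a uniform‑over‑$\Delta_n$ control of the cross term between $\langle\widehat{\Delta},\bar z_t\rangle$ and the noise. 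The simplex constraint is doing essential work on both fronts: it gives $\|\hat w\|_2\le 1$ for the noise term, and it collapses the ``effective dimension'' of the donor pool to the $\log n$ penalty via the convexity bound, avoiding a $\mathrm{poly}(n)$ factor.
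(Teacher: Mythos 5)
Your decomposition of $\tauSC-\tau^{*}$ into a loading-mismatch term and a treatment-period noise term, and your treatment of the latter (independence of the pre-treatment weights from post-treatment noise, $\|w\|_2\le\|w\|_1=1$, Gaussian tail), coincide with the paper's proof. The gap is exactly the step you flag yourself: the $T_0^{-1/2}$ rate for the loading mismatch. Your fallback argument (basic inequality against a feasible $w^{*}$, triangle inequality, uniform control over the simplex, then inversion via $c_1$) only yields $\|\lambda^{*}-\sum_i w_i\lambda^{i}\|_2=O(\sigma\sqrt{\log n/c_1})$, which does not shrink with $T_0$ and hence does not give the stated bound; the "first-order optimality conditions used more carefully" remain a sketch, so the proposal does not establish the proposition.

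The paper avoids this difficulty entirely by not analyzing the weights as an inexact constrained least-squares fit. It takes the SC premise literally as stated just before the proposition: the weights satisfy the exact pre-treatment match $y^{0}_t=\sum_{i=1}^{n}w_i y^{i}_t$ for all $t\in\{-T_0+1,\dots,0\}$. Substituting the factor model into this identity and multiplying by the pseudo-inverse $Z^{-1}$ of the $T_0\times r$ pre-treatment factor matrix gives the closed form $\lambda^{*}-\sum_i w_i\lambda^{i}=Z^{-1}\bigl(\sum_i w_i E_i-E_0\bigr)$: the mismatch is a linear image of pre-treatment noise, with no empirical-process or optimality argument needed. The bias term then equals $\bigl(\sum_i w_i E_i-E_0\bigr)^{\top}a$ with $a=Z^{-1\top}\bigl(\frac{1}{T}\sum_{t=1}^{T}\bar z_t\bigr)$ and $\|a\|\le c_2/\sqrt{c_1T_0}$ -- this is where the $T_0^{-1/2}$ you are missing comes from, via $\sigma_r(Z)\ge\sqrt{c_1T_0}$. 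The dependence between $w$ and the pre-treatment noise (your other worry) is dispatched by the elementary bound $|w^{\top}E_{1:n}a|\le\max_{i\in[n]}|E_i^{\top}a|$, using only $w\ge0$, $\sum_i w_i=1$, followed by a union bound over $n$ Gaussians, which produces the $\sqrt{\log(1/\delta)+\log n}$ factor. If you want to pursue your route (weights defined as an argmin rather than assumed to fit exactly), you are proving a strictly harder statement than the paper's, and you would indeed need the finer analysis you only gesture at; as written, the proposal leaves the main term of the proposition unproved.
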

The constant
$c_1 = \sigma_{r} ({1}/{T_0}\sum_{t=-T_0+1, \dotsc, 0} \bar{z}_t \bar{z}_t^{\top})$
is a measure of how well conditioned-the subspace spanned by the common factors over the pre-treatment period is; and $c_2 \triangleq \max_{t} \|z_{t}\|$.
In the typical case with $c_1=c_2=O(1)$ and $\delta=1/\mathrm{poly}(n)$, we have
$$
|\tauSC - \tau^{*}| = \tilde{O}\left(1/\sqrt{T_0}+1/\sqrt{T}\right),
$$
which is optimal up to logarithmic terms.

\subsection{Using the Synthetic Control Estimator with an SCTS Design}

We now describe one potential estimator for the setting where the experiment design is determined
by SCTS. Let $w_1, w_2, \dotsc, w_{n}$ be the same weights used by the vanilla SC estimator; note
that these are computed using data available exclusively over the pre-treatment period. Now let $M
\triangleq \{t \geq 1 : a_t =1\}$ be the epochs over which the intervention is applied under our dynamic SCTS design, and denote by $\tilde \tau^{\rm SC}$, the average difference between observed outcomes and the synthetic control over those epochs,
\[
\tilde \tau^{\rm SC} \triangleq \frac{1}{|M|} \sum_{t\in M} \left(y^{0}_t - \sum_{i=1}^{n} w_i y^{i}_t\right).
\]
We then propose the following estimator, $\tau^{\rm SCTS}$ for the treatment effect,
\begin{equation*}
\tau^{\rm SCTS}
\triangleq
\begin{cases}
\tilde \tau^{\rm SC} &\text{if $|M| \geq T/2$,}\\
0 &\text{if $|M| < T/2$.}
\end{cases}
\end{equation*}
This estimator then enjoys high-probability confidence intervals analogous to the vanilla synthetic control setting:
\begin{proposition}\label{prop:SCTS-full-inference-result}
Let $\tau^*$ be fixed. Then,
\begin{itemize}
\item[(a)]  When $\tau^{*} \geq 0$, with probability $1-O(\delta) - \tilde{O}(1/\sqrt{T})$,
\begin{align*}
|\tauTS - \tau^{*}| \lesssim \frac{\sigma}{\sqrt{T}}\sqrt{\log(1/\delta)} + \frac{c_2\sigma}{c_1 \sqrt{T_0}} \sqrt{\log(1/\delta) + \log(n)}.
\end{align*}\item[(c)] When $\tau^{*} < 0$, with probability at least $1 - \tilde{O}(1/\sqrt{T})$,
$$
\tauTS = 0.
$$
\end{itemize}
\end{proposition}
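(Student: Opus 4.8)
The plan is to split the argument into two independent pieces: (i) a concentration bound for $\tilde\tau^{\rm SC}$ that holds on the event $\{|M|\ge T/2\}$ no matter how the adaptively chosen set $M$ is formed, and (ii) a bound on $\mathbb{P}(|M|\ge T/2)$ (resp. $\mathbb{P}(|M|<T/2)$) obtained from the regret guarantee of Theorem~\ref{thm:regret} via Markov's inequality. Part (c) will follow from (ii) alone, and part (a) from combining (i) and (ii) (with the case $\tau^*=0$ handled by (i) together with the trivial observation that $\tauTS=0=\tau^*$ when $|M|<T/2$).

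For (i), I would write, for each $t\in M$ (i.e.\ $a_t=1$),
$$
y^0_t - \sum_{i} w_i y^i_t \;=\; \tau^* + \big\langle \lambda^* - \textstyle\sum_{i} w_i \lambda^i,\ \bar z_t\big\rangle + \xi_t , \qquad \xi_t \triangleq \epsilon^0_t - \textstyle\sum_{i} w_i \epsilon^i_t ,
$$
so that $\tilde\tau^{\rm SC}-\tau^* = \big\langle \lambda^* - \sum_i w_i\lambda^i,\ \tfrac{1}{|M|}\sum_{t\in M}\bar z_t\big\rangle + \tfrac{1}{|M|}\sum_{t\in M}\xi_t$. The first (bias) term is controlled exactly as in the analysis underlying Proposition~\ref{prop:synthetic-control-guarantee}: that argument bounds $\|\lambda^*-\sum_i w_i\lambda^i\|$ through the quality of the pre-treatment fit (and hence through $c_1$, $T_0$, $n$, $\delta$), and it only ever sees the treatment epochs through a convex combination of the $\bar z_t$'s, whose norm is at most $\max_t\|\bar z_t\|=c_2$; thus it transfers verbatim with $\tfrac1T\sum_{t=1}^T$ replaced by $\tfrac1{|M|}\sum_{t\in M}$. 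For the noise term, the key point is that $\mathbf{1}\{t\in M\}=a_t$ is measurable with respect to the information available when the action at $t$ is chosen, which is independent of the period-$t$ noise, while $\xi_t$ is mean zero with variance $\sigma^2(1+\|w\|_2^2)\le 2\sigma^2$ (since $w$ lies on the simplex). Hence $\{a_t\xi_t\}$ is a martingale difference sequence and $\sum_{t=1}^T a_t\xi_t$ has predictable quadratic variation at most $2\sigma^2|M|$, so a self-normalized martingale concentration bound (as used elsewhere in the paper) gives $\big|\sum_{t\in M}\xi_t\big|\lesssim \sigma\sqrt{|M|\log(1/\delta)}$ with probability $1-\delta$; on $\{|M|\ge T/2\}$ the noise term is therefore $\lesssim \tfrac{\sigma}{\sqrt T}\sqrt{\log(1/\delta)}$. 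A union bound yields: on $\{|M|\ge T/2\}$, with probability $1-O(\delta)$, $|\tilde\tau^{\rm SC}-\tau^*|$ is bounded by the right-hand side of part (a).

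For (ii), recall $R_{t}=|\tau^*|(1-a_t)$ when $\tau^*\ge0$ and $R_t=|\tau^*|a_t$ when $\tau^*<0$, so Theorem~\ref{thm:regret} gives $\mathbb{E}[T-|M|]=R(T)/|\tau^*|=O(r\sqrt T\log T/|\tau^*|)$ in the former case and $\mathbb{E}[|M|]=O(r\sqrt T\log T/|\tau^*|)$ in the latter. Markov's inequality then gives $\mathbb{P}(|M|<T/2)\le 2R(T)/(|\tau^*|T)=\tilde O(1/\sqrt T)$ when $\tau^*>0$, and $\mathbb{P}(|M|\ge T/2)\le 2R(T)/(|\tau^*|T)=\tilde O(1/\sqrt T)$ when $\tau^*<0$ (treating $\tau^*$ and $r$ as fixed). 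Assembling: when $\tau^*>0$, with probability $1-\tilde O(1/\sqrt T)$ we have $|M|\ge T/2$ so $\tauTS=\tilde\tau^{\rm SC}$, and then (i) applies, giving part (a); when $\tau^*=0$, either $|M|<T/2$ and $\tauTS=0=\tau^*$, or $|M|\ge T/2$ and (i) applies, so part (a) holds with probability $1-O(\delta)$ without using the regret bound at all; and when $\tau^*<0$, with probability $1-\tilde O(1/\sqrt T)$ we have $|M|<T/2$, hence $\tauTS=0$, which is part (c).

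The only real obstacle I anticipate is bookkeeping rather than anything structural: verifying that the bias portion of the \cite{abadie2010synthetic} argument genuinely depends on the treatment epochs only through $\big\|\tfrac1{|M|}\sum_{t\in M}\bar z_t\big\|\le c_2$ — so that nothing is spoiled by $M$ being random and adaptive — and selecting a martingale concentration inequality that accommodates the random number of summands $|M|$ without leaking an extraneous $\log T$ into the $\sigma/\sqrt T$ term. Both are routine once one notes that $a_t$ is predictable with respect to the natural filtration and that the weights $w$ depend only on pre-treatment data, hence are independent of the treatment-period noise.
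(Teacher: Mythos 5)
Your proposal matches the paper's own proof in both structure and substance: you re-run the synthetic-control error decomposition over the treated epochs, handling the adaptively selected noise term via a martingale (Azuma-type) concentration bound exactly as the paper does, and you control $\mathbb{P}(|M|<T/2)$ (resp.\ $\mathbb{P}(|M|\ge T/2)$) by applying Markov's inequality to the regret guarantee of Theorem~\ref{thm:regret}, which is precisely the paper's parts (b) and (c). The only difference is cosmetic — your explicit treatment of the edge case $\tau^*=0$ is slightly more careful than the paper's statement, which leaves that case implicit.
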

The appendix proves a stronger version of this result that makes precise the dependence of the rate on $\tau^*$, and allows for meaningful confidence intervals provided $|\tau^*| = \omega(1/\sqrt{T})$. As outlined at the outset, the result above shows high probability confidence intervals analogous to the vanilla synthetic control setting on the event that the treatment effect is non-negative. On the event where the treatment effect is negative, we see that we only learn that this is the case but do not recover a precise estimate of the effect. As argued earlier, in the practical settings we care about, a precise estimate of the treatment effect is typically not as important when the effect is negative. As a result the regret gains made possible via the use of SCTS likely constitute a beneficial tradeoff relative to the inference possible under $\tauTS$.

As discussed earlier, the high-probability confidence intervals described in this section are typically quite conservative in practice. As such, in our experiments, we will explore the use of hypothesis tests based on a certain re-randomization of the data and confidence intervals derived from inverting these tests.

\section{Experiments}

This section undertakes an experimental evaluation of SCTS using both synthetic and real-world datasets. In the latter datasets, it is unclear that the synthetic control model holds (i.e., it is unclear that the observed data can be explained by a low rank factor model). We compare SCTS against both the standard fixed design (where $a_t=1$ over the entire treatment period), as well as a switchback design (where $a_t$ is set to $1$ with probability $1/2$ independently for each epoch in the treatment period) \cite{brandt1938tests}. In the case of these incumbent designs, we estimate the treatment effect using state-of-the-art estimators gleaned from recent advances in `robust' synthetic control and panel data regression. Our experiments will illustrate the following salient features of SCTS:

\begin{enumerate}
  \item The fraction of time SCTS chooses a sub-optimal action is small. In contrast, by definition, the switchback design picks a sub-optimal action half of the time, while the fixed design picks the sub-optimal action all of the time when the treatment effect is negative. Despite the material reduction in regret, our estimator of the treatment effect under SCTS achieves relative error comparable to the competing designs when the treatment effect is positive. This same estimator correctly identified that the treatment effect was negative in all experiments where this was the case.
  \item In the case of real world data where a low-rank factor model provides at best a `rough' fit to the observations, we observe the same relative merits alluded to above. In that case, however, we observe an additional merit for SCTS, where the relative error in estimating the treatment effect is actually substantially lower than that for the fixed design (and comparable to that of the switchback design). The reason is that in the SCTS setting, our estimator can take advantage of data collected over the treatment period in estimating factor loadings and this appears to be particularly valuable when the common factor process is non-stationary.
  \item Re-randomization tests \cite{fisher1966design} provide a means to construct well-powered hypothesis tests and confidence intervals for SCTS, despite the inferential challenges introduced by adaptive treatment assignment.
\end{enumerate}



\subsection{Low Regret and Estimation Error on Synthetic Data}

Our first set of experiments seeks to establish that SCTS incurs low regret while recovering the treatment effect accurately. We first consider this on a synthetic set of problems that we describe next.
\newline
\newline
\noindent {\em Experimental setup:} We experiment with a synthetically generated dataset. We generate the latent factors and loadings $\Lambda, \bar{Z}_{T}$ with entries distributed i.i.d. as $\mathcal{N}(0, 1)$, with $n=1000, T_0=500, T=500$ and $r=50$. Similarly, we generate $\lambda^*$ with i.i.d. $\mathcal{N}(0, 1)$ entries. We experiment with a signal-to-noise ratio of $1$; i.e., $|\tau^*| / \sigma = 1$, and vary the sign of $\tau^*$.
\newline
\newline
\noindent {\em Estimation:} There are a variety of treatment effect estimators we could use for any given experimental design. For SC, we report the best performance over several estimators from the literature; the best performing estimator on our synthetic examples is the robust synthetic control estimator proposed in \cite{amjad2018robust}. For the switchback design, we use our ridge regression estimator $\hat{\tau}_{T}$. For SCTS, we set $\tau^{\rm SCTS} = \hat{\tau}_{T}$ when $M \geq \frac{N}{2}$, and 0 otherwise.
\newline
\newline
\noindent {\em Results:} Table~\ref{tbl:synthetic} shows mean regret and estimation error for each algorithm averaged over $50$ problem instances.
The results comport favorably with the salient features we outlined for SCTS at the outset. Specifically:
\begin{enumerate}
\item SCTS picks a sub-optimal action over at most $2\%$ of the available testing epochs, thereby mitigating any cost of experimentation. By construction this number is $50 \%$ for the switchback design and $100\%$ for the fixed design when the treatment effective is negative.
\item Despite the above gain, we continue to recover the treatment effect accurately with the SCTS design. Specifically, the relative RMSE is comparable to the switchback and fixed design cases when the treatment effect is positive. The SCTS estimator correctly identified that the treatment effect was negative in all instances where this was the case.
\item From the plots in Figure~\ref{fig:synthetic}, we see that the relative merits of SCTS alluded to above are robust to the choice of experimentation horizon $T$.
\end{enumerate}


\begin{table}[htbp]
  \centering
\begin{tabular}{|c|c|rrr|}
\hline
 & &  SCTS & Switchback  & SC  \\
\hline
 $\tau^* < 0$ & Regret &  0.02 & 0.50 & 1.00 \\
 $\tau^* > 0$ & Regret & 0.01 & 0.50 & 0.00 \\
 $\tau^* > 0$ & RMSE & 0.06 & 0.08 & 0.06 \\
  \hline
\end{tabular}
\caption{Regret and relative RMSE, averaged over 50 random synthetic instances. Regret is normalized between 0 and 1. RMSE is normalized by $|\tau^*|$. SCTS virtually eliminates the cost of experimentation, while providing estimates of $\tau^*$ of the same quality as more costly alternatives when $\tau^* > 0$.}
  \label{tbl:synthetic}
\end{table}

\begin{figure}[htbp]
  \centering
  \includegraphics[width=\linewidth]{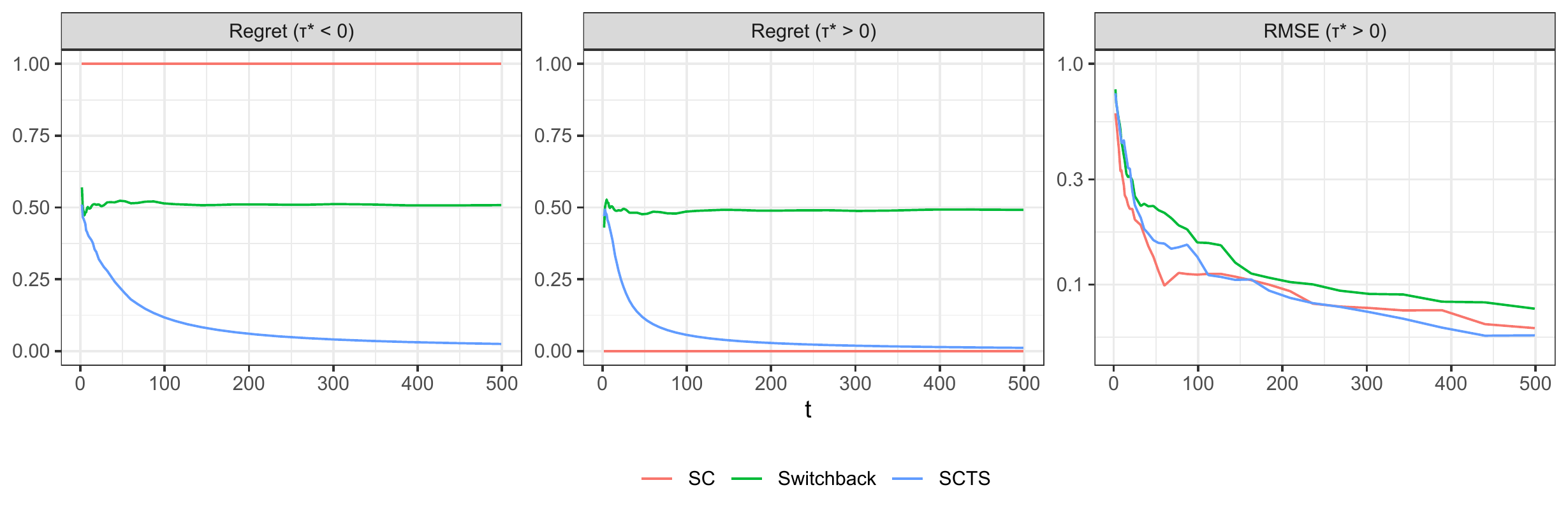}
  \caption{Regret (normalized by $t |\tau^*|$) and RMSE (normalized by $|\tau^*|$) over time, for the synthetic dataset. Unlike SC and switchback, SCTS exhibits regret vanishing over time in addition to a small RMSE. These qualities are robust to the experimentation horizon $T$.}
  \label{fig:synthetic}
\end{figure}

\subsection{Real-world data}
Our second set of experiments serves the same purpose as the earlier set, except that this time we consider real world data. As such, there is no true low-rank factor model describing the data; at best we may hope that such a model provides a good approximation to the observed data. As before, our goal will be to measure regret for SCTS as well as estimation error.
\newline
\newline
\noindent {\em Experimental Setup: } We adapt the Rossman Store Sales dataset\footnote{https://www.kaggle.com/c/rossmann-store-sales/}, which contains daily sales data for $n = 1115$ drug stores over $942$ days.  We take $T_0 = T = 471$. Letting $O \in \mathbb{R}^{n \times (T + T_0)}$ be the matrix of observations in the dataset, we generate an ensemble of 50 instances as follows. For each instance, we select a random store $j$ to be the experimental unit, with outcomes $y_{t}^{0} = O_{j, t} + \tau^* a_t$. The remaining stores constitute the control units, with observations $y_{t}^{i} = O_{i, t}$. Viewing the rank $r$ now as an algorithmic hyper-parameter, we use $r=70$ in our experiments. This choice was made via cross-validation on the pre-treatment period, as in \cite{owenBiCrossValidationSVDNonnegative2009}.
As before, we experiment with two values of $\tau^*$: $\tau^* = \sigma$ and $\tau^* = -\sigma $, where $\sigma^2$ is estimated as mean squared error of $O$ relative to its best rank $70$ approximation.
\newline
\newline
\noindent {\em Estimation: }We use the same set of estimators here as in the previous set of experiments with synthetic data.
%
%
\newline
\newline
\noindent {\em Results: }At the outset, we note that the model equations \eqref{eq:reward-model}--\eqref{eq:donor-model} on which any of our designs or estimation approaches are predicated do not hold exactly in this setup. In particular, approximation error essentially precludes the `noise' in our rank $70$ model from being Gaussian or i.i.d. Referring to Table~\ref{tbl:rossman}, we observe:
\begin{enumerate}
\item SCTS picks a sub-optimal action over at most $13\%$ of the available testing epochs, mitigating the cost of experimentation.
\item Despite the above gain, we continue to recover the treatment effect accurately with the SCTS design. Specifically, the relative RMSE is comparable to the switchback design. The SCTS estimator correctly identified that the treatment effect was negative in all instances where this was the case.
\item Especially interesting is that relative RMSE is substantially lower than that for the fixed design. We attribute this to the fact that the SCTS estimator (and the switchback estimator) are effectively able to utilize data obtained over the test period to continually refine the factor loadings defining the synthetic control -- this appears to be particularly valuable in settings such as this dataset where the common factor process is not stationary.
  \item  As an aside, for the switchback design we also compute the naive difference-in-means estimator\footnote{This simply takes the difference between the average reward for periods when $a_t = 0$, and the average reward for periods when $a_t = 1$.}, which does not make use of the control observations. Doing so results in a much higher relative RMSE, illustrating the value of a synthetic control in estimating the treatment effect with that design.
\item Finally, from the plots in Figure~\ref{fig:rossman}, we see that the relative merits of SCTS alluded to above are robust to the choice of experimentation horizon $T$.
\end{enumerate}

%

\begin{table}[htbp]
  \centering
\begin{tabular}{|c|c|rrr|}
\hline
 & &  SCTS  & Switchback  & SC  \\
\hline
$\tau^* < 0$ &  Regret &  0.13 & 0.50 & 1.00 \\
$\tau^* > 0$ & Regret  &  0.09 & 0.50 & 0.00 \\
$\tau^* > 0$ & RMSE  &  0.12 & 0.07,0.91* & 0.57 \\
  \hline
\end{tabular}
\caption{Regret and relative estimation error, averaged over random instances generated from the Rossman dataset. Regret is normalized between 0 and 1. RMSE is normalized by $|\tau^*|$. In addition to low regret, SCTS even produces better quality estimates of $\tau^*$ in this setting, compared to SC. For Switchback, we report RMSE for two estimators: $\hat{\tau}_T$ (RMSE=0.07), and a simple difference in means with no synthetic controls (RMSE=0.91), highlighting the importance of synthetic controls even with switchback designs.}
  \label{tbl:rossman}
\end{table}

\begin{figure}[htbp]
  \centering
  \includegraphics[width=\linewidth]{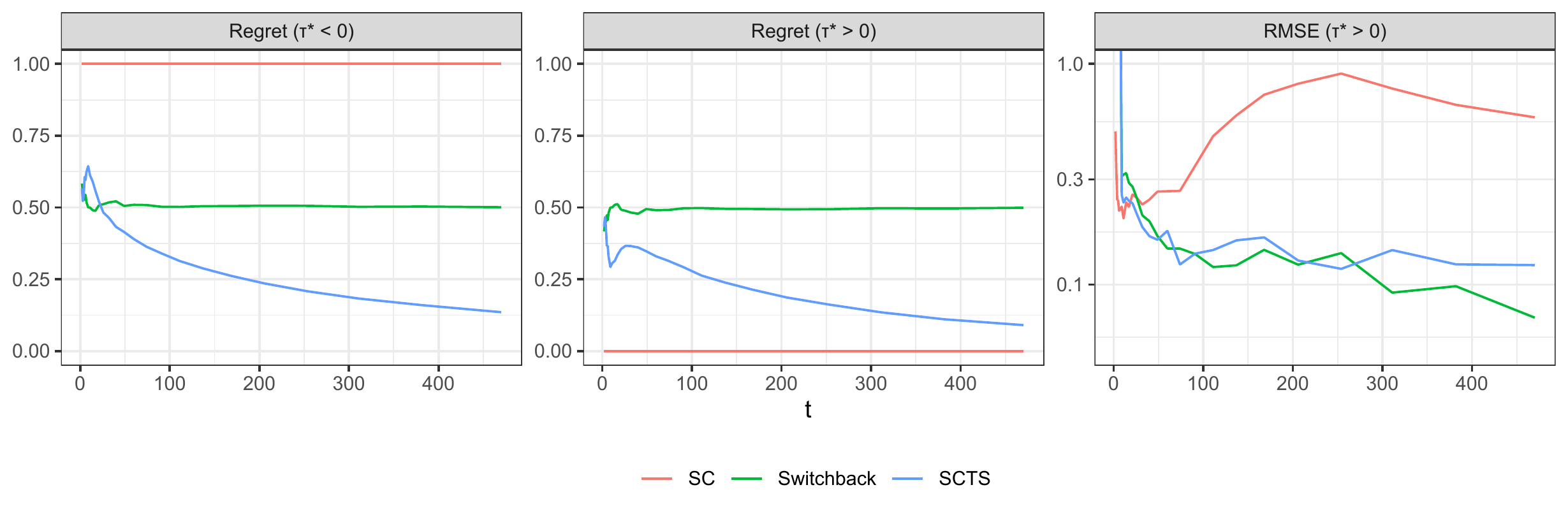}
  \caption{Regret (normalized by $t |\tau^*|$) and RMSE (normalized by $|\tau^*|$) over time, averaged over instances from the Rossman dataset. Unlike SC and Switchback, SCTS exhibits regret vanishing over time in addition to a small RMSE. In particular SCTS and Switchback estimators both display much lower RMSE than SC. These qualities hold essentially for all $t$ in the horizon.}
  \label{fig:rossman}
\end{figure}

\subsection{A Non-Parametric Approach to Inference}
Finally, we explore a re-randomization approach to inference for SCTS. In particular, while the high-probability concentration bounds of Lemma~\ref{thm:clean-prob} can be used to provide confidence intervals for $\hat{\tau}$, they assume that the structural model \eqref{eq:reward-model} is realizable, and tend to be conservative in practice. On the other hand, post-bandit inference techniques such as those in \cite{deshpandeAccurateInferenceAdaptive2018,bibautPostContextualBanditInference2021} could possibly be adapted to our setting, but do not apply immediately (see Section~\ref{sec:literature} for discussion). This latter direction remains an exciting direction for future work.

Here we propose a re-randomization test similar to that of \cite{bojinovDesignAnalysisSwitchback2020} to construct a hypothesis test for a sharp null that the treatment effect is some specific value. We then obtain confidence intervals by inverting this hypothesis test, as described in \cite{imbens2015causal}. The overall conclusion in this section is that (a) our hypothesis test is highly powered for relatively low values of SNR where the treatment effect is dominated by the noise and (b) our confidence intervals attain nearly ideal coverage even for very low SNR. All of these experiments are run on the real data setup described in the preceding section.
\newline
\newline
\noindent {\em A Re-Randomized Hypothesis Test and Confidence Intervals: }We test the sharp null hypothesis $H_\tau$ that the treatment effect is some constant $\tau$, for all $t$. To implement such a test, suppose that we have run SCTS for $T$ time steps, obtaining an estimator $\hat{\tau}_{T}$. We take this estimator to be our test statistic. We can then construct an approximate hypothesis test, at significance level $\alpha$, as follows:

\begin{enumerate}
\item We are given an observed trajectory of interventions under SCTS, $\{a^{\rm hist}_t\}$ and the corresponding observations on the experimental unit $\{y^{0, {\rm hist}}_t\}$.
\item We next draw $k$ samples $\tau^{(1)} \ldots \tau^{(k)}$ of the test statistic under the null hypothesis. We do so by re-running SCTS, but assuming that we observe the sequence of outcomes $y_{t}^{0,{\rm hist}} + \tau a_t - \tau a^{\rm hist}_t$
\item  We can then approximate the p-value of the test statistic as one minus the proportion of the samples $\tau^{(1)} \ldots \tau ^{(k)}$ which are less than $\hat{\tau}_{T}$.
\item  We reject $H_\tau$ if the p-value is less than the significance level $\alpha$.
\end{enumerate}
We may now construct confidence intervals by `inverting' the above re-randomized hypothesis test, as described in~\cite{imbens2015causal}.
Precisely, for every null $H_{\tau}$ for some $\tau \in \mathbb{R}$, we can implement a re-randomization test and decide whether to reject $H_{\tau}$. The confidence set is then the set of $\tau$ values for which we do not reject the corresponding null $H_{\tau}$.
\newline
\noindent {\em Results: } We assess our re-randomization test on 100 problem instances generated from the Rossman sales dataset, as above. We draw $k = 100$ samples of the test statistic for each instance and choose the the significance level to be $\alpha = 0.1$.
The results in Table~\ref{tbl:permutation-power} show that this test is highly powered even when the treatment effect is dominated by the noise (i.e., at an SNR of $0.2$). Power is already nearly $1$ at an SNR of $1$. Further, we see that the coverage of the test is close to ideal (given the significance level of $0.1$, ideal here is $0.9$) over a broad range of SNRs from $0.01$ to $1$. In summary, we conclude that the re-randomization tests and corresponding confidence intervals reported here are adequate for inference even when SNR is low.


\begin{table}
\begin{tabular}{|r|rrrrr|}
\hline
 &  $\mathrm{SNR} = 0.01$ & 0.02 & 0.1 & 0.2 & 1\\
  \hline
Coverage & 0.89 & 0.87 & 0.87 & 0.90 & 0.89 \\
Power    & 0.14 & 0.17 & 0.51 & 0.81 & 0.98 \\
\hline
\end{tabular}
  \caption{Performance of the re-randomization test and confidence intervals on Rossman problem instances, as a function of the effect size (normalized as SNR). As expected, coverage attains roughly the nominal level $1 - \alpha$ where $\alpha = 0.1$, while power increases quickly as we increase the effect size.}
  \label{tbl:permutation-power}
\end{table}

\section{Limitations and Open Directions}

We motivated our dynamic design by the real-world setting where, in order to deal with issues of interference, we must select treatment units that are `coarse'. We discussed at length that these coarse units often necessitate synthetic controls to enable inference. This is true even in switchback designs where using the `untreated' history of the experimental unit is not a reliable control, especially when the data has non-stationary components. At the same time, by virtue of being coarse, the cost of experimentation as embodied by the number of epochs over which a potentially sub-optimal action was taken, is no longer trivially ignored. 

The SCTS approach is an attempt to provide a new dynamic design that addresses these issues. In practical settings, however, one must contend with real-world issues not addressed by the model studied here. We outline these issues here and present directions for future work that might serve to address them: 
\newline
\newline
\noindent{\em Spill-over Effects: }In dynamic designs --- the switchback design is a simple example, SCTS is another --- one often cares about `spill-over' effects wherein a treatment applied in one epoch might influence outcomes in subsequent epochs.  The fix to this issue is to typically allow a `burn-in' period that ignores epochs impacted by such spill-overs. These burn-in periods typically precede and follow a switch from one type of treatment to another. Whereas we have not posited a formal model, it is reasonable to conjecture that in our setting, one could employ a similar strategy. Since the number of switches in SCTS is small (i.e. $O(\sqrt{T})$), the added regret from such burn-in periods will scale sub-linearly with the horizon.
\newline
\newline
\noindent{\em Interventions over Consecutive Epochs: }For some interventions, it may be practically necessary (for instance, from a consumer experience standpoint) that any intervention be maintained over a certain minimum number of consecutive epochs. We believe this to be an important area for future work, closely related to notions of switching costs and batching in the bandit literature. A number of flavors of this problem have been considered in recent years, including incorporating switching costs \cite{dekelBanditsSwitchingCosts2014}, and batching that makes the decision to stop using a potential intervention irrevocable \cite{perchetBatchedBanditProblems2016}. Very recently, \cite{esfandiariRegretBoundsBatched2021,hanSequentialBatchLearning2020a} have extended the batched bandit formalism to linear contextual bandits. Whereas none of these models precisely address the modeling need above, they provide a very reasonable foundation for a potential extension to SCTS that incorporates the constraint that any intervention must be pursued for a certain minimal number of consecutive epochs.
\newline
\newline
\noindent{\em Post-Bandit Inference: }This is an issue we have discussed earlier. Specifically, while we can establish high-probability confidence intervals (that are conservative) and re-randomization tests (that appear to work well practically), we would ideally like to construct estimators with limiting distributions that permit powerful inference. The growing post-bandit inference literature \cite{bibautPostContextualBanditInference2021,hadadConfidenceIntervalsPolicy2021,deshpandeAccurateInferenceAdaptive2018} provides an approach to accomplishing this goal. The primary road block here is that existing proposals ask for a lower bound on the rate of decay of exploration, and it is not clear that such a lower bound is met by the current proposal.

\bibliographystyle{apalike}
\bibliography{bib}

\newpage
\appendix
\section{Failure of UCB}\label{sec:fail-UCB}


Here, we construct a class of problem instances to show that the UCB algorithm incurs linear regret.

To begin, consider the case when $r=1$ and $\sigma=0$ (i.e. the noiseless scenario). For any $\tau^{*} < 0$, let $\lambda^{0} = 1 - \tau^{*}$ and $\lambda^{i} = 1$ for $i \in [n].$ Further, let $\bar{z}_{t}=1.$ Then the observation is given by
\begin{align*}
y_{t}^{i} = 1, i \in [n]
\end{align*}
and 
\begin{align*}
y_{t}^{0} = 1 - \tau^{*} + a_t \tau^{*}.
\end{align*}

Recall that UCB chooses actions by the following procedure: (i) compute $\hat{z}_t$ by estimating the common factors through SVD; (ii) solve the ridge regression problem to obtain $\hat{\tau}_{t}$ and its `variance' estimate $\hat{\sigma}_{t}$; (iii) play $a_t=1$ if $\hat{\tau}_t + \beta_{t} \hat{\sigma}_{t} \geq 0$, and $a_t= 0$ otherwise. Next, we will show that this algorithm will constantly choose $a_t=1$, for any ridge regularizer $\rho$ and any sequence of $\beta_{t}$, thereby incurring $O(T|\tau^{*}|)$ regret.
\newline
\textbf{Estimating shared common factors.} By the SVD of $Y_{t}$, one has
\begin{align*}
Y_{t} = \sqrt{nt} \hat{U}^{\top} \hat{V}
\end{align*}
with $\hat{U}_{i} = \sqrt{1/n}$ and $\hat{V}_{s} = \sqrt{1/t}.$ Then the estimator $\hat{z}_{t} \triangleq \sqrt{1/n}\sqrt{nt}\hat{V}_{s} = 1$, i.e., in the noiseless setting, $\hat{z}_{t} = \bar{z}_t = 1.$
\newline
\textbf{Ridge regression.}
Recall that we will solve the following (regularized) least squares problem at time step $t+1$.
\begin{align*}
       \underset{\tau \in \mathbb{R}, \lambda \in \mathbb{R}^{r}}{\min}
       \sum_{s \leq t} 
      \left(y^0_s - \tau a_s - \langle \lambda, \hat z_s \rangle \right)^2 
      + \rho (\tau^2 + \|\lambda\|_{2}^2)
\end{align*}
Under the constructed setting and the assumption that $a_{s}=1$ for $s\leq t$, the problem is equivalent to 
\begin{align*}
       \underset{\tau \in \mathbb{R}, \lambda \in \mathbb{R}^{r}}{\min}
       \sum_{s \leq t} 
      \left(1 - \tau - \lambda \right)^2 
      + \rho (\tau^2 + \lambda^2)
\end{align*}
which has the closed-form solution
\begin{align*}
\hat{\tau}_{t} =  \hat{\lambda}_{t} = \frac{t}{\rho + 2t}.
\end{align*}
\textbf{Action decision.} Since $\hat{\tau}_{t} > 0 $, we have $\hat{\tau}_{t} + \beta_{t} \hat{\sigma}_t \geq 0$ and hence $a_{t+1}  = 1$. This implies the UCB algorithm will play $a_t = 1$ for all $t$, thereby incurring regret $T|\tau^{*}|$.

\section{Canonical Decomposition}\label{sec:canonical-decomp}
Consider the structural model given by parameters $\lambda^{*\prime}, \Lambda', \bar{Z}_{T}^{\prime}$

\begin{align*}
    \mathbb{E} [y_t^{0}] = \left\langle {\lambda^{*}}^{\prime} , \bar{z}^{\prime}_t \right\rangle + \tau^* a_t &&
    \mathbb{E} [Y_{T}] = \Lambda' (\bar{Z}_T')^\top
  \end{align*}
It is easy to see that the selections of $\lambda^{*}, \bar{Z}_{T}', \Lambda'$ are not unique due to the free rotations. 

Let $\mathbb{E} [Y_{T}] = \bar{U}_T \bar{\Sigma}_{T}\bar{V}^\top_{T}$ be an SVD of $\mathbb{E} [Y_{T}]$ and let $\bar{Z}_T = \frac{1}{\sqrt{n}} \bar{V}_T \bar{\Sigma}_T$. We aim to show that $\bar{Z}_{T}$ can constitute a canonical representation. In particular, it is sufficient to show that there exist unique $\lambda^{*}$ and $\Lambda$ such that
\begin{align*}
    \mathbb{E} [y_t^{0}] = \left\langle \lambda^{*} , \bar{z}_t \right\rangle + \tau^* a_t &&
    \mathbb{E} [Y_{T}] = \Lambda (\bar{Z}_T)^\top
  \end{align*}

Since $\mathbb{E}\left[Y^\top_{T}\right]$ is exactly rank $r$, the column spaces of $\bar{Z}_T$ and $\bar{Z}_T^{\prime}$ must be the same. Therefore there must exist an invertible matrix $H$ such that $\bar{Z}_{T} = \bar{Z}_{T}^{\prime}H$, which induces unique choices of $\lambda^{*}$ and $\Lambda$:
\begin{align*}
\lambda^{*} =  H{\lambda^{*}}^{\prime} \\
\Lambda = \Lambda'H.
\end{align*}

%

\section{Full Version of Theorem~\ref{thm:regret}}
\label{sec:theorem-statement}

Here we state the full version of our main theorem, Theorem~\ref{thm:regret}, including explicit dependencies on all problem parameters.

\noindent \begin{theorem}
  Under Assumption~\ref{ass:bar_y_decomp}, the regret of SCTS is
  \begin{equation*}
R(T) \leq 4 |\tau^*| + 10 \beta_t (1 + \alpha) B \sqrt{(r + 1) T \log \left(1 + \frac{T B^2}{r+1}\right) }
  \end{equation*}
  where we define
  \begin{align*}
    \alpha   &= 20 \sigma \sqrt{\frac{n \lor T}{n}} \\
    \beta_t &= 2 \sigma \sqrt{2(r+1) \log t\left(r+1 + t (B + 1 + \alpha)\right)} + (\|\lambda^{*}\|+|\tau^{*}|)(1 + \alpha)
  \end{align*}
\end{theorem}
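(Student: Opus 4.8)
The plan is to prove the explicit-constant statement by instantiating the proof architecture already laid out for Theorem~\ref{thm:regret}: decompose cumulative regret over the clean events $C_t$, bound the off-clean contribution by a constant, control the on-clean single-step regret through Proposition~\ref{thm:single-step-regret}, and close with the Elliptical Potential Lemma (Lemma~\ref{thm:epl}) — the only new work being to carry constants through each step. Concretely, I would start from $R(T)=\sum_{t=0}^{T-1}\mathbb{E}[R_{t+1}]=\sum_{t=0}^{T-1}\bigl((1-\mathbb{P}(C_t))\,\mathbb{E}[R_{t+1}\mid C_t^c]+\mathbb{P}(C_t)\,\mathbb{E}[R_{t+1}\mid C_t]\bigr)$. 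Since $R_{t+1}\le|\tau^*|$ deterministically and $\mathbb{P}(C_t^c)=O(1/t^2)$ by Proposition~\ref{thm:clean-prob}, the off-clean sum is at most $|\tau^*|\sum_t\mathbb{P}(C_t^c)$, and tracking the explicit constant from the proof of Proposition~\ref{thm:clean-prob} together with $\sum_{t\ge1}1/t^2=\pi^2/6$ produces the $4|\tau^*|$ term; it then remains to handle the on-clean terms.

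Next I would retrace the displayed chain of inequalities immediately following Proposition~\ref{thm:single-step-regret}. Composing Lemma~\ref{le:single_step_ub1} (factor $2$), Lemma~\ref{le:norm_mismatch} (factor $1+\alpha$), the law of total expectation, and Lemma~\ref{le:opt_action_lb} (the $\tfrac14$ lower bound on selecting the optimal action, which contributes a factor $4$) gives $R_{t+1}\le 2\beta_t(1+\alpha)\bigl(4\,\mathbb{E}[\|\bar x_{t+1}\|_{\bar\Omega_t^{-1}}\mid C_t]+\|\bar x_{t+1}\|_{\bar\Omega_t^{-1}}\bigr)$. Taking expectations conditional on $C_t$ collapses the two bracketed terms into $5\,\mathbb{E}[\|\bar x_{t+1}\|_{\bar\Omega_t^{-1}}\mid C_t]$, so that $\mathbb{E}[R_{t+1}\mid C_t]\le 10\,\beta_t(1+\alpha)\,\mathbb{E}[\|\bar x_{t+1}\|_{\bar\Omega_t^{-1}}\mid C_t]$, which is where the constant $10$ originates.

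To finish, I would use monotonicity of $\beta_t$ to replace each $\beta_t$ by its value at the horizon, pull $\beta(1+\alpha)$ out of the sum, use $\mathbb{P}(C_t)\le 1$ to pass from the conditional to the unconditional $\mathbb{E}\,\|\bar x_{t+1}\|_{\bar\Omega_t^{-1}}$, and then apply the explicit form of the Elliptical Potential Lemma. With $\rho=1$ and $\|\bar x_t\|^2\le 1+B^2$ one has $\bar\Omega_t\succeq I$, hence $\|\bar x_{t+1}\|_{\bar\Omega_t^{-1}}^2\le(1+B^2)\min(1,\|\bar x_{t+1}\|_{\bar\Omega_t^{-1}}^2)$; Cauchy–Schwarz across the $T$ terms together with the standard determinant-telescoping bound $\sum_t\min(1,\|\bar x_{t+1}\|_{\bar\Omega_t^{-1}}^2)\le 2\log\det(\bar\Omega_T)\le(r+1)\log(1+TB^2/(r+1))$ then yields the $B\sqrt{(r+1)T\log(1+TB^2/(r+1))}$ factor. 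Summing Steps 1--3 gives $R(T)\le 4|\tau^*|+10\,\beta_t(1+\alpha)\,B\sqrt{(r+1)T\log(1+TB^2/(r+1))}$; the final act is to substitute the explicit high-probability bounds — $\alpha=20\sigma\sqrt{(n\lor T)/n}$ from the generalized Davis--Kahan analysis behind $C_t^{\rm latent}$, and the stated $\beta_t$ from the self-normalized martingale bound behind $C_t^{\rm est}$ — thereby recovering exactly the claimed inequality.

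The genuine obstacle is not this bookkeeping but verifying that the constants in the two supporting propositions are compatible with the declared $\alpha$ and $\beta_t$. In particular, because the regression~\eqref{eq:rls} is misspecified — $\hat z_{s,t}$ is only a rotated, noise-corrupted proxy for $\bar z_s$, accurate to within $\alpha$ — the estimation error $|\tau^*-\hat\tau_t|$ picks up a bias term that must be exactly absorbed by the additive $(\|\lambda^*\|+|\tau^*|)(1+\alpha)$ piece of $\beta_t$, while still leaving $\mathbb{P}(C_t^{\rm est})\ge 1-O(1/t^2)$ so that the tail sum in Step 1 converges. Reconciling the $1+\alpha$ constant of Lemma~\ref{le:norm_mismatch} with the estimation-error constant requires fixing a single canonical rotation $\Phi$, which is precisely where the reduction "actions under distinct rotations of $\hat Z_t$ are equal in distribution" is invoked; getting all of these constants to line up simultaneously is the delicate part.
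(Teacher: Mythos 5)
Your proposal is correct and follows essentially the same route as the paper: the regret decomposition over the clean events $C_t$ with the tail bounded via Proposition~\ref{prop:clean-event}, the constant $10$ arising from composing Lemma~\ref{le:single_step_ub1}, Lemma~\ref{le:norm_mismatch}, and the $1/4$ bound of Lemma~\ref{le:opt_action_lb} as $2(4+1)$, and the explicit Elliptical Potential Lemma (Lemma~\ref{le:epl-full}) with $\rho=1$, $d=r+1$, together with the explicit $\alpha$ and $\beta_t$ from the appendix proofs of $C_t^{\rm latent}$ and $C_t^{\rm est}$. The only discrepancies are cosmetic bookkeeping already present in the paper itself (e.g.\ $\sqrt{1+B^2}$ versus $B$, and the exact constant absorbed into $4|\tau^*|$).
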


\section{Proof of Proposition \ref{prop:clean-event}}\label{sec:proof-clean-event}
We proceed with the proof by showing that $\mathbb{P} (C_{t}^{\rm latent})$ and $\mathbb{P} (C_{t}^{\rm est})$ are well controlled, separately.
\subsection {Controlling $\mathbb{P} (C_{t}^{\rm latent})$} Recall the definition of $C_t^{\rm latent}$:
\[
C_t^{\rm latent} = \left\{ 
\inf_{\Phi \in \mathcal{O}_r}
\|\bar{Z}_{t} - \hat{Z}_{t} \Phi \| \leq \alpha
\right\}.
\] 
The following lemma, based on a generalization of the Davis-Kahan bound, provides the desired bounds for $\mathbb{P} (C_{t}^{\rm latent}).$
\begin{lemma}
Let $\alpha \triangleq 20\sigma \sqrt{\frac{n \lor T}{n}}.$ Then, for all $t$, with probability $1-O(1/t^{8}),$
\begin{align}
\inf_{\Phi \in \mathcal{O}_r}
\|\bar{Z}_{t} - \hat{Z}_{t} \Phi \| \leq \alpha  \label{eq:align}
\end{align}
\end{lemma}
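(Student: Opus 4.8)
The plan is to reduce the statement to a single high-probability bound on the spectral norm of the Gaussian noise matrix $E_t$, and then to run a perturbation argument that, crucially, never divides by the smallest nonzero singular value of $\bar{Y}_t$ --- this is the sense in which it ``generalizes Davis--Kahan'', and it is why no spectral/conditioning quantity appears in $\alpha$. Throughout, $\|\cdot\|$ denotes the operator norm.

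First I would establish the noise bound. Since $E_t\in\mathbb{R}^{n\times t}$ has i.i.d.\ $\mathcal{N}(0,\sigma^2)$ entries, the standard non-asymptotic bound for the spectral norm of a Gaussian matrix gives $\|E_t\|\le\sigma(\sqrt{n}+\sqrt{t})+\sigma u$ with probability at least $1-2e^{-u^2/2}$; taking $u=4\sqrt{\log t}$ and using $\sqrt{n}+\sqrt{t}\le 2\sqrt{n\lor t}$ together with $16\log t\le 9(n\lor t)$ yields $\|E_t\|\le 5\sigma\sqrt{n\lor t}$ with probability $1-O(1/t^8)$ (for the few small $t$ where the probability estimate is vacuous there is nothing to prove). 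It then suffices to show, deterministically on this event, that $\inf_{\Phi\in\mathcal{O}_r}\|\bar{Z}_t-\hat{Z}_t\Phi\|\le 4\|E_t\|/\sqrt{n}$, since $4\cdot 5\sigma\sqrt{n\lor t}/\sqrt{n}=20\sigma\sqrt{(n\lor t)/n}\le 20\sigma\sqrt{(n\lor T)/n}=\alpha$ as $t\le T$.

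The heart of the argument is to produce two ``one-sided'' approximations of $\bar{Z}_t$ by $\hat{Z}_t$ up to a single $r\times r$ matrix $W\triangleq\bar{U}^\top\hat{U}^r_t$, using the canonical decomposition $\bar{Y}_t=\Lambda\bar{Z}_t^\top$ with $\Lambda=\sqrt{n}\,\bar{U}$, so that $\Lambda^\top\Lambda=nI_r$. Note the singular values of $W$ are the cosines of the principal angles between $\mathrm{col}(\bar{U})$ and $\mathrm{col}(\hat{U}^r_t)$, hence lie in $[0,1]$. Recalling $\hat{Z}_t=\tfrac{1}{\sqrt{n}}Y_t^\top\hat{U}^r_t$ and substituting $Y_t^\top=\sqrt{n}\,\bar{Z}_t\bar{U}^\top+E_t^\top$ gives $\hat{Z}_t=\bar{Z}_tW+\tfrac{1}{\sqrt{n}}E_t^\top\hat{U}^r_t$, so $\|\hat{Z}_t-\bar{Z}_tW\|\le\|E_t\|/\sqrt{n}$. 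On the other hand, $\hat{\Lambda}_t\hat{Z}_t^\top$ is the best rank-$r$ approximation of $Y_t$ while $\bar{Y}_t$ has rank at most $r$, so Weyl's inequality and the triangle inequality give $\|\hat{\Lambda}_t\hat{Z}_t^\top-\bar{Y}_t\|\le 2\|E_t\|$; left-multiplying $\hat{\Lambda}_t\hat{Z}_t^\top-\bar{Y}_t$ by $\tfrac{1}{n}\Lambda^\top=\tfrac{1}{\sqrt{n}}\bar{U}^\top$ and using $\Lambda^\top\Lambda=nI_r$ yields $\|\hat{Z}_tW^\top-\bar{Z}_t\|\le 2\|E_t\|/\sqrt{n}$. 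Combining the two, $\bar{Z}_t=\hat{Z}_tW^\top+O(\|E_t\|/\sqrt{n})=\bar{Z}_tWW^\top+O(\|E_t\|/\sqrt{n})$, hence $\|\bar{Z}_t(I-WW^\top)\|\le 3\|E_t\|/\sqrt{n}$ (using $\|W\|\le 1$). Now take an SVD $W=U_W\Sigma_WV_W^\top$ and set $\Phi\triangleq V_WU_W^\top\in\mathcal{O}_r$; then $\hat{Z}_t\Phi=\bar{Z}_t P+O(\|E_t\|/\sqrt{n})$ with $P\triangleq U_W\Sigma_WU_W^\top\succeq 0$, so $\bar{Z}_t-\hat{Z}_t\Phi=\bar{Z}_t(I-P)+O(\|E_t\|/\sqrt{n})$. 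Since $1-\cos\theta_i\le 1-\cos^2\theta_i$ for $\theta_i\in[0,\tfrac{\pi}{2}]$ and $I-P$, $I-WW^\top$ share the eigenbasis $U_W$, we get $\|\bar{Z}_t(I-P)\|\le\|\bar{Z}_t(I-WW^\top)\|\le 3\|E_t\|/\sqrt{n}$, and therefore $\|\bar{Z}_t-\hat{Z}_t\Phi\|\le 4\|E_t\|/\sqrt{n}$, which closes the deterministic claim.

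The main obstacle is exactly the step of turning the near-identity $W$ into a genuine rotation while keeping the error free of spectral quantities. A direct application of Davis--Kahan/Wedin to $\hat{U}^r_t$ versus $\bar{U}$ only bounds the principal angle by $\|E_t\|/\sigma_r(\bar{Y}_t)$, which propagates a factor like $\|\bar{Z}_t\|/\sigma_r(\bar{Z}_t)$ into the final estimate and would destroy the advertised independence from the spectrum of $\bar{Y}_t$. The resolution is to exploit that $\Lambda$ is perfectly conditioned to obtain both one-sided bounds above, and then to combine them so as to bound $\bar{Z}_t(I-WW^\top)$ directly; the remaining spectral-calculus manipulation (comparing $I-P$ with $I-WW^\top$) is routine. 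A minor bookkeeping point, handled separately, is the degenerate regime $t<r$, where $\bar{Z}_t$ and $\hat{Z}_t$ have fewer than $r$ meaningful directions but all norms involved remain $O(1)$.
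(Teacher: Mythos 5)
Your proof is correct, and it arrives at the paper's key deterministic estimate $\inf_{\Phi}\|\bar Z_t-\hat Z_t\Phi\|\le 4\|E_t\|/\sqrt{n}$ with the same constant before closing with the same Gaussian operator-norm bound $\|E_t\|\le 5\sigma\sqrt{n\lor t}$. The core ideas coincide with the paper's: both avoid the Davis--Kahan division by $\sigma_r(\bar Y_t)$ by aligning via the polar factor of a cross-Gramian of left singular bases, both use Weyl's inequality to handle the best rank-$r$ approximation, and your comparison of $I-P$ with $I-WW^\top$ (which is valid: for diagonal $0\le D_1\le D_2$ one has $\|AD_1\|\le\|AD_2\|$) is exactly the analogue of the paper's $(I-\tilde\Sigma)^2\preceq(I-\tilde\Sigma^2)^2$ step inside Theorem~\ref{thm:rotation-ZPhi}. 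Where you genuinely differ is in the organization: the paper proves a standalone perturbation theorem for $\|\bar V\bar S H-VS\|$ with $\bar M=\bar Y_t$, $M=Y_t$, and must then relate the SVD of $\bar Y_t$ back to the canonical $\bar Z_t$ through the rotation $\bar U_t^\top\bar U_T$, a step that needs $\bar U_t$ and $\bar U_T$ to span the same column space and is only asserted for $t\ge T_0$. You bypass the intermediate SVD of $\bar Y_t$ entirely by exploiting $\Lambda^\top\Lambda=nI_r$ for the canonical $\Lambda=\sqrt n\,\bar U_T$, deriving the two one-sided approximations $\hat Z_t\approx\bar Z_tW$ and $\bar Z_t\approx\hat Z_tW^\top$ with $W=\bar U_T^\top\hat U_t^r$ and combining them; this makes the bound hold uniformly in $t$ with no column-space or rank caveat (so the ``degenerate $t<r$'' case you flag needs no separate treatment), at the cost of not isolating a reusable perturbation lemma as the paper does. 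One cosmetic remark: your $\Phi=V_WU_W^\top$ is orthogonal but may have determinant $-1$, which is consistent with the paper's own usage, since its $H$ in Theorem~\ref{thm:rotation-ZPhi} is likewise only orthogonal.
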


\begin{proof}

We are interested in three SVDs at time $t$:

        \begin{itemize}
          \item $\bar{Y}_{T} = \bar{U}_{T} \bar{S}_{T} \bar{V}^\top_{T} = \sqrt{n}\bar{U}_{T}\bar{Z}_{T}$, where $\bar{Z}_{T}$ is the ``canonical'' representation of the latent covariates.
          \item $\hat{U}_t \hat{S}_t \hat{V}_t^\top$, which is the SVD of $Y_t$ truncated to $r$ singular values.
          \item $\bar{Y}_{t} = \bar{U}_t \bar{S}_t \bar{V}^\top_t$, i.e.  the SVD of the mean control outcomes up to time $t$.
        \end{itemize}

        We will use this last quantity $\bar{V}_t \bar{S}_t$ is to relate the other two quantities: to $\hat{V}_t \hat{S}_t$ via Theorem~\ref{thm:rotation-ZPhi}, and to $\sqrt{n} \bar{Z}_{t} \triangleq \bar{V}_{T, :t}\bar{S}$ via a rotation, where $\bar{V}_{T, :t}$ denotes the first $t$ rows of $\bar{V}_{T}.$

        To see this, observe that $\bar{Y}_t = \bar{U}_t \bar{S}_t \bar{V}^\top_t = \bar{U}_{T} \bar{S}_{T} (\bar{V}^\top_{T,:t})$, and therefore $\frac{1}{\sqrt{n}}\bar{V}_t \bar{S}_t  \bar{U}_t^\top \bar{U}_{T} = \frac{1}{\sqrt{n}} \bar{V}_{T,:t} \bar{S} = \bar{Z}_{t}$. We can further show that $\bar{U}_t^\top \bar{U}$ is unitary: for $t \geq T_{0}$ (for some constant $T_{0}$), the matrix $\bar{U}_t$ must have the same column space as the matrix $\bar{U}_{T}$, which implies that $\bar{U}_t^\top \bar{U}_{T} \bar{U}_{T}^\top \bar{U}_t = I$ and therefore $\bar{U}_t^\top \bar{U}_{T}$  is unitary.
        
     Note that our estimate of the latent covariates at time $t$ is $\hat{Z}_t = \frac{1}{\sqrt{n}}\hat{V}_t \hat{S}_t$. Then

\begin{align*}
 \inf_{\Phi \in \mathcal{O}_r}\|\hat{Z}_t\Phi - \bar{Z}_t\| &=  \frac{1}{\sqrt{n}}\inf_{\Phi \in \mathcal{O}_r}\|\hat{V}_t \hat{S}_t\Phi - \bar{V}_t\bar{S}_{t}\bar{U}_{t}^{\top}\bar{U}_{T}\|\\
 &\overset{(i)}{=} \frac{1}{\sqrt{n}}\inf_{\Phi \in \mathcal{O}_r}\|\hat{V}_t \hat{S}_t\Phi - \bar{V}_t\bar{S}_{t}\|\\
 &\leq 4\frac{\|E_{t}\|}{\sqrt{n}}
\end{align*}

where (i) is due to that $\bar{U}^{\top}_{t}\bar{U}_{T}$ is a rotation and (ii) uses \cref{thm:rotation-ZPhi}.\footnote{A direct application of the Davis-Kahan theorem would have resulted in a dependence on the condition number of $\bar{Y}_t$, $\sigma_1(\bar{Y}_t)/\sigma_r(\bar{Y}_{t})$. We conducted a more refined analysis and obtained an improved bound.} Then, using Theorem~\ref{thm:matrix-concentration}, we have $\|E_{t}\| \leq 5\sigma\sqrt{n \lor t}$ with probability $1-\frac{1}{(n\lor t)^{8}}$. This completes the proof.
\end{proof}

\subsection{Controlling $\mathbb{P} (C_{t}^{\rm est})$}
Next, we bound the probability $\mathbb{P} (C_{t}^{\rm est})$. Recall that $C_t^{\rm est} = \left\{|\tau^* - \hat{\tau}_t| \leq \beta_t \hat \sigma_t / 2 \right\}$, where we define $\beta_t$ explicitly below.

\begin{lemma}
  \label{le:clean-est-prob}
With probability $1 - \frac{2}{t^2}$, it holds that $|\tau^* - \hat{\tau}_t| \leq \beta_t \hat \sigma_t / 2 $, where
  \begin{equation*}
    \beta_t = 2 \sigma \sqrt{2(r+1) \log t\left(r+1 + t (B + 1 + 20 \sigma \sqrt{(n \lor T) / n})\right)} + (\|\lambda^{*}\|+|\tau^{*}|)(20\sigma\sqrt{(n \lor T)/n} + 1)
  \end{equation*}
\end{lemma}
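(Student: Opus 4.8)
The plan is to work from the closed form of the ridge estimator and decompose its error into a regularization bias, a ``plug-in'' term coming from using $\hat z_{s,t}$ in place of $\bar z_s$, and a martingale noise term, then control each piece on the event $C_t^{\rm latent}$. Writing $x_{s,t} := [a_s\ \hat z_{s,t}^\top]^\top$, $\bar x_s := [a_s\ \bar z_s^\top]^\top$, $\theta^* := [\tau^*\ \lambda^{*\top}]^\top$, and combining the normal equations $\Omega_t\hat\theta_t = \sum_{s\le t} x_{s,t} y^0_s$ with the structural equation $y^0_s = \langle\theta^*,\bar x_s\rangle + \epsilon^0_s$ and the identity $\sum_{s\le t} x_{s,t} x_{s,t}^\top = \Omega_t - \rho I$, one obtains
\[
\hat\theta_t - \theta^* \;=\; \underbrace{-\,\rho\,\Omega_t^{-1}\theta^*}_{\text{(i): ridge bias}} \;+\; \underbrace{\Omega_t^{-1}\textstyle\sum_{s\le t} x_{s,t}\langle\lambda^*,\,\bar z_s-\hat z_{s,t}\rangle}_{\text{(ii): plug-in error}} \;+\; \underbrace{\Omega_t^{-1}\textstyle\sum_{s\le t} x_{s,t}\,\epsilon^0_s}_{\text{(iii): noise}}.
\]
Since $|\tau^*-\hat\tau_t| = |e_1^\top(\hat\theta_t-\theta^*)| \le \|e_1\|_{\Omega_t^{-1}}\|\hat\theta_t-\theta^*\|_{\Omega_t} = \hat\sigma_t\,\|\hat\theta_t-\theta^*\|_{\Omega_t}$ by Cauchy--Schwarz, and $\|\Omega_t^{-1}v\|_{\Omega_t} = \|v\|_{\Omega_t^{-1}}$, it suffices to bound the $\Omega_t^{-1}$-norm of each of the three summands and show their sum is at most $\beta_t/2$ with probability $1-2/t^2$.

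Before that I would remove the rotation ambiguity: replacing $\hat Z_t$ by $\hat Z_t\Phi$ for any $\Phi\in\mathcal O_r$ left-multiplies the design rows by the orthogonal block matrix $\mathrm{diag}(1,\Phi^\top)$, which conjugates $\Omega_t$ and leaves both $\hat\tau_t = e_1^\top\hat\theta_t$ and $\hat\sigma_t = (\Omega_t^{-1})_{1,1}$ unchanged; hence I may assume $\Phi$ is the infimizer in the definition of $C_t^{\rm latent}$, so that on $C_t^{\rm latent}$, $\|\bar Z_t-\hat Z_t\|\le\alpha$ and therefore $\|\bar z_s-\hat z_{s,t}\|\le\alpha$ and $\|\hat z_{s,t}\|\le B+\alpha$ for all $s\le t$. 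Term (i) is at most $\sqrt\rho\,\|\theta^*\|_2 \le \|\lambda^*\|+|\tau^*|$ since $\Omega_t\succeq\rho I$ and $\rho=1$. For term (ii), rewrite it as $\|X^\top(\bar Z_t-\hat Z_t)\lambda^*\|_{\Omega_t^{-1}}$ with $X$ the $t\times(r+1)$ design matrix; using $\|\Omega_t^{-1/2}X^\top\|_{\mathrm{op}}\le 1$ (because $X^\top X = \Omega_t-\rho I \preceq \Omega_t$) this is at most $\|\bar Z_t-\hat Z_t\|\,\|\lambda^*\| \le \alpha\|\lambda^*\|$. Together (i) and (ii) contribute at most $(1+\alpha)(\|\lambda^*\|+|\tau^*|)$, exactly the second term of $\beta_t$.

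Term (iii) is the crux, and the obstacle is that $x_{s,t}$ is \emph{not} adapted to the natural filtration: $\hat z_{s,t}$ is read off a PCA of the entire donor panel $Y_t$, so the regression design is ``inconsistent'' across time and the standard self-normalized martingale argument does not apply off the shelf. The fix is to condition on the donor-unit observations $Y_t$ --- equivalently, on the donor noise $\{\epsilon^i_s : i\ge 1,\ s\le t\}$. Conditionally, $\hat Z_t$ (hence every $x_{s,t}$ and the whole matrix $\Omega_t$) is deterministic; each action $a_s$, which depends only on $Y_{s-1}$, the past pairs $(a_r,y^0_r)_{r<s}$ and exogenous sampler randomness, remains adapted to the conditional filtration $\mathcal H_{s-1}$; and the treatment-unit noise $\{\epsilon^0_s\}$, being independent of $Y_t$, is still a conditionally $\sigma$-subgaussian martingale-difference sequence. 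The self-normalized bound of Abbasi-Yadkori et al.~\cite{abbasi-yadkoriImprovedAlgorithmsLinear2011} then applies conditionally and gives $\|\sum_{s\le t} x_{s,t}\epsilon^0_s\|_{\Omega_t^{-1}}\le\sigma\sqrt{2\log(\det(\Omega_t)^{1/2}/(\delta\det(\rho I)^{1/2}))}$ with conditional probability at least $1-\delta$; since this holds for every realization of $Y_t$, it holds unconditionally. I would then bound $\log\det(\Omega_t/\rho)\le (r+1)\log(1+t\max_s\|x_{s,t}\|^2/(r+1))$, substitute $\|x_{s,t}\|^2\le 1+(B+\alpha)^2$ (valid on $C_t^{\rm latent}$), take $\delta = 1/t^2$, and fold the resulting constants into the form $2\sigma\sqrt{2(r+1)\log t\,(r+1 + t(B+1+\alpha))}$, which has ample slack. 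Summing the three bounds yields $\|\hat\theta_t-\theta^*\|_{\Omega_t}\le\beta_t/2$ on the intersection of $C_t^{\rm latent}$ with the self-normalized event; since $\mathbb P((C_t^{\rm latent})^c)=O(1/t^8)$ and the self-normalized event fails with probability at most $1/t^2$, a union bound gives $\mathbb P(C_t^{\rm est})\ge 1-2/t^2$. Besides the conditioning trick, the one point to handle with care is the legitimacy of the rotation reduction, i.e.\ that passing from $\hat Z_t$ to $\hat Z_t\Phi$ does not alter the law of the action trajectory --- the same invariance invoked in the main text.
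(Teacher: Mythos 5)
Your proposal is correct and follows essentially the same route as the paper's proof: the identical decomposition into ridge bias, errors-in-variables (plug-in) error, and a self-normalized noise term bounded via the Abbasi-Yadkori bound, combined with the same rotation-invariance reduction, Cauchy--Schwarz step through $\hat\sigma_t = \|e_1\|_{\Omega_t^{-1}}$, and union bound with $C_t^{\rm latent}$. Your explicit conditioning on the donor panel $Y_t$ to restore predictability of the PCA-based design is a welcome, more careful justification of the measurability hypotheses that the paper's generic errors-in-variables proposition simply assumes, but it is a refinement of the same argument rather than a different one.
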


Note that $\hat\theta_t^\top \triangleq [\hat \tau_t \ \hat \lambda^\top_t]$ is the solution of the following quadratic program:
\begin{align} \label{eq:ridge-optimization}
       \underset{\tau \in \mathbb{R}, \lambda \in \mathbb{R}^{r}}{\min}
       \sum_{s \leq t} 
      \left(y^0_s - \tau a_s - \langle \lambda, \hat z_s \rangle \right)^2 
      + \rho (\tau^2 + \|\lambda\|_{2}^2)
\end{align}
with $y^{0}_s = \tau^{*} a_s + \langle \lambda^{*}, \bar z_s \rangle +\epsilon_{s}^{0}.$ Denote $x_{s}^\top \triangleq [a_s \ \hat z_{s}^\top]$, let $\Omega_t = \rho I + \sum_{s \leq t} x_{s}x_{s}^\top$ be the `precision' matrix of $\hat{\theta}_{t}$. The `variance' estimator is

\begin{align*}
\hat \sigma^2_t  = (\Omega_t)^{-1}_{1,1}.
\end{align*}

We address the following two issues to complete the proof.

\begin{itemize}
\item[1.] Due to the rotation ambiguity, there is no direct connection between $\hat{z}_{s}$ and $\bar{z}_{s}.$ That is to say, we can only bound $\|\Phi\hat{z}_{s} - \bar{z}_{s}\|$ for some unknown rotation $\Phi.$ For addressing this, we show that $\hat{\tau}_t,\hat{\sigma}_{t}$ are invariant to the rotation $\Phi$, hence one can rewrite $\Phi\hat{z}_{s}$ as $\hat{z}_{s}$ without loss, for the purpose of analysis. See \cref{sec:invariance-rotation}.
\item[2.] Given the bound $\|\hat{z}_{s} - \bar{z}_{s}\|$, the problem reduces to an analysis of ridge regression with errors-in-variables, with an adapted design matrix. We adapt a typical bound from LinUCB  \cite{abbasi-yadkoriImprovedAlgorithmsLinear2011} for ridge regression without errors-in-variables to our setting. See \cref{sec:CI-ridge-regression}.
\end{itemize}

\subsubsection{Invariance to Rotation} \label{sec:invariance-rotation}
To begin, we show that under our SCTS algorithm, $\hat{\tau}_{t}, \hat{\sigma}_{t}$ are invariant to any rotation of the estimated contexts $\hat{Z}_t.$ This then enables us to assume that $\hat{Z}_{t}$ aligns with $\bar{Z}_{t}$ by the best rotation, given by \cref{eq:align}, without loss.

Specifically, suppose the observed context were $\Phi\hat{z}_{s}$ instead $\hat{z}_{s}$ for some rotation $\Phi \in \mathcal{O}^{r\times r}$, and we wanted to solve the optimization problem:
\begin{align}\label{eq:ridge-optimization-rotation}
       \underset{\tau \in \mathbb{R}, \lambda \in \mathbb{R}^{r}}{\min}
       \sum_{s \leq t} 
      \left(y^0_s - \tau a_s - \langle \lambda, \Phi\hat z_s \rangle \right)^2 
      + \rho (\tau^2 + \|\lambda\|_{2}^2)
\end{align}
Let $\breve \tau_t, \breve \lambda_t$ be the corresponding optimal solution. One can easily see that

\begin{align*}
\breve \tau_t &= \hat{\tau_t}\\
\breve \lambda_t &= \Phi \hat{\lambda}_t
\end{align*}

by the equivalence between \cref{eq:ridge-optimization} and \cref{eq:ridge-optimization-rotation} through the corresponding transformation. This  implies that $\hat{\tau}_{t}$ is invariant to the rotation $\Phi$. Further, let $\breve{x}_{s}^\top \triangleq [a_s \; (\Phi\hat{z}_{s})^\top]$ and
\begin{align*}
\breve{\Omega}_t
&= 
\rho I + \sum_{s \leq t} \breve{x}_{s} \breve{x}_{s}^\top\\
&= \rho I + \sum_{s \leq t} \begin{bmatrix} a_{s} \\ \Phi \hat{z}_{s} \end{bmatrix} \begin{bmatrix} a_{s} \\ \Phi \hat{z}_{s} \end{bmatrix}^\top \\
&= I_{1,\Phi} \hat\Omega_{t} I_{1,\Phi}^{\top}
\end{align*}
where
$$
I_{1,\Phi} \triangleq \begin{pmatrix} 1 &  \bold{0}_{1\times r}\\  \bold{0}_{r\times 1}& \Phi\end{pmatrix}.
$$
This implies that 
$$
\breve \Omega_{t}^{-1} = (I_{1,\Phi}\hat{\Omega}_{t}I_{1,\Phi}^{\top})^{-1} =  I_{1,\Phi} \hat{\Omega}_{t}^{-1} I_{1,\Phi}^{\top}.
$$
Therefore, 
$$
(\breve \Omega_{t}^{-1})_{11} = e_1^{\top} I_{1,\Phi} \hat{\Omega}_{t}^{-1} I_{1,\Phi}^{\top} e_{1} = e_1^{\top} \hat{\Omega}_{t}^{-1} e_{1} = (\hat \Omega_{t}^{-1})_{11}
$$
due to $ I_{1,\Phi}^{\top} e_{1} = e_1.$ This is to say, $\hat{\sigma}_{t}$ is also invariant to the rotation $\Phi.$

Hence, since the action chosen by SCTS only depends on $\hat{Z}_t$ via $\hat{\tau}_t$ and $\hat{\sigma}_t$, and these two quantities are invariant to the rotation of $\hat{Z}_t$, then the regret of SCTS is also invariant to the rotation of $\hat{Z}_t$. Without loss, we will rewrite $\hat{Z}_{t}\Phi$ as $\hat{Z}_{t}$ for the simplification of the analysis, if there is no ambiguity. Then under $C_{t}^{\rm latent}$, we have
\begin{align*}
\|\bar{Z}_{t} - \hat{Z}_{t}\| \leq \alpha
\end{align*}

\subsubsection{Confidence Intervals for Ridge Regression.}  \label{sec:CI-ridge-regression} First, we show a generic confidence interval for ridge regression with errors-in-variables, which adapts the bounds used in LinUCB \cite{abbasi-yadkoriImprovedAlgorithmsLinear2011}. For a matrix $A$, let $\|A\|_{2, \infty} = \max_{\|u\|_{2} = 1} \|A u\|_{\infty}$ denote the maximum norm of its rows.

\begin{proposition}
  Let $\{\mathcal{F}_t\}_{t \in \mathbb{N}}$ be a filtration. Let $\{x_t, \bar{x}_t\}_{t \in \mathbb{N}}$ be a $\mathcal{F}_{t}$ -measurable, $\mathbb{R}^{d}$ -valued stochastic process. Let $\{\varepsilon_t\}_{t \in \mathbb{N}}$ be an $\mathcal{F}_{t+1}$ -measurable, $\mathbb{R}$ -valued stochastic process. Let $\varepsilon_t$ further be $\sigma$-subgaussian conditional on $\mathcal{F}_t$. Define $\bar{X} = [\bar{x}_1\; \bar{x}_2 \ldots \bar{x}_{t}]^\top,{X} = [{x}_1\; {x}_2 \ldots {x}_{t}]^\top, \varepsilon = [\varepsilon_1 \; \varepsilon_2 \ldots \varepsilon_{t}]^\top$ and let \(y = \bar{X} \theta^* + \varepsilon\).

  Define for some $\rho > 0$ the quantities:
\begin{align*}
\Omega &= \rho I + X^\top X \\
\hat{\theta} &=  \Omega^{-1} X^\top Y \\
\beta(\delta) &= \sigma \sqrt{d \log \left(\frac{d \rho + t (\|\bar{X}\|_{2, \infty} + \|X - \bar{X}\|) }{\delta}\right) } +  \left\|\theta ^*\right\|_2 (\left\|X - \bar{X}\right\|  + \sqrt{\rho})
\end{align*}

  Then, with probability \(1 - \delta\), it holds that

\begin{equation*}
\left\|\hat{\theta} - \theta^*\right\|_{\Omega} \leq \beta(\delta)
\end{equation*}

\label{thm:eiv-ci}
\end{proposition}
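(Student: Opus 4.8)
The plan is to adapt the standard self-normalized (LinUCB-style) analysis of \cite{abbasi-yadkoriImprovedAlgorithmsLinear2011} to account for the mismatch between the design matrix $X$ used in the regression and the true design $\bar{X}$ appearing in $y = \bar{X}\theta^* + \varepsilon$. First I would substitute this expression for $y$ into $\hat\theta = \Omega^{-1}X^\top y$ and rearrange using $\Omega = \rho I + X^\top X$, to obtain the exact identity
\[
\hat\theta - \theta^* \;=\; -\rho\,\Omega^{-1}\theta^* \;+\; \Omega^{-1}X^\top(\bar{X} - X)\theta^* \;+\; \Omega^{-1}X^\top\varepsilon,
\]
after which the triangle inequality for $\|\cdot\|_{\Omega}$ reduces the problem to bounding these three terms in turn.

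The first two terms are deterministic. For the regularization term, I would use $\Omega \succeq \rho I$, so $\Omega^{-1} \preceq \rho^{-1}I$ and hence $\rho\|\Omega^{-1}\theta^*\|_\Omega = \rho\sqrt{(\theta^*)^\top\Omega^{-1}\theta^*} \le \sqrt{\rho}\,\|\theta^*\|$. For the errors-in-variables term, setting $u \triangleq (\bar{X} - X)\theta^*$ and using the operator inequality $X\Omega^{-1}X^\top \preceq I$ (immediate from $\Omega \succeq X^\top X$), I get $\|\Omega^{-1}X^\top u\|_\Omega = \|X^\top u\|_{\Omega^{-1}} \le \|u\|_2 \le \|X - \bar{X}\|\,\|\theta^*\|$. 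Together these account for the $\|\theta^*\|_2(\|X - \bar{X}\| + \sqrt{\rho})$ summand of $\beta(\delta)$.

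The remaining term carries the probabilistic content: $\|\Omega^{-1}X^\top\varepsilon\|_\Omega = \|X^\top\varepsilon\|_{\Omega^{-1}} = \big\|\sum_{s\le t} x_s\varepsilon_s\big\|_{\Omega^{-1}}$. Since $x_s$ is $\mathcal{F}_s$-measurable and $\varepsilon_s$ is conditionally mean-zero $\sigma$-subgaussian, $(x_s\varepsilon_s)$ is a martingale difference sequence, so the self-normalized tail inequality of \cite{abbasi-yadkoriImprovedAlgorithmsLinear2011} gives, with probability $1-\delta$,
\[
\big\|X^\top\varepsilon\big\|_{\Omega^{-1}}^2 \;\le\; 2\sigma^2\log\!\Big(\det(\Omega)^{1/2}\det(\rho I)^{-1/2}/\delta\Big).
\]
I would then bound $\det(\Omega)$ by the determinant--trace (AM-GM) inequality, $\det(\Omega) \le (\mathrm{tr}(\Omega)/d)^d = (\rho + d^{-1}\sum_{s\le t}\|x_s\|_2^2)^d$, and control each row via $\|x_s\|_2 \le \|\bar{x}_s\|_2 + \|x_s - \bar{x}_s\|_2 \le \|\bar{X}\|_{2,\infty} + \|X - \bar{X}\|$ (using that the maximum row norm is dominated by the operator norm). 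Substituting and simplifying the logarithm produces the $\sigma\sqrt{d\log\big((d\rho + t(\|\bar{X}\|_{2,\infty} + \|X - \bar{X}\|))/\delta\big)}$ term (up to routine bookkeeping of the constants inside the log), and summing the three bounds yields $\beta(\delta)$.

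The main obstacle is conceptual rather than computational: one must confirm that the bias introduced by regressing against $X$ rather than $\bar{X}$ is controlled purely by $\|X - \bar{X}\|\,\|\theta^*\|$, with no dependence on the conditioning of $\Omega$, which is precisely what the identity $X\Omega^{-1}X^\top \preceq I$ delivers. A secondary, routine point is to verify that $\sum_{s\le t} x_s\varepsilon_s$ is indeed a martingale under the stated measurability hypotheses, so that the self-normalized inequality applies verbatim.
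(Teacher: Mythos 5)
Your proposal is correct and follows essentially the same route as the paper: the identical bias decomposition $\hat\theta-\theta^* = -\rho\,\Omega^{-1}\theta^* + \Omega^{-1}X^\top(\bar X - X)\theta^* + \Omega^{-1}X^\top\varepsilon$, the same key facts $\Omega \succeq \rho I$ and $X\Omega^{-1}X^\top \preceq I$ to absorb the regularization and errors-in-variables terms into $\|\theta^*\|_2(\|X-\bar X\|+\sqrt{\rho})$, the self-normalized martingale bound of Abbasi-Yadkori et al.\ for $\|X^\top\varepsilon\|_{\Omega^{-1}}$, and the row-norm bound $\|X\|_{2,\infty}\le\|\bar X\|_{2,\infty}+\|X-\bar X\|$. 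The only difference is cosmetic: you apply the triangle inequality in the $\Omega$-norm directly to the three terms, while the paper bounds $x^\top(\hat\theta-\theta^*)$ for generic $x$ via Cauchy--Schwarz and then sets $x=\Omega(\hat\theta-\theta^*)$; your bookkeeping inside the logarithm is no looser than the paper's own.
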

\begin{proof}

\begin{enumerate}
  \item We begin by decomposing the error into the the standard usual ridge regression error, plus a term that depends on the errors in variables.
\begin{align*}
    \hat{\theta}
    &= \Omega^{-1} X^\top y \\
    &= \Omega^{-1} X^\top (\bar{X} \theta ^* + \varepsilon) \\
    &= \Omega^{-1} X^\top ((X + \bar{X} - X) \theta^* + \varepsilon) \\
    &= \Omega ^{-1}  X^\top X \theta^*+ \Omega ^{-1} X^\top (\bar{X} - X) \theta^* + \Omega ^{-1} X^\top \varepsilon  \\
    &= \Omega ^{-1}( \rho I  +  X^\top X ) \theta^* - \rho \Omega ^{-1} \theta^*+ \Omega ^{-1} X^\top (\bar{X} - X) \theta^* + \Omega ^{-1} X^\top \varepsilon \\
    &=  \theta^* - \rho \Omega ^{-1} \theta^*+ \Omega ^{-1} X^\top (\bar{X} - X) \theta^* + \Omega ^{-1} X^\top \varepsilon \\
    \implies x^\top \hat{\theta} - x^\top \theta^*
    &=  \left\langle x, X^\top \varepsilon \right\rangle_{\Omega ^{-1}} + \left\langle x, (X^\top (\bar{X} - X) - \rho I) \theta^*  \right\rangle_{\Omega ^{-1}}\\
    &\leq  \left\|x\right\|_{\Omega^{-1}} \left(\| X^\top \varepsilon \|_{\Omega ^{-1}} + \left\|\theta ^*\right\|_2 \left\|\bar{X} - X\right\| + \rho \| \theta^*\|_{\Omega ^{-1}} \right)  \\
    &\leq  \left\|x\right\|_{\Omega^{-1}} \left(\| X^\top \varepsilon \|_{\Omega ^{-1}} + \left\|\theta ^*\right\|_2 \left\|\bar{X} - X\right\| + \sqrt{\rho} \| \theta^*\|_{2} \right)
\end{align*}

The second inequality uses \(\lambda_{\rm max}(\Omega^{-1}) \geq 1 / \rho\), and the first inequality uses Cauchy-Schwarz and the following bound:

\begin{align*}
    x^\top \Omega ^{-1} X ^\top (\bar{X} - X) \theta^*
    &\leq \left\|X \Omega ^{-1} x\right\|_{2} \left\|(\bar{X} - X) \theta^*\right\|_{2} \\
    &= \sqrt{\left\|X \Omega ^{-1} x\right\|_{2}^{2}} \left\|(\bar{X} - X) \theta^*\right\|_{2} \\
    &\leq \sqrt{\left\|X \Omega ^{-1} x\right\|_{2}^{2} + \rho x^\top \Omega ^{-1} \Omega ^{-1} x } \left\|(\bar{X} - X) \theta^*\right\|_{2} \\
    &= \sqrt{x^\top \Omega ^{-1} (X^\top X + \rho I) \Omega ^{-1} x } \left\|(\bar{X} - X) \theta^*\right\|_{2} \\
    &= \sqrt{x^\top \Omega ^{-1} x } \left\|(\bar{X} - X) \theta^*\right\|_{2} \\
    &= \left\| x\right\|_{\Omega^{-1}} \left\|(\bar{X} - X) \theta^*\right\|_2 \\
    &= \left\| x\right\|_{\Omega^{-1}} \left\|\bar{X} - X \right\| \left\|\theta^*\right\|_2
\end{align*}
    
\item Choosing \(x = \Omega (\hat{\theta} - \theta^*)\), we have:

        \begin{equation*}
          \left\|\hat{\theta} - \theta ^*\right\|_\Omega^{2} \leq \left\|\hat{\theta} - \theta ^*\right\|_\Omega \left(\| X^\top \varepsilon \|_{\Omega ^{-1}} + \left\|\theta^*\right\|_2 \left\|X - \bar{X}\right\| + \sqrt{\rho} \|\theta ^*\|_{2} \right)
        \end{equation*}

\item Theorem 1 of \cite{abbasi-yadkoriImprovedAlgorithmsLinear2011} gives that with probability \(1 - \delta/2\),

        \begin{equation*}
          \left\|X^\top \varepsilon\right\|_{\Omega^{-1}} \leq \sigma \sqrt{d \log \left( \frac{d \rho + t \|X\|_{2, \infty}}{\delta } \right)}
        \end{equation*}

  \item Finally, we need only bound $\|X\|_{2, \infty}$ in terms of $\|\bar{X}\|_{2, \infty}$:

        \begin{equation*}
          \|X\|_{2, \infty} \leq \|\bar{X}\|_{2, \infty} + \|X - \bar{X}\|_{2, \infty} \leq \|\bar{X}\|_{2, \infty} + \|X - \bar{X}\|
        \end{equation*}
        where the first inequality uses the triangle inequality, and the second uses the generic norm inequality $\|A\|_{2, \infty} = \max_{\|u\|_{2} = 1} \|A u \|_{\infty} \leq \max_{\|u\|_{2}= 1} \|A u\|_{2} = \|X\|$

\end{enumerate}
\end{proof}

\subsubsection{Completing the proof of Lemma~\ref{le:clean-est-prob}}
Applying \cref{thm:eiv-ci}, with $\delta=1/t^2, d=r+1, \rho=1$, and using that $\|\bar{X}\|_{2, \infty} \leq \|\bar{Z}\|_{2, \infty} + 1 \leq B + 1$, we immediately have that

\begin{equation*}
\|\hat{\theta}_t - \theta^*\|_{\Omega_t} \leq \sigma \sqrt{2(r+1) \log \left( t\left(r+1 + t (B + 1 + \|\hat{Z}_t - \bar{Z}_t\|) \right) \right)}  +  \left\|\theta ^*\right\|_2 (\|\hat{Z}_t - \bar{Z}_t\| + 1)
\end{equation*}

On $C_t^{\rm latent}$, which occurs with probability $1 - \frac{1}{t^{8}}$, we have the bound $\|\hat{Z}_t - \bar{Z}_t\| \leq 20 \sigma \sqrt{\max(n, T) / n}$. Then via a union bound, with probability $1 - \frac{1}{t^2} - \frac{1}{t^8} \geq 1 - \frac{2}{t^2}$  it holds that

\begin{align*}
\|\hat{\theta}_{t} - \theta^{*}\|_{\Omega_{t}}
  &\leq \sigma \sqrt{2(r+1) \log \left( t\left(r+1 + t (B + 1 + 20 \sigma \sqrt{\max(n, T) / n})\right) \right)} \\
  &\quad+ (\|\lambda^{*}\|+|\tau^{*}|)(20\sigma\sqrt{\max(n,T)/n} + 1)\\
&= \beta_{t}/2 = O(\sqrt{r \log (rt)})
\end{align*}

Further,
\begin{align*}
|\hat{\tau}_{t}-\tau^{*}| = |\langle \hat{\theta}_{t} - \theta^{*}, e_1 \rangle| \leq \|\hat{\theta}_{t} - \theta^{*}\|_{\Omega_{t}} \|e_1\|_{\Omega_{t}^{-1}} = \beta_{t} \hat{\sigma}_{t}/2.
\end{align*}

This completes the proof.

\subsection{Proof of Lemma \ref{le:norm_mismatch}}\label{sec:proof-norm-mismatch}
To begin, we have the following lemma to connect the $\|\cdot\|_{\bar{\Omega}}$ and $\|\cdot\|_{\Omega}$ norms.
\begin{lemma}
For some \(\bar{X}, X \in \mathbb{R}^{t \times r}, \rho > 0\), let \(\bar{\Omega} = \rho I + \bar{X}^\top \bar{X}, \Omega = \rho I + X^\top X\), and \(\Xi = X - \bar{X}\). Then for any vector \(a \in \mathbb{R}^{r}\),  it holds that
\begin{equation*}
\left\|a\right\|_{\bar{\Omega}} \leq \left\|a\right\|_{\Omega} (1 + \rho^{-\frac{1}{2}} \left\|\Xi\right\|)
\end{equation*}

\label{thm:norm-differences}
\end{lemma}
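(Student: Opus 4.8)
The plan is to work directly with the quadratic forms defining the two norms and reduce the claim to an elementary scalar inequality. Since $\bar{\Omega} = \rho I + \bar{X}^\top \bar{X}$, we have $\|a\|_{\bar{\Omega}}^2 = \rho \|a\|_2^2 + \|\bar{X} a\|_2^2$, and similarly $\|a\|_{\Omega}^2 = \rho \|a\|_2^2 + \|X a\|_2^2$. Using $\bar{X} a = X a - \Xi a$ together with the triangle inequality, $\|\bar{X} a\|_2 \le \|X a\|_2 + \|\Xi a\|_2 \le \|X a\|_2 + \|\Xi\| \, \|a\|_2$.

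Next I would introduce the shorthand $u \triangleq \sqrt{\rho} \, \|a\|_2$, $v \triangleq \|X a\|_2$, and $c \triangleq \rho^{-1/2} \|\Xi\|$, all nonnegative, so that $\|\Xi\| \, \|a\|_2 = c u$ and the previous bound gives $\|a\|_{\bar{\Omega}}^2 \le u^2 + (v + c u)^2$, while $\|a\|_{\Omega}^2 (1 + c)^2 = (u^2 + v^2)(1 + c)^2$. It then suffices to prove the scalar inequality
\[
u^2 + (v + c u)^2 \;\le\; (u^2 + v^2)(1 + c)^2 .
\]
Expanding both sides, the difference of the right and left members equals $2c\,(u^2 - uv + v^2) + c^2 v^2$, which is nonnegative because $c \ge 0$ and $u^2 - uv + v^2 = (u^2 + v^2) - uv \ge 2uv - uv = uv \ge 0$ for $u, v \ge 0$. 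Taking square roots yields $\|a\|_{\bar{\Omega}} \le \|a\|_{\Omega}\,(1 + \rho^{-1/2} \|\Xi\|)$, as claimed.

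The only point requiring a little care — and the reason one cannot simply bound the two summands $\rho \|a\|_2^2$ and $\|\bar{X} a\|_2^2$ separately — is that each summand is individually at most $\|a\|_{\Omega}^2 (1 + c)^2$, so adding the two naive bounds would cost an extra factor of $\sqrt{2}$. Keeping the full quadratic $(u^2 + v^2)(1 + c)^2$ rather than discarding its cross term $2c u^2$ is exactly what absorbs the cross term $2c uv$ produced by expanding $(v + c u)^2$; this is the whole content of the computation above, and there is no genuine obstacle beyond it. (An equivalent route is to observe that the bound is the statement $\|\bar{\Omega}^{1/2} \Omega^{-1/2}\| \le 1 + \rho^{-1/2}\|\Xi\|$, which one can derive from $\bar{X} = X - \Xi$ and $\Omega \succeq \rho I$; but the elementary scalar argument is shortest.)
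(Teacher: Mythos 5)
Your proof is correct and is essentially the same argument as the paper's: both rest on the two bounds $\|Xa\|_2 \le \|a\|_{\Omega}$ and $\sqrt{\rho}\,\|a\|_2 \le \|a\|_{\Omega}$, followed by expanding the quadratic so that the cross term $2\|\Xi\|\,\|a\|_2\|Xa\|_2$ is absorbed into $(1+\rho^{-1/2}\|\Xi\|)^2\|a\|_{\Omega}^2$. The only difference is packaging — the paper bounds $a^\top(\bar{\Omega}-\Omega)a$ directly via Cauchy--Schwarz, while you apply the triangle inequality to $\|\bar{X}a\|_2$ and finish with an equivalent scalar inequality.
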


\begin{proof}

 Expanding the definition of \(\Omega\), we have:

\begin{align*}
    \left\|a\right\|_{\bar{\Omega} - \Omega}^2 &= a^\top (\bar{\Omega} - \Omega) a \\
    &= a^\top \left(\Xi^\top X + X^\top \Xi + \Xi ^\top \Xi \right)a \\
    &\leq 2 \left\|a\right\|_2 \left\|\Xi\right\| \left\|X a\right\|_2 + \|a\|_{2}^{2}\left\|\Xi^\top \Xi\right\| \\
    &\leq \frac{2}{\sqrt{\rho}}\|a\|_{\Omega}^{2} \|\Xi\| + \frac{1}{\rho} \|a\|_{\Omega}^{2} \|\Xi\|^{2}
    \end{align*}

where the first inequality follows from Cauchy-Schwarz and the triangle inequality, and the second inequality uses the following inequalities:

\begin{itemize}
\item \(\| X a \|_{2}^{2} = a^\top X^\top X a = a ^\top (X^\top X + \rho I) a - \rho \|a\|_{2}^2 =  \left\|a\right\|_{\Omega}^{2} - \rho \left\|a\right\|_2^{2}  \leq \left\|a\right\|_{\Omega}^2\)
\item \(\left\|a\right\|_{\Omega}^{2} \geq \sigma_{\rm r}(\Omega)  \left\|a\right\|_2^{2} \geq \rho \left\|a\right\|_2^{2}\)
\end{itemize}
We then use this to bound the difference between \(\left\|a\right\|_{\bar{\Omega}}\) and \(\left\|a\right\|_{\Omega}\):

\begin{align*}
    \left\|a\right\|^2_{\bar{\Omega}}
    &= \left\|a\right\|^2_{\Omega} + \left\|a\right\|^2_{\bar{\Omega} - \Omega} \\
    &\leq  \left\|a\right\|^2_{\Omega} + \frac{2}{\rho^{\frac{1}{2}}} \left\|a\right\|^2_{\Omega} \|\Xi\| + \frac{1}{\rho} \left\|a\right\|^2_{\Omega} \left\|\Xi\right\|^2 \\
    &= \left\|a\right\|^2_{\Omega}\left(1 + \frac{2}{\rho^{\frac{1}{2}}}  \|\Xi\|+ \frac{1}{\rho} \left\|\Xi\right\|^2 \right) \\
    &= \left\|a\right\|^2_{\Omega} \left(1 + \frac{1}{\rho^{\frac{1}{2}}}  \|\Xi\| \right)^{2} \\
    \end{align*}
\end{proof}
\cref{thm:norm-differences} also implies the same bound on difference between the inverse norms $\left\|\cdot\right\|_{\Omega ^{-1}}$ and $\left\|\cdot\right\|_{\bar{\Omega}^{-1}}$; i.e.

\begin{align*}
\left\|a\right\|_{\bar{\Omega}^{-1}}  = \max_{\|u\|_{\bar{\Omega}} \leq 1} \langle a, u \rangle \leq \max_{\|u\|_{\Omega} \leq 1+ \rho^{-1/2} \|\Xi\|} \langle a, u \rangle \leq (1+ \rho^{-1/2} \|\Xi\|) \|a\|_{\Omega^{-1}}.
\end{align*}

Then, applying \cref{thm:norm-differences} and the rotation invariance discussed in \cref{sec:invariance-rotation}, under $C^{\rm latent}$, we have
\begin{align*}
\left\|a\right\|_{\bar{\Omega}^{-1}_{t}} \leq (1+ \rho^{-1/2} \|\Xi\|) \|a\|_{\Omega^{-1}_t} \leq (1+\|\hat{Z}_{t} - \bar{Z}_{t}\|)\|a\|_{\Omega^{-1}_t}  \leq (1+\alpha) \|a\|_{\Omega^{-1}_t}.
\end{align*}
with $\rho \triangleq 1.$ This completes the proof.

\section{Proofs of Inferential Results}

\subsection{Proof of Proposition \ref{prop:synthetic-control-guarantee}}

\begin{proof}
By definition, the estimator $\tauSC$ is
\begin{align}
    \tauSC 
    &:= \frac{1}{T}\sum_{t=1}^{T} \left(y_{t}^{0} - \sum_{i=1}^{n} w_{i} y_{t}^{i}\right) \nonumber\\
    &= \tau^{*} + \frac{1}{T} \sum_{t=1}^{T} \left(\lambda^{*\top} \bar{z}_{t} - \sum_{i=1}^{n}w_i \lambda^{i \top} \bar{z}_{t}\right) + \underbrace{\frac{1}{T} \sum_{t=1}^{T} \left(\epsilon_{t}^{0} - \sum_{i=1}^{n} w_i \epsilon_{t}^{i}\right)}_{R_1} \nonumber\\
    &= \tau^{*} + \left(\lambda^{*\top} - \sum_{i=1}^{n} w_i \lambda^{i\top}\right) \left(\frac{1}{T} \sum_{t=1}^{T} \bar{z}_t\right) + R_1.\label{eq:tau-taustar}
\end{align}
Let $Z = [\bar{z}_1^{\top}; \bar{z}_2^{\top}; \cdot; \bar{z}_{T_0}^{\top}] \in \R^{T_0 \times r}.$ Let $E \in \R^{(n+1)\times T_0}$ be noise matrix with entries $E_{ij} = \epsilon^{i}_{j-T_0}$ for $i = 0, 1, \dotsc, n, j = 1, 2, \dotsc, T_0.$ Let $E_{i}$ be the i-th row of $E.$

Then, from $y_{t}^{0} = \sum_{i=1}^{n}w_i y_{t}^{i}, t = -T_0+1, \dotsc, -1, 0$, we have
\begin{align*}
    Z \lambda^{*} + E_{0} = Z \left(\sum_{i=1}^{n} w_i \lambda^{i}\right) + \sum_{i=1}^{n} w_i E_{i}   
\end{align*}
Let $Z^{-1}$ be the pseudo-inverse of $Z$. We then have $Z^{-1}Z = I_{r}$ and
\begin{align}
    \lambda^{*} - \left(\sum_{i=1}^{n} w_i \lambda^{i}\right) = Z^{-1} \left(\sum_{i=1}^{n} w_i E_{i} - E_{0}\right). \label{eq:u1-bound}
\end{align}
Plugging \cref{eq:u1-bound} back into \cref{eq:tau-taustar}, we have
\begin{align}
     \tauSC  - \tau^{*} 
     &=\left(Z^{-1} \left(\sum_{i=1}^{n} w_i E_i - E_0\right)\right)^{\top} \left(\frac{1}{T} \sum_{t=1}^{T} \bar{z}_t\right) + R_1\\
     &= \underbrace{E_0 Z^{-1\top} \left(\frac{1}{T} \sum_{t=1}^{T} \bar{z}_t\right)}_{R_2} + \underbrace{w^{\top} E_{1:n} Z^{-1\top} \left(\frac{1}{T} \sum_{t=1}^{T} \bar{z}_t\right)}_{R_3} + R_1
\end{align}
where $E_{1:n}$ is the sub-matrix of $E$ consisting of rows indexed by $1,2, \dotsc, n.$

For $R_1$, note that $w$ and $\epsilon^{i}_{t}$ for $t>T_0$ are independent, then
\begin{align}
    R_1 \overset{dist}{=} \mathcal{N}\left(0, \sigma^2 \left(1+\sum_{i=1}^{n}w_i^2\right)\frac{1}{T}\right)
\end{align}
Note that $\sum_{i=1}^{n} w_i^2 \leq 1$ since $w_i \geq 0$ and $\sum_{i} w_i \leq 1$. Then with probability $1-\delta$, 
$$
|R_1| \lesssim \frac{\sigma}{\sqrt{T}}\sqrt{\log(1/\delta)}.
$$

For $R_2$, let $a := Z^{-1\top} \left(\frac{1}{T} \sum_{t=1}^{T} \bar{z}_t\right).$ Note that
\begin{align*}
    \|a\| \leq \frac{\norm{\frac{1}{T} \sum_{t=1}^{T} \bar{z}_t}}{\sigma_{r}(Z)} \leq \frac{c_2}{\sqrt{c_1 T_0}}.
\end{align*}
Since $E_{0}$ are i.i.d Gaussian and independent from $a$, with probability $1-\delta$, we have
\begin{align*}
    |R_2| \lesssim \frac{c_2 \sigma}{c_1 \sqrt{T_0}} \sqrt{\log(1/\delta)}.
\end{align*}
For $R_3$, the difficulty is that $E$ and $w$ are not independent. Note that 
\begin{align*}
    |R_3| = |w^{\top} E_{1:n} a| \leq \norm{E_{1:n} a}_{\infty} = \max_{i \in [1:n]} |E_i^{\top} a|.
\end{align*}
By the union bound and $E_{i}^{\top} a$ being Gaussian, with probability $1-\delta$, we have
\begin{align*}
    |R_3| \lesssim \frac{c_2 \sigma}{\sqrt{c_1T_0}} \sqrt{\log(1/\delta) + \log(n)}.
\end{align*}
This completes the proof. 
\end{proof}

\subsection{Proof of Proposition \ref{prop:SCTS-full-inference-result}}
We will prove a generalized result of \cref{prop:SCTS-full-inference-result} below. 
\begin{proposition}
The followings hold. 
\begin{itemize}
\item[(a)]  With probability $1-O(\delta)$, if $M > T / 2$, then
\begin{align*}
|\tauTS - \tau^{*}| \lesssim \frac{\sigma}{\sqrt{T}}\sqrt{\log(1/\delta)} + \frac{c_2\sigma}{c_1 \sqrt{T_0}} \sqrt{\log(1/\delta) + \log(n)}. 
\end{align*}
\end{itemize}
Furthermore, 
\begin{itemize}
\item[(b)] When $\tau^{*} \geq 0$, with probability at most $\frac{2R(T)}{T \tau^{*}}$, $M \leq T/2$ (i.e., $\tauTS = 0$). 
\item[(c)] When $\tau^{*} < 0$, with probability at least $1 - \frac{2R(T)}{T|\tau^{*}|}$, $M \leq T/2$ (i.e., $\tauTS = 0$). \end{itemize}
\end{proposition}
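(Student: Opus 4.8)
The plan is to prove (b) and (c) directly from a one-line Markov argument relating total regret to $|M|$, and to prove (a) by transporting the vanilla synthetic-control analysis behind Proposition~\ref{prop:synthetic-control-guarantee} from the deterministic averaging window $\{1,\dots,T\}$ to the random window $M$; Proposition~\ref{prop:SCTS-full-inference-result} then follows upon substituting $R(T)=O(r\sqrt{T}\log T)$ from Theorem~\ref{thm:regret}. For (b), when $\tau^*\ge 0$ we have $R_t=\tau^*(1-a_t)$, so $\sum_{t=1}^{T}R_t=\tau^*(T-|M|)$ and hence $\mathbb{E}[T-|M|]=R(T)/\tau^*$; since $\{|M|\le T/2\}\subseteq\{T-|M|\ge T/2\}$, Markov's inequality gives $\mathbb{P}(|M|\le T/2)\le 2R(T)/(T\tau^*)$. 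For (c), when $\tau^*<0$ we have $R_t=|\tau^*|a_t$, so $\sum_{t=1}^{T}R_t=|\tau^*|\,|M|$ and $\mathbb{E}|M|=R(T)/|\tau^*|$, whence Markov's inequality gives $\mathbb{P}(|M|>T/2)\le 2R(T)/(T|\tau^*|)$, i.e. $\mathbb{P}(\tauTS=0)\ge 1-2R(T)/(T|\tau^*|)$. Intersecting the event of (a) with $\{|M|>T/2\}$ and invoking (b) recovers the $\tau^*\ge 0$ statement of Proposition~\ref{prop:SCTS-full-inference-result} (using $2R(T)/(T\tau^*)=\tilde O(1/\sqrt T)$), while (c) is precisely its $\tau^*<0$ statement.

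For (a), restrict attention to $\{|M|>T/2\}$, on which $\tauTS=\tilde\tau^{\rm SC}$ and $|M|\ge T/2$. Substituting the structural equations into $\tilde\tau^{\rm SC}$ and using the pre-treatment matching identity $\lambda^*-\sum_i w_i\lambda^i=Z^{-1}(\sum_i w_i E_i-E_0)$ exactly as in the proof of Proposition~\ref{prop:synthetic-control-guarantee} yields $\tilde\tau^{\rm SC}-\tau^*=\tilde R_1+\tilde R_2+\tilde R_3$, where $\tilde R_1=\tfrac{1}{|M|}\sum_{t\in M}(\epsilon^0_t-\sum_i w_i\epsilon^i_t)$ is the averaged treatment-period noise, and $\tilde R_2,\tilde R_3$ are the two pre-treatment noise terms of that proof with $\tfrac1T\sum_{t=1}^T\bar z_t$ replaced by $\bar\mu_M\triangleq\tfrac{1}{|M|}\sum_{t\in M}\bar z_t$ (which still obeys $\|\bar\mu_M\|\le c_2$, and $\sigma_r(Z)\ge\sqrt{c_1 T_0}$ is unaffected). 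On $\{|M|\ge T/2\}$ every factor $1/\sqrt{|M|}$ becomes $O(1/\sqrt{T})$, so it suffices to reproduce the three term-by-term bounds of Proposition~\ref{prop:synthetic-control-guarantee}.

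The one genuinely new difficulty — the crux of the argument — is that $M$ is $\mathcal{F}_{T+1}$-measurable, hence correlated with the noise, so the independence used in the vanilla proof is unavailable for $\tilde R_1$. I would handle this by noting that $a_t=\mathbf{1}\{t\in M\}$ is $\mathcal{F}_t$-measurable while $\epsilon^0_t$ and $\xi_t\triangleq\sum_i w_i\epsilon^i_t$ are each conditionally $\sigma$-sub-Gaussian given $\mathcal{F}_t$ (the latter because $\sum_i w_i^2\le 1$); thus $\sum_{t\le T}a_t\epsilon^0_t$ and $\sum_{t\le T}a_t\xi_t$ are self-normalizable martingales, and the self-normalized bound of \cite{abbasi-yadkoriImprovedAlgorithmsLinear2011} in dimension one gives $\bigl|\sum_{t\in M}\epsilon^0_t\bigr|,\bigl|\sum_{t\in M}\xi_t\bigr|\lesssim\sigma\sqrt{|M|\log(|M|/\delta)}$, whence $|\tilde R_1|\lesssim\sigma\sqrt{\log(1/\delta)}/\sqrt{T}$ up to logarithmic factors. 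For $\tilde R_2,\tilde R_3$, the cleanest route is to observe that under the natural implementation in which SCTS reads only treatment-period observations, $M$ (hence $\bar\mu_M$) is independent of the pre-treatment noise $E$; conditioning on $M$ then reduces these two terms verbatim to the $R_2,R_3$ of Proposition~\ref{prop:synthetic-control-guarantee} with the fixed direction $a=Z^{-1\top}\bar\mu_M$, yielding $|\tilde R_2|\lesssim\tfrac{c_2\sigma}{\sqrt{c_1 T_0}}\sqrt{\log(1/\delta)}$ and, after a union bound over the $n$ donors, $|\tilde R_3|\lesssim\tfrac{c_2\sigma}{\sqrt{c_1 T_0}}\sqrt{\log(1/\delta)+\log n}$. (If the design also consumes pre-treatment data, one instead bounds $\tilde R_2,\tilde R_3$ uniformly over $\{\bar\mu:\|\bar\mu\|\le c_2\}$ via $|\tilde R_2|\le c_2\|Z^{-1}E_0\|$ and $|\tilde R_3|\le c_2\max_i\|Z^{-1}E_i\|$, at the cost of an extra $\sqrt r$ factor that is harmless when $c_1,c_2,r$ are treated as constants.) A final union bound over the failure events of $\tilde R_1,\tilde R_2,\tilde R_3$ completes (a).
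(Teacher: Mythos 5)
Your proposal is correct and follows essentially the same route as the paper: parts (b) and (c) are the identical Markov-inequality argument on $(T-|M|)\tau^*$ and $|M||\tau^*|$, and part (a) reruns the vanilla synthetic-control decomposition over the random window $M$, with the treatment-period noise term handled by martingale concentration exploiting that $a_t$ is $\mathcal{F}_t$-measurable (the paper invokes a sub-Gaussian Azuma bound where you invoke the one-dimensional self-normalized bound, an interchangeable choice). Your additional care about the possible dependence of $\bar\mu_M$ on the pre-treatment noise in the $R_2,R_3$ terms is a point the paper's proof passes over silently (it asserts only $R_1$ needs re-analysis), so that discussion is a welcome, if minor, refinement rather than a deviation.
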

\begin{proof}
In order to show (a), we follow the same analysis as the proof of \cref{prop:synthetic-control-guarantee}, where the only term that needs to be re-analyzed is 
\begin{align*}
    R_1:=\frac{1}{M} \sum_{t=1}^{T} \mathbf{1}\{a_t=1\}\left(\epsilon^{0}_{t} - \sum_{i=1}^{n} w_i \epsilon^{i}_{t}\right)
\end{align*}
Let $z_t := \mathbf{1}\{a_t=1\}$. Then
\begin{align}
    R_1 = \frac{1}{M} \sum_{i=1}^{T} \epsilon^{0}_{t} z_t - \frac{1}{M} \sum_{t=1}^{T} \sum_{i=1}^{n} w_i \epsilon^{i}_{t} z_t
\end{align}
Note that $z_t$ is independent from the future $\epsilon^{j}_{k}$ for $k\geq t$. Hence we can use Azuma's inequality (a sub-Gaussian variant \cite{shamir2011variant}) to obtain, with probability $1-\delta$,
\begin{align}
    \left|\sum_{t=1}^{T} \epsilon^{0}_{t} z_t\right| \lesssim \sqrt{T} \log(1/\delta) \sigma. 
\end{align}
Similar bounds can be obtained for $\sum_{t=1}^{T} \left(\sum_{i=1}^{n} w_i \epsilon^{i}_{t}\right) z_t.$ This provides, with probability $1-O(\delta)$, 
\begin{align*}
    R_1 \lesssim \frac{\sqrt{T}}{M} \log(1/\delta) \sigma.
\end{align*}
Using $M > \frac{T}{2}$, we finish the proof for (a).

Next, consider the case $\tau^{*} \geq 0.$ The regret incurred for each instance is $R = (T-M)\tau^{*}.$ By Markov inequality, we then have
\begin{align*}
\Pr\left((T-M)\tau^{*} \geq \frac{T \tau^{*}}{2}\right) \leq \frac{R(T)}{T \tau^{*} / 2}.
\end{align*}
Equivalently, 
\begin{align*}
\Pr \left(M \leq \frac{T}{2}\right) \leq \frac{2 R(T)}{T \tau^{*}}.
\end{align*}
Hence, with probability at most $\frac{2R(T)}{T \tau^{*}}$, we have $M \leq T/2$, i.e., (b) holds. 

Finally, consider the case $\tau^{*} < 0.$ The regret incurred is $R = M|\tau^{*}|$. Then, by Markov inequality, 
\begin{align*}
\Pr\left(M |\tau^{*}| > \frac{T}{2} |\tau^{*}| \right) \leq \frac{R(T)}{|\tau^{*}| T / 2}.
\end{align*}
This proves (c). 
\end{proof}

\section{Technical Lemmas}

\begin{theorem}Let \(M, \bar{M}, E \in \R^{n\times m}\) be arbitrary matrices such that \(M = E + \bar{M}\), and let \(\bar{U} \bar{S} \bar{V}^\top, U S V^\top\) be the SVDs of \(\bar{M}, M\) respectively, truncated to \(r\) singular values.
Assume \(\sigma_{r}( \bar{M} ) > 0\) and \(\sigma_{r+1}( \bar{M} ) = 0\). Then, there exists an orthogonal matrix \(H\) such that
\begin{equation*}
\| \bar{V}\bar{S}H - VS\| \leq 4\|E\|.
\end{equation*}
\label{thm:rotation-ZPhi}
\end{theorem}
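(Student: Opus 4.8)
The plan is to first reduce the statement to a comparison of two genuine rank-$r$ matrices, and then extract their ``right factors'' up to rotation by a perturbation argument deliberately engineered so that the spectrum of $\bar M$ never enters. First I would pass to $M_r \triangleq USV^\top$, the rank-$r$ truncation: since $M_r$ is a best rank-$r$ approximation of $M$ and $\sigma_{r+1}(\bar M)=0$, Weyl's inequality gives $\|M - M_r\| = \sigma_{r+1}(M) \le \|E\|$, hence $\|M_r - \bar M\| \le 2\|E\|$; also $M_r - \bar M = E - M_{>r}$ where $M_{>r} \triangleq M - M_r$, and $M_{>r}^\top U = 0$ since the left singular vectors of $M_{>r}$ are orthogonal to the columns of $U$. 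I will use repeatedly that $\bar M^\top = \bar M^\top P_{\bar U}$ (valid because $\bar M$ has rank exactly $r$, which is where $\sigma_r(\bar M)>0$ is needed), $M_r^\top = M_r^\top P_U$, and $P_{\bar U^\perp}\bar M = 0$.

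Next, set $G \triangleq \bar U^\top U \in \R^{r \times r}$, take an SVD $G = A_1 (\cos\Theta) A_2^\top$ with $A_1,A_2 \in \mathcal{O}_r$ and $\cos\Theta$ the diagonal matrix of cosines of the principal angles, and let $H \triangleq A_1 A_2^\top \in \mathcal{O}_r$ be the rotation we will exhibit. Three short computations then give: (i) $\bar V\bar S G = \bar M^\top\bar U\bar U^\top U = \bar M^\top U$, so $\|\bar V\bar S G - VS\| = \|(M_r-\bar M)^\top U\| = \|E^\top U\| \le \|E\|$; (ii) $VSG^\top = M_r^\top U U^\top \bar U = M_r^\top \bar U$, so $R_A \triangleq \bar V\bar S - VSG^\top = (\bar M - M_r)^\top\bar U$ satisfies $\|R_A\| \le 2\|E\|$; (iii) $VS(I - G^\top G) = M_r^\top P_{\bar U^\perp} U = \big(P_{\bar U^\perp}(M_r-\bar M)\big)^\top U$, so $\|VS(I - G^\top G)\| \le 2\|E\|$, which (since $I - G^\top G = A_2 \sin^2\Theta\, A_2^\top$) is the same as $\|VSA_2 \sin^2\Theta\| \le 2\|E\|$.

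The crux is the identity obtained by left-multiplying (ii) by $\bar V^\top$ and right-multiplying by $A_1$, using $\bar V^\top\bar V = I$ and $G^\top A_1 = A_2\cos\Theta$:
\[
\bar S A_1 = \bar V^\top VS A_2 \cos\Theta + \bar V^\top R_A A_1.
\]
Multiplying on the right by $(I - \cos\Theta)$ and using $\cos\Theta(I - \cos\Theta) = \sin^2\Theta\, D_0$ with $D_0$ diagonal and $\|D_0\| \le 1/2$, the first term is bounded by $\tfrac12 \|VSA_2\sin^2\Theta\| \le \|E\|$ and the second by $\|R_A\| \le 2\|E\|$, so $\|\bar S A_1(I - \cos\Theta)\| \le 3\|E\|$. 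Finally, writing $\bar V\bar S H - VS = (\bar V\bar S G - VS) + \bar V\bar S A_1(I - \cos\Theta)A_2^\top$ and noting that left-multiplication by the isometry $\bar V$ and right-multiplication by the orthogonal $A_2$ preserve operator norm, the second term has norm $\|\bar S A_1(I - \cos\Theta)\| \le 3\|E\|$; combined with (i) this gives $\|\bar V\bar S H - VS\| \le 4\|E\|$.

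The main obstacle, and the reason the footnote's ``more refined analysis'' is needed, is that the naive route — decompose $\bar V\bar S H - VS = \bar V\bar S(H - G) + (\bar V\bar S G - VS)$ and bound $\|\bar V\bar S(H-G)\| \le \|\bar S\|\,\|H - G\| = \sigma_1(\bar M)\cdot O(\|\sin\Theta\|^2)$, then invoke Davis–Kahan on the principal angles — forces the condition number $\sigma_1(\bar M)/\sigma_r(\bar M)$ into the bound (and breaks entirely when $\sigma_r(\bar M)$ is small). The identity $\bar S A_1 = \bar V^\top VS A_2\cos\Theta + (\text{residual})$ is exactly what removes this: it lets us replace the $\bar S$-weighting of the angle-defect $(I - \cos\Theta)$ by a $VS$-weighting that already appears multiplied by $\sin^2\Theta$ and is therefore controlled directly at $O(\|E\|)$ by the projection identity in (iii), with no spectral gap anywhere. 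Carrying this out carefully, and checking that the SVD of $G$ (hence $H$) is well defined even when $G$ is singular, is the only nontrivial part.
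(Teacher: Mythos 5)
Your proposal is correct and is essentially the paper's own argument: you pick the same rotation $H$ (the orthogonal factor $A_1A_2^\top$ from the SVD of $\bar U^\top U$, which is the paper's $\tilde U\tilde V^\top$), make the same split $\bar V\bar S H - VS = (\bar V\bar S\,\bar U^\top U - VS) + \bar V\bar S(H-\bar U^\top U)$, and use the same key device of trading the angle defect $I-\cos\Theta$ for $\sin^2\Theta$ so that it is controlled by a projection mismatch plus Weyl's inequality, with no dependence on the spectrum of $\bar M$. The only difference is bookkeeping in the second term: you transfer the singular-value weighting from $\bar S$ to $VS$ via the residual identity $\bar S A_1 = \bar V^\top VS A_2\cos\Theta + \bar V^\top R_A A_1$ and bound $M_r^\top P_{\bar U^\perp}$ (getting $\|E\| + 3\|E\|$), whereas the paper keeps the $\bar S$-weighting, applies the scalar inequality $(1-c)^2 \le (1-c^2)^2$ to the singular values of $\bar U^\top U$, and bounds $\bar M^\top(I-UU^\top)$ (getting $2\|E\| + 2\|E\|$); both routes yield the same constant $4$.
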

\begin{proof}
Let $\bar{U}^{\top}U = \tilde{U}\tilde{\Sigma}\tilde{V}^{\top}$ be the SVD of $\bar{U}^{\top}U.$ We construct $H \triangleq \tilde{U}\tilde{V}^{\top}.$ Then
\begin{align*}
\| \bar{V}\bar{S}H - VS\| 
&\overset{(i)}{\leq} \| \bar{V}\bar{S}\bar{U}^{\top}U - VS\| + \| \bar{V}\bar{S}(H - \bar{U}^{\top}U)\| \\
&\overset{(ii)}{\leq} \|(\bar{V}\bar{S}\bar{U}^{\top} - VSU^{\top})U\| + \| \bar{V} \| \|\bar{S}(H - \bar{U}^{\top}U)\| \\
&\leq  \|\bar{V}\bar{S}\bar{U}^{\top} - VSU^{\top}\| +  \|\bar{S}(H - \bar{U}^{\top}U)\| 
\end{align*}
where (i) is due to the triangle inequality and (ii) is due to $U^{\top}U = I_{r}.$ Note that
\begin{align*}
\|\bar{V}\bar{S}\bar{U}^{\top} - VSU^{\top}\| 
&= \|\bar{M}^{\top} - M^{\top} + M^{\top} - VSU^{\top}\|\\
&\leq \|E\| + \|M^\top - V S U ^\top\|\\
&\leq \|E\| + \sigma_{r+1}(M^{\top})\\
&\leq 2\|E\|.
\end{align*}
Here, the last inequality uses Weyl's inequality $\sigma_{r+1}(M^{\top}) \leq \sigma_{r+1}(\bar{M}) + \|E\| \leq \|E\|.$

It remains only to bound
\begin{align*}
\|\bar{S}(H - \bar{U}^{\top}U)\|  
&= \|\bar{S}\tilde{U} (I_{r} - \tilde{\Sigma})\tilde{V}^{\top}\| \\
&= \sqrt{\|\bar{S}\tilde{U} (I_{r} - \tilde{\Sigma})^2 \tilde{U}^{\top} \bar{S}^{\top}\|}.
\end{align*}
Note that for any $a, b \in \R^{r}$ with $\|a\|=\|b\|=1$, we have $a^{\top}\bar{U}^{\top}Ub \leq \|a\|\|b\| = 1.$ Therefore $0\leq \tilde{\Sigma}_{ii} \leq 1$ for $i\in [r]$. This implies $(I_{r} - \tilde{\Sigma})^2 \preceq (I_{r} - \tilde{\Sigma}^2)^2$ and further 
$$
\bar{S}\tilde{U} (I_{r} - \tilde{\Sigma})^2 \tilde{U}^{\top} \bar{S}^{\top} \preceq \bar{S}\tilde{U} (I_{r} - \tilde{\Sigma}^2)^2 \tilde{U}^{\top} \bar{S}^{\top}
$$
Then we have
\begin{align*}
\sqrt{\|\bar{S}\tilde{U} (I_{r} - \tilde{\Sigma})^2 \tilde{U}^{\top} \bar{S}^{\top}\|} \preceq \sqrt{\|\bar{S}\tilde{U} (I_{r} - \tilde{\Sigma}^2)^2 \tilde{U}^{\top} \bar{S}^{\top}\|} = \|\bar{S}\tilde{U} (I_{r} - \tilde{\Sigma}^2) \tilde{U}^{\top}\|.
\end{align*}
Here the last equality uses $\|AA^{\top}\| = \|A\|^{2}.$ Hence,

\begin{align*}
\|\bar{S}(H - \bar{U}^{\top}U)\|  
&\leq  \|\bar{S}\tilde{U} (I_{r} - \tilde{\Sigma}^2) \tilde{U}^{\top}\|\\
&\overset{(i)}{=} \|\bar{S} \bar{U}^{\top} (I_{n} - UU^{\top})\bar{U} \| \\
&= \|\bar{V}^{\top}\bar{M}^{\top} (I_{n} - UU^{\top}) \bar{U}\|\\
&\leq \|\bar{M}^{\top} (I_{n} - UU^{\top})\| \\
&= \|(M-E)^{\top}(I_{n} - UU^{\top}) \|\\
&\leq \|E\| + \sigma_{r+1}(M)\\
&\leq 2\|E\|.
\end{align*}

Here (i) uses $I_{r} = \bar{U}^{\top}\bar{U} = \tilde{U} \tilde{U}^\top$ (since $\tilde{U}$ is an $r \times r$ rotation matrix) and $(\bar{U}^{\top} U)(U^{\top} \bar{U}) = \tilde{U}\tilde{\Sigma}^2\tilde{U}.$ This completes the proof.
\end{proof}

\begin{theorem}[Matrix concentration] Suppose that \(E_{ij} \overset{\text{iid}}{\sim} \mathcal{N}(0, \sigma^2)\), and let \(E \in \mathbb{R}^{n \times t}\). Then, there exists some universal constant \(c_3\) such that, with probability \(1 - \frac{1}{(n \lor t)^{8}}\) , we have \(\left\|E\right\| \leq 4 \sigma \sqrt{ n \lor t} + c_{3} \sigma \log(n \lor t)\)
\label{thm:matrix-concentration}
\end{theorem}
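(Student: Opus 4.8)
The plan is to reduce the statement to the classical non-asymptotic bound on the operator norm of a standard Gaussian matrix, and then convert the resulting sub-Gaussian tail into the form asked for. By homogeneity it suffices to treat $\sigma = 1$: writing $G \triangleq E/\sigma$, the matrix $G$ has i.i.d.\ $\mathcal{N}(0,1)$ entries and $\|E\| = \sigma \|G\|$, so any high-probability bound on $\|G\|$ transfers after multiplication by $\sigma$. Throughout, set $m \triangleq n \lor t$.

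First I would apply Gaussian concentration of measure. Viewing $G$ as a standard Gaussian vector in $\mathbb{R}^{nt}$ and using that $A \mapsto \|A\|$ is $1$-Lipschitz with respect to the Frobenius norm (indeed $\bigl|\,\|A\| - \|B\|\,\bigr| \le \|A - B\| \le \|A - B\|_F$), the Borell--TIS inequality gives, for every $u > 0$,
\[
\mathbb{P}\bigl(\|G\| \ge \mathbb{E}\|G\| + u\bigr) \le e^{-u^2/2}.
\]
Next I would bound the mean. By Gordon's Gaussian comparison inequality applied to the bilinear process $(u,v) \mapsto u^\top G v$ on $S^{n-1} \times S^{t-1}$, one has $\mathbb{E}\|G\| \le \sqrt{n} + \sqrt{t} \le 2\sqrt{m}$. (If one prefers to avoid Gordon's inequality, a $1/4$-net argument on the two spheres, with a union bound over a net of cardinality at most $9^{n+t}$, yields the same conclusion with a slightly larger absolute constant, which is still comfortably absorbed by the factor $4$ in the statement.)

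Combining the two displays and taking $u = 4\sqrt{\log m}$, so that $e^{-u^2/2} = m^{-8}$, we obtain that with probability at least $1 - m^{-8}$,
\[
\|G\| \le 2\sqrt{m} + 4\sqrt{\log m} \le 4\sqrt{m} + c_3 \log m,
\]
where the last step uses $2\sqrt{m} \le 4\sqrt{m}$ together with $4\sqrt{\log m} \le c_3 \log m$; the latter holds for a universal $c_3$ since $\sqrt{\log m}/\log m$ is bounded for integers $m \ge 2$, while $m = 1$ makes the probability bound vacuous. Multiplying through by $\sigma$ gives $\|E\| \le 4\sigma \sqrt{n \lor t} + c_3 \sigma \log(n \lor t)$ on the same event, as claimed.

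Since every ingredient is classical, there is no genuine obstacle here; the only points that require a little care are (i) obtaining the mean bound with a small enough constant --- using the sharp Gordon estimate $\sqrt{n} + \sqrt{t}$ rather than an $\varepsilon$-net bound keeps the leading coefficient safely below $4$ --- and (ii) observing that the Gaussian deviation term, which is naturally of order $\sqrt{\log m}$, can be crudely dominated by the $\log m$ term appearing in the statement.
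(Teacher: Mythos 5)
Your argument is correct, and it is in fact more than the paper provides: the paper does not prove this statement at all, but simply cites it as a standard fact (Theorem 3.1.4 of \cite{chen2020spectral}, the usual non-asymptotic bound on the spectral norm of a Gaussian matrix). Your self-contained derivation --- reduce to $\sigma = 1$, bound the mean by $\mathbb{E}\|G\| \le \sqrt{n} + \sqrt{t} \le 2\sqrt{n \lor t}$ via Gordon's comparison inequality, and then apply Borell--TIS with the $1$-Lipschitz property of $A \mapsto \|A\|$ in Frobenius norm and the choice $u = 4\sqrt{\log(n\lor t)}$ to get the $1 - (n\lor t)^{-8}$ probability --- is exactly the standard route underlying the cited result, and the bookkeeping ($2\sqrt{m} \le 4\sqrt{m}$, $4\sqrt{\log m} \le c_3 \log m$ for $m \ge 2$, vacuity at $m=1$, rescaling by $\sigma$) is all fine. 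One small caveat: your parenthetical claim that an $\varepsilon$-net bound on the mean would also be ``comfortably absorbed by the factor $4$'' is optimistic --- a $1/4$-net argument typically yields a leading constant on $\sqrt{n\lor t}$ exceeding $4$, and the excess cannot be hidden in the $\log$ term --- but since your primary route via Gordon's inequality keeps the leading constant at $2$, this aside does not affect the validity of the proof.
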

For sufficiently large \(n \lor t\)  this can be further bounded as \(\left\|E\right\|\leq 5\sigma \sqrt{n \lor t}\). See e.g. \cite{chen2020spectral} Theorem  3.1.4 for reference.

\begin{lemma}
  \label{le:epl-full}
  Consider the a sequence $\{ x_t \}$ where $x_t \in \mathbb{R}^{d}$ and  $\|x_t\|_{2} \leq B \; \forall t$, and define $\Omega_t = \rho I + \sum_{t=1}^T x_t x_t^\top$. Then, it holds that
\begin{equation*}
  \sum_{t=1}^{T} \left\|x_t\right\|_{\Omega_{t-1} ^{-1}} \leq \sqrt{\frac{B^2}{\rho} d T \log \left(1 + \frac{T B^2}{d \rho}\right)}
\end{equation*}
\end{lemma}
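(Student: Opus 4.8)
The plan is the classical two-step argument for elliptical-potential bounds: first pass from the sum of norms to a sum of \emph{squared} norms by Cauchy--Schwarz, then control the sum of squared norms via a telescoping determinant identity together with an AM--GM bound on $\det\Omega_T$.

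\emph{Step 1 (Cauchy--Schwarz).} Since the sum has $T$ terms, $\sum_{t=1}^T \|x_t\|_{\Omega_{t-1}^{-1}} \le \sqrt{T}\,\bigl(\sum_{t=1}^T \|x_t\|_{\Omega_{t-1}^{-1}}^2\bigr)^{1/2}$, so it suffices to prove $\sum_{t=1}^T \|x_t\|_{\Omega_{t-1}^{-1}}^2 \le \tfrac{B^2}{\rho}\, d\log\bigl(1 + \tfrac{TB^2}{d\rho}\bigr)$.

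\emph{Step 2 (determinant telescoping).} Because $\Omega_t = \Omega_{t-1} + x_t x_t^\top$ is a rank-one update, the matrix determinant lemma gives $\det\Omega_t = \det\Omega_{t-1}\bigl(1 + \|x_t\|_{\Omega_{t-1}^{-1}}^2\bigr)$; with $\Omega_0 = \rho I$ this telescopes to $\sum_{t=1}^T \log\bigl(1 + \|x_t\|_{\Omega_{t-1}^{-1}}^2\bigr) = \log\det(\Omega_T/\rho)$. To convert the logarithms back to the squared norms, note that $\Omega_{t-1} \succeq \rho I$ (each $x_s x_s^\top$ is PSD) forces $\|x_t\|_{\Omega_{t-1}^{-1}}^2 \le \|x_t\|_2^2/\rho \le B^2/\rho =: c$, and on $[0,c]$ concavity of $\log(1+\cdot)$ gives the elementary inequality $u \le \tfrac{c}{\log(1+c)}\log(1+u)$, which in the paper's normalization ($\rho = 1$, $c$ at least a constant) is at most $\tfrac{B^2}{\rho}\log(1+u)$. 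Summing over $t$ yields $\sum_{t=1}^T \|x_t\|_{\Omega_{t-1}^{-1}}^2 \le \tfrac{B^2}{\rho}\log\det(\Omega_T/\rho)$.

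\emph{Step 3 (AM--GM).} Bounding the geometric mean of the eigenvalues of $\Omega_T/\rho$ by their arithmetic mean, $\det(\Omega_T/\rho) \le \bigl(\tfrac1d \operatorname{tr}(\Omega_T/\rho)\bigr)^d = \bigl(1 + \tfrac{1}{d\rho}\sum_{t=1}^T \|x_t\|_2^2\bigr)^d \le \bigl(1 + \tfrac{TB^2}{d\rho}\bigr)^d$, hence $\log\det(\Omega_T/\rho) \le d\log\bigl(1 + \tfrac{TB^2}{d\rho}\bigr)$. Chaining Steps 2 and 3 establishes the bound claimed in Step 1, and substituting into Step 1 completes the proof. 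There is no deep obstacle here --- this is the standard elliptical potential lemma, cf.\ \cite{abbasi-yadkoriImprovedAlgorithmsLinear2011}. The only place demanding a little care is the inversion of the determinant identity in Step 2: one must retain the $c = B^2/\rho$ normalization in the inequality $u \le \tfrac{c}{\log(1+c)}\log(1+u)$ so the final bound carries a $B^2/\rho$ factor rather than a bare absolute constant, and one must check $\|x_t\|_{\Omega_{t-1}^{-1}}^2 \le B^2/\rho$, which uses only $\Omega_{t-1} \succeq \rho I$.
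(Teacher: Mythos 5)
Your proof is correct and is precisely the standard elliptical-potential argument (Cauchy--Schwarz, then determinant telescoping via the matrix determinant lemma, then AM--GM on $\det\Omega_T$); the paper gives no proof of its own here, deferring to \cite{lattimoreBanditAlgorithms2020}, which uses exactly these steps. The one delicate point --- that replacing the factor $\tfrac{c}{\log(1+c)}$ by a bare $B^2/\rho$ requires $B^2/\rho$ to be bounded below by a constant --- is one you flag explicitly, and it is equally implicit in the lemma as the paper states it.
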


See e.g. \cite{lattimoreBanditAlgorithms2020} for proof.

\end{document}